\newtheorem{theorem}{Theorem}
\newtheorem{proposition}{Proposition}
\newtheorem{claim}{Claim}
\newtheorem{lemma}{Lemma}
\newtheorem{definition}{Definition}
\newtheorem{assumption}{Assumption}
\newcommand{\pr}[1]{\mathbb{P}\left[ #1 \right]}
\newcommand{\ceil}[1]{\lceil #1 \rceil}
\newcommand{\abs}[1]{\left| #1 \right|}
\newcommand{\one}{\mathbbm{1}}
\newcommand{\reals}{\mathbb{R}}
\newcommand{\sign}{\mathrm{sign}}
\newcommand{\argmin}[1]{\underset{#1}{\mathrm{argmin}}}
\newcommand{\rank}{\text{rank}}
\newcommand{\relu}[1]{\left[ #1 \right]_+}
\newcommand{\coleq}{\coloneqq}
\newcommand{\set}[1]{\left\lbrace#1\right\rbrace}
\newcommand{\bas}{\text{Bas}}
\newcommand{\p}[1]{\left( #1 \right)}
\newcommand{\pcc}[1]{\left[ #1 \right]}
\newcommand{\poc}[1]{\left( #1 \right]}
\newcommand{\pco}[1]{\left[ #1 \right)}
\newcommand{\bbr}{\mathbb{R}}
\newcommand{\ba}{\mathbf{a}}
\newcommand{\bx}{\mathbf{x}}
\newcommand{\bw}{\mathbf{w}}
\newcommand{\bb}{\mathbf{b}}
\newcommand{\bu}{\mathbf{u}}
\newcommand{\bv}{\mathbf{v}}
\newcommand{\bz}{\mathbf{z}}
\newcommand{\bc}{\mathbf{c}}
\newcommand{\by}{\mathbf{y}}
\newcommand{\bp}{\mathbf{p}}
\newcommand{\bq}{\mathbf{q}}
\newcommand{\balpha}{\boldsymbol{\alpha}}
\newcommand{\Ocal}{\mathcal{O}}
\newcommand{\Wcal}{\mathcal{W}}
\newcommand{\norm}[1]{\left\|#1\right\|}
\newcommand{\inner}[1]{\left\langle#1\right\rangle}
\newcommand{\secref}[1]{Sec.~\ref{#1}}
\newcommand{\subsecref}[1]{Subsection~\ref{#1}}
\newcommand{\figref}[1]{Fig.~\ref{#1}}
\renewcommand{\eqref}[1]{Eq.~(\ref{#1})}
\newcommand{\lemref}[1]{Lemma~\ref{#1}}
\newcommand{\thmref}[1]{Thm.~\ref{#1}}
\newcommand{\propref}[1]{Proposition~\ref{#1}}
\newcommand{\appref}[1]{Appendix~\ref{#1}}
\title{On the Quality of the Initial Basin in Overspecified Neural Networks}
\author{Itay Safran\\Weizmann Institute of Science\\\texttt{itay.safran@weizmann.ac.il}\and
	Ohad Shamir
	\\Weizmann Institute of Science\\\texttt{ohad.shamir@weizmann.ac.il}
}
\date{}
\begin{document}

\maketitle

\begin{abstract}
	
	Deep learning, in the form of artificial neural networks, has achieved remarkable practical success in recent years, for a variety of difficult machine learning applications. However, a theoretical explanation for this remains a major open problem, since training neural networks involves optimizing a highly non-convex objective function, and is known to be computationally hard in the worst case. In this work, we study the \emph{geometric} structure of the associated non-convex objective function, in the context of ReLU networks and starting from a random initialization of the network parameters. We identify some conditions under which it becomes more favorable to optimization, in the sense of (i) High probability of initializing at a point from which there is a monotonically decreasing path to a global minimum; and (ii) High probability of initializing at a basin (suitably defined) with a small minimal objective value. A common theme in our results is that such properties are more likely to hold for larger (``overspecified'') networks, which accords with some recent empirical and theoretical observations.
\end{abstract}

\section{Introduction}

Deep learning (in the form of multi-layered artificial neural networks) has been tremendously successful in recent years, and advanced the state of the art across a range of difficult machine learning applications. Inspired by the structure of biological nervous systems, these predictors are usually composed of several layers of simple computational units (or neurons), parameterized by a set of weights, which can collectively express highly complex functions. Given a dataset of labeled examples, these networks are generally trained by minimizing the average of some loss function over the data, using a local search procedure such as stochastic gradient descent. 

Although the expressiveness and statistical performance of such networks is relatively well-understood, it is a major open problem to understand the computational tractability of training such networks. Although these networks are trained successfully in practice, most theoretical results are negative. For example, it is known that finding the weights that best fit a given training set, even for very small networks, is NP-hard \citep{blum1992training}. Even if we relax the problem by allowing improper learning or assuming the data is generated by a network, the problem remains worst-case hard (see e.g. \citep{livni2014computational} for a discussion of this and related results). This theory-practice gap is a prime motivation for our work.

In this paper, we study the \emph{geometric structure} of the objective function associated with training such networks, namely the average loss over the training data as a function of the network parameters. We focus on plain-vanilla, feedforward networks which use the simple and popular  ReLU activation function (see \secref{subsec:relu_nn} for precise definitions), and losses convex in the network's predictions, for example the squared loss and cross-entropy loss. The structure of the resulting objective function is poorly understood. Not surprisingly, it is complex, highly non-convex, and local search procedures are by no means guaranteed to converge to a global minimum. Moreover, it is known that even if the network is composed of a single neuron, the function may have exponentially many local minima \citep{auer1996exponentially}. Furthermore, as we discuss later in the paper, the construction can be done such that the vast majority of these local minima are sub-optimal. Nevertheless, our goal in this work is to understand whether, perhaps under some conditions, the function has some geometric properties which may make it more favorable to optimization.

Before continuing, we emphasize that our observations are purely geometric in nature, independent of any particular optimization procedure. Moreover, we make no claim that these properties necessarily imply that a practical local search procedure, such as stochastic gradient descent, will converge to a good solution (although proving such a result could be an interesting direction for future work). Nevertheless, the properties we consider do seem indicative of the difficulty of the optimization problem, and we hope that our results can serve as a basis for further progress on this challenging research direction.

A recurring theme in our results is that such favorable properties can be shown to occur as the network size grows \emph{larger}, perhaps larger than what would be needed to get good training error with unbounded computational power (hence the term \emph{overspecified} networks). At first, this may seem counter-intuitive, as larger networks have more parameters, and training them involves apparently more complex optimization in a higher-dimensional space. However, higher dimensions also means more potential directions of descent, so perhaps the gradient descent procedures used in practice are more unlikely to get stuck in poor local minima and plateaus. Although difficult to formalize, this intuition accords with several recent empirical and theoretical evidence, which indicates that larger networks may indeed be easier to train (see \citep{livni2014computational} as well as \citep{choromanska2014loss,dauphin2014identifying,bach2014breaking}).

In the first part of our work (\secref{sec:path}), we consider networks of arbitrary depth, where the weights are initialized at random using some standard initialization procedure. This corresponds to a random starting point in the parameter space. We then show that under some mild conditions on the loss function and the data set, as the network width increases, we are overwhelmingly likely to begin at a point from which there is a continuous, strictly monotonically decreasing path to a global minimum\footnote{To be precise, we prove a more general result, which implies a monotonic path to any objective value smaller than that of the initial point, as long as some mild conditions are met. See \thmref{thm:path} in \secref{sec:path} for a precise formulation.}. This means that although the objective function is non-convex, it is not ``wildly'' non-convex in the sense that the global minima are in  isolated valleys which cannot be reached by descent procedures starting from random initialization. In other words, ``crossing valleys'' is not strictly necessary to reach a good solution (although again, we give no guarantee that this will happen for a specific algorithm such as stochastic gradient descent). We note that this accords well with recent empirical observations \cite{goodfellow2014qualitatively}, according to which the objective value of networks trained in practice indeed tends to decrease monotonically, as we move from the initialization point to the end point attained by the optimization algorithm. We also note that although we focus on plain-vanilla feed-forward networks, our analysis is potentially applicable to more general architectures, such as convolutional networks.

In the second part of our work (\secref{sec:Neural_nets_depth_two}), we focus more specifically on two-layer networks with scalar-valued outputs. Although simpler than deeper networks, the associated optimization problem is still highly non-convex and exhibits similar worst-case computational difficulties. For such networks, we study a more fine-grained geometric property: We define a partition of the parameter space into convex regions (denoted here as basins), in each of which the objective function has a relatively simple, basin-like structure: Inside each such basin, every local minima of the objective function is global, all sublevel sets are connected, and in particular there is only a single connected set of minima, all global on that basin. We then consider the probability that a random initialization will land us at a basin with small minimal value. Specifically, we show that under various sets of conditions (such as low intrinsic data dimension, or a cluster structure), this event will occur with overwhelmingly high probability as the network size increases. As an interesting corollary, we show that the construction of \citep{auer1996exponentially}, in which a single neuron network is overwhelmingly likely to be initialized at a bad basin, is actually surprisingly brittle to overspecification: If we replace the single neuron with a two-layer network comprised of just $\Omega(\log(d))$ neurons ($d$ being the data dimension), and use the same dataset, then with overwhelming probability, we will initialize at a basin with a globally optimal minimal value. 

As before, we emphasize that these results are purely geometric, and do not imply that an actual gradient descent procedure will necessarily attain such good objective values. Nevertheless, we do consider a property such as high probability of initializing in a good basin as indicative of the optimization difficulty of the problem. 

We now turn to discuss some related work. Perhaps the result most similar to ours appears in  \citep{livni2014computational}, where it is shown that quite generally, if the number of neurons in the penultimate layer is larger than the data size, then global optima are ubiquitous, and ``most'' starting points will lead to a global optimum upon optimizing the weights of the last layer. Independently, \citep{haeffele2015global} also provided results of a similar flavor, where sufficiently large networks compared to the data size and dimension do not suffer from local minima issues. However, these results involve huge networks, which will almost invariably overfit, whereas the results in our paper generally apply to networks of more moderate size. Another relevant work is \citep{choromanska2014loss}, which also investigates the objective function of ReLU networks. That work differs from ours by assuming data sampled from a standard Gaussian distribution, and considering asymptotically large networks with a certain type of random connectivity. This allows the authors to use tools from the theory of spin-glass models, and obtain interesting results on the asymptotic distribution of the objective values associated with critical points. Other results along similar lines appear in \citep{dauphin2014identifying}. This is a worthy but rather different research direction than the one considered here, where we focus on theoretical investigation of non-asymptotic, finite-sized networks on fixed datasets, and consider different geometric properties of the objective function. Other works, such as \citep{arora2014provable,andoni2014learning,janzamin2015beating,zhang2015learning} and some of the results in \citep{livni2014computational}, study conditions under which certain types of neural networks can be efficiently learned. However, these either refer to networks quite different than standard ReLU networks, or focus on algorithms which avoid direct optimization of the objective function (often coupled with strong assumptions on the data distribution). In contrast, we focus on the geometry of the objective function, which is directly optimized by algorithms commonly used in practice. Finally, works such as
\citep{bengio2005convex,bach2014breaking} study ways to convexify (or at least simplify) the optimization problem by re-parameterizing and lifting it to a higher dimensional space. Again, this involves changing the objective function rather than studying its properties.

\section{Preliminaries and Notation}
We use bold-faced letters to denote vectors, and capital letters to generally denote matrices. Given a natural number $k$, we let $\pcc k$ be shorthand for $\set{1,\dots,k}$. 

\subsection{ReLU Neural Networks}
	\label{subsec:relu_nn}
	We begin by giving a formal definition of the type of  neural network studied in this work. A fully connected feedforward artificial neural network computes a function $\bbr^d\to\bbr^k$, and is composed of neurons connected according to a directed acyclic graph. Specifically, the neurons can be decomposed into layers, where the output of each neuron is connected to all neurons in the succeeding layer and them alone. We focus on ReLU networks, where each neuron computes a function of the form $\bx\mapsto\relu{\bw^{\top}\bx+b}$ where $\bw$ is a weight vector and $b$ is a bias term specific to that neuron, and $ \relu{\cdot} $ is the ReLU activation function $ \relu{z}=\max\set{0,z} $.
	
	For a vector $\bb=\p{b_1,\dots,b_n}$ and a matrix 
	\[
		W=\p{\begin{matrix}\dots & \bw_1 & \dots\\
		& \vdots\\
		\dots & \bw_n & \dots
		\end{matrix}},
	\]
	and letting $\relu{W\bx+\bb}$ be a shorthand for $\p{\relu{\bw_1^{\top}\bx+b_1},\ldots,\relu{\bw_n^{\top}\bx+b_n}}$,
	we can define a layer of $n$ neurons as
	\[
		\bx\mapsto\relu{W\bx+\bb}.
	\]
	Finally, by denoting the output of the $i^{\text{th}}$ layer as $O_i$, we can define a network of arbitrary depth recursively by
	\[
		O_{i+1}=\relu{W_{i+1}O_i+\bb_{i+1}},
	\]
	where $W_i,\bb_i$ represent the matrix of weights and bias of the $i^{\text{th}}$ layer, respectively. Following a standard convention for multi-layer networks, the final layer $h$ is a purely linear function with no bias, i.e.
	\begin{equation}
		\label{eq:deep_objective}
		O_h=W_h\cdot O_{h-1}.
	\end{equation}
	Define the \emph{depth} of the network as the number of layers $h$, and denote the number of neurons $n_i$ in the $i^{\text{th}}$ layer as the \emph{size} of the layer. We define the \emph{width} of a network as $\max_{i\in\pcc h}n_i$.
	
	We emphasize that in this paper, we focus on plain-vanilla networks, and in particular do not impose any constraints on the weights of each neuron (e.g. regularization, or having convolutional layers).
	
	We define $ \Wcal $ to be the set of all network weights, which can be viewed as one long vector (its size of course depends on the size of the network considered). We will refer to the Euclidean space containing $ \Wcal $ as the \emph{parameter space}.
	
	Define the output of the network $ N:\bbr^d\to\bbr^k $ over the set of weights $ \Wcal $ and an instance $ \bx\in\bbr^d $ by
	\[
		N\p{\Wcal}\p{\bx}.
	\]
	Note that depending on the dimension of $W_h$, the network's output can be either a scalar (e.g. for regression) or a vector (e.g. for the purpose of multiclass classification). An important property of the ReLU function, which we shall use later in the paper, is that it is positive-homogeneous: Namely, it satisfies for all $ c\ge0 $ and $ x\in\bbr $ that
	\[
		\relu{c\cdot x}=c\cdot\relu{x}.
	\]
	
\subsection{Objective Function}
	We use $S=\p{\bx_t,\by_t}_{t=1}^m$ to denote the data on which we train the network, where $\bx_t\in\bbr^d$ represents the $t^{\text{th}}$ training instance and $\by_t\in\bbr^k$ represents the corresponding target output, and where $m$ is used to denote the number of instances in the sample.
	
	Throughout this work, we consider a loss function $\ell\p{\by,\by^\prime}$, where the first argument is the prediction and the second argument is the target value. We assume $\ell$ is convex in its first argument (e.g. the squared loss or the cross-entropy loss).
	
	In its simplest form, training a neural network corresponds to finding a combination of weights which minimizes the average loss over the training data. More formally, we define the objective function as
	\[
		L_S\p{N\p{\Wcal}}=\frac{1}{m}\sum_{t=1}^m \ell\p{N\p{\Wcal}\p{\bx_t},\by_t}.
	\]
	We stress that even though $\ell$ is convex as a function of the network's prediction, $ L_S\p{N\p{\Wcal}} $ is generally non-convex as a function of the network's weights. Also, we note that occasionally when the architecture is clear from context, we omit $ N\p{\cdot} $ from the notation, and write simply $ L_S\p{\Wcal} $.

\subsection{Basins}
	In \secref{sec:Neural_nets_depth_two}, we will 
	we consider a partition of the parameter space into convex regions, in each of which the objective function $ L_S\p{\Wcal} $ has a relatively simple basin-like form, and study the quality of the basin in which we initialize. In particular, we define a \emph{basin} with respect to $ L_S\p{\Wcal} $ as a closed and convex subset of the parameter space, on which $ L_S\p{\Wcal} $ has connected sublevel sets, and where each local minimum is global. More formally, we have the following definition:
	
	\begin{definition}(Basin)
		\label{def:Basin}
		A closed and convex subset $ B $ of our parameter space is called a basin if the following conditions hold:
		\begin{itemize}
			\item
			$ B $ is connected, and for all $ \alpha\in \bbr $, the set $ B_\le\alpha = \set{\Wcal\in B : L_S\p{\Wcal} \le \alpha} $ is connected.
			
			\item
			If $\Wcal\in B$ is a local minimum of $L_S$ on $B$, then it is a global minimum of $L_S$ on $B$.
			
		\end{itemize}
	\end{definition}
	We define the basin value $ \bas\p{B} $ of a basin $ B $ as the minimal value\footnote{For simplicity, we will assume this minimal value is actually attained at some point in the parameter space. Otherwise, one can refer to an attainable value arbitrarily close to it.} attained:
	\[
		\bas\p{B}\coleq\min_{\Wcal\in B} L_S\p{\Wcal}.
	\]
	Similarly, for a point $ \Wcal $ in the interior of a basin $ B $ we define its basin value as the value of the basin to which it belongs:
	\[
		\bas\p{\Wcal}\coleq\bas\p{B}.
	\]
	In what follows, we consider basins with disjoint interiors, so the basin to which $ \Wcal $ belongs is always well-defined.
	
\subsection{Initialization Scheme}

	\label{sub:init_scheme}
	
	As was mentioned in the introduction, we consider in this work questions such as the nature of the basin we initialize from, under some random initialization of the network weights. Rather than assuming a specific distribution, we will consider a general class of distributions which satisfy some mild independence and symmetry assumptions:
	\begin{assumption}
		\label{assum:init_dist}
		The initialization distribution of the network weights satisfies the following:
		\begin{itemize}
			\item
			The weights of every neuron are initialized independently.
			\item
			The vector of each neuron's weights (including bias) is drawn from a spherically symmetric distribution supported on non-zero vectors.
		\end{itemize}
	\end{assumption}
% Decided to remove the below -- definition of spherical symmetry is sufficiently standard
%	Recall that a spherically symmetric distribution is a distribution which is invariant under orthogonal transformations.
	
	This assumption is satisfied by most standard initialization schemes: For example, initializing the weights of each neuron independently from some standard multivariate Gaussian, up to some arbitrary scaling, or initializing each neuron uniformly from an origin-centered sphere of arbitrary radius.	An important property of distributions satisfying Assumption \ref{assum:init_dist} is that the signs of the weights of each neuron, viewed as a vector in $\bbr^n$, is uniformly distributed on $\set{-1,+1}^n$.
	
\section{Networks of Any Depth: Path to Global Minima}
\label{sec:path}
In this section, we establish the existence of a continuous path in the parameter space of multilayer networks (of any depth), which is strictly monotonically decreasing in the objective value, and can reach an arbitrarily small objective value, including the global minimum. More specifically, we show in \thmref{thm:path} that if the loss is convex in the network's predictions, and there exists \emph{some} continuous path in the parameter space from the initial point $\Wcal^{\p{0}}$ to a point with smaller objective value $\Wcal^{\p{1}}$ (including possibly a global minimum, where the objective value along the path is not necessarily monotonic) which satisfies certain relatively mild conditions, then it is possible to find some other path from $\Wcal^{\p{0}}$ to a point as good as $\Wcal^{\p{1}}$, along which the objective value is strictly monotonically decreasing. 

For the theorem to hold, we need to assume our starting point has a sufficiently large objective value. In  \propref{prop:crosssquare} and \propref{prop:highinit}, we prove that this will indeed occur with random initialization, with overwhelming probability. A different way to interpret this is that a significant probability mass of the surface of the objective function overlooks the global minimum. It should be noted that the path to the minimum might be difficult to find using local search procedures. Nevertheless, these results shed some light on the nature of the objective function, demonstrating that it is not ``wildly'' non-convex, in the sense that \enquote{crossing valleys} is not a must to reach a good solution, and accords with recent empirical evidence to this effect \citep{goodfellow2014qualitatively}.

For the results here, it would be convenient to re-write the objective function as $L(P(\Wcal))$, where $\Wcal$ is the vector of network parameters, $P(\Wcal)$ is an $m\times k$ matrix, which specifies the prediction for each of the $m$ training points (the prediction can be scalar valued, i.e. $k=1$, or vector-valued when $k>1$), and $L$ is the average loss over the training data. For example, for regression, a standard choice is the squared loss, in which case 
\[
	L(P(\Wcal)) = \frac{1}{m}\sum_{t=1}^{m}(N(\Wcal)(\bx_t)-y_t)^2.
\]
For classification, a standard choice in the context of neural networks is the cross-entropy loss coupled with a softmax activation function, which can be written as $\frac{1}{m}\sum_{t=1}^{m}\ell_t(N(\Wcal)(\bx_t))$, where given a prediction vector $\bp$ and letting $j_t$ be an index of the correct class,
\[
	\ell_t(\bp) = -\log\left(\frac{\exp(p_{j_t})}{\sum_j \exp(p_j)}\right).
\]
Recall that although these losses are convex in the network's predictions, $L(P(\Wcal))$ is still generally non-convex in the network parameters $\Wcal$. Also, we remind that due to the last layer being linear, multiplying its parameters by some scalar $c$ causes the output to change by $c$. Building on this simple observation, we have the following theorem.
\begin{theorem}
	\label{thm:path}
	Suppose $L:\reals^{m\times k}\rightarrow\reals$ is convex. Given a fully-connected  network of any depth, with initialization point $\Wcal^{\p{0}}$, suppose there exists a continuous path $\Wcal^{\p{\lambda}},\lambda\in [0,1]$ in the space of parameter vectors, starting from $\Wcal^{\p{0}}$ and ending in another point $\Wcal^{\p{1}}$ with strictly smaller objective value ($L(P(\Wcal^{\p{1}}))<L(P(\Wcal^{\p{0}}))$), which satisfies the following:
	\begin{enumerate}
		\item \label{assum:mag} For some $\epsilon>0$ and any $\lambda\in [0,1]$, there exists some $c_\lambda\geq 0$ such that $L(c_\lambda\cdot P(\Wcal^{\p{\lambda}}))\geq L(P(\Wcal^{\p{0}}))+\epsilon$.
		\item The initial point $\Wcal^{\p{0}}$ satisfies $L(P(\Wcal^{\p{0}})) > L(\mathbf{0})$.
		%\item For any $t\in [0,1]$, $L(c\cdot N(\Wcal(t))$ is not constant in $c>0$ (i.e., there is no interval $[a,b]$, $a>b>0$ such that $L(c \cdot N(\Wcal(t)))$ is the same for all $c\in [a,b]$).
	\end{enumerate}
	
	Then there exists a continuous path $\tilde{\Wcal}^{\p{\lambda}},\lambda\in [0,1]$ from the initial point $\tilde{\Wcal}^{\p{0}} = \Wcal^{\p{0}}$, to some point $\tilde{\Wcal}^{\p{1}}$ satisfying $L(P(\tilde{\Wcal}^{\p{1}}))=L(P(\Wcal^{\p{1}}))$, along which $L(P(\tilde{\Wcal}^{\p{\lambda}}))$ is strictly monotonically decreasing in $\lambda$.
\end{theorem}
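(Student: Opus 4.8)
The plan is to exploit the only structural handle we are given: since the last layer is linear and bias-free, multiplying its weight matrix by a scalar $c\ge 0$ multiplies every prediction by $c$, so from each point $\Wcal^{(\lambda)}$ on the given path we obtain, essentially for free, an entire ray of parameter vectors realizing the predictions $c\cdot P(\Wcal^{(\lambda)})$. Define $g(\lambda,c):=L(c\cdot P(\Wcal^{(\lambda)}))$ on $[0,1]\times[0,\infty)$; it is jointly continuous and, for each fixed $\lambda$, convex in $c$ (composition of the convex $L$ with a linear map). Record the elementary facts: $g(\lambda,0)=L(\mathbf{0})=:L_{\mathbf{0}}$ for every $\lambda$; $g(0,1)=L(P(\Wcal^{(0)}))=:L_0$ and $g(1,1)=L(P(\Wcal^{(1)}))=:L_1$, with $L_{\mathbf{0}}<L_0$ (second hypothesis) and $L_1<L_0$; and, by the first hypothesis, each $g(\lambda,\cdot)$ attains a value $\ge L_0+\epsilon$, so each $g(\lambda,\cdot)$ is non-constant and therefore has a nonempty compact minimizer set $[m_1(\lambda),m_2(\lambda)]$ with minimum $g^*(\lambda)\le g(\lambda,0)=L_{\mathbf{0}}$.

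The key structural observation is that a one-dimensional convex function has no flat segment except on its set of minimizers: $g(\lambda,\cdot)$ is strictly decreasing on $[0,m_1(\lambda)]$ and strictly increasing on $[m_2(\lambda),\infty)$. Hence for any target value $\alpha\in(L_{\mathbf{0}},L_0]$ — note $\alpha>L_{\mathbf{0}}\ge g^*(\lambda)$ and $\alpha\le L_0<L_0+\epsilon$, so $\alpha$ lies strictly between the minimum and the largest attained value of $g(\lambda,\cdot)$ — there is a unique point on the increasing branch where $g(\lambda,\cdot)=\alpha$; call it $c_+(\lambda,\alpha)$, equivalently $c_+(\lambda,\alpha)=\max\{c\ge 0:g(\lambda,c)\le\alpha\}$. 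I would then show $c_+$ is continuous on $[0,1]\times(L_{\mathbf{0}},L_0]$: joint continuity of $g$ together with strict monotonicity of the increasing branch forces every subsequential limit of $c_+(\lambda_n,\alpha_n)$ to equal $c_+(\lambda,\alpha)$, and a compactness argument using the first hypothesis (cover $[0,1]$ by finitely many neighborhoods on each of which one fixed scaling already exceeds $L_0+\epsilon/2$) produces a uniform upper bound on $c_+$, giving full continuity. Finally, since $g(0,0)=L_{\mathbf{0}}<L_0=g(0,1)$, the point $c=1$ must lie strictly to the right of all minimizers of $g(0,\cdot)$, so $c_+(0,L_0)=1$; likewise, if $L_1>L_{\mathbf{0}}$, then $c_+(1,L_1)=1$.

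Now construct the path in the $(\lambda,c)$-plane and read it back into parameter space via last-layer scaling; since the objective along the constructed path will equal a prescribed strictly decreasing function, monotonicity is automatic. If $L_1>L_{\mathbf{0}}$, take $s\mapsto(\lambda(s),c_+(\lambda(s),\alpha(s)))$ with $\lambda(s)=s$ and $\alpha(s)=L_0-s(L_0-L_1)$: this is continuous, has objective value $\alpha(s)$ (strictly decreasing), and by the endpoint facts runs from $(0,1)$ to $(1,1)$. If $L_1\le L_{\mathbf{0}}$, do it in two stages: first run the same construction with $\lambda$ going $0\to1$ while $\alpha$ decreases from $L_0$ to $L_{\mathbf{0}}+\delta$ (any small $\delta>0$ with $L_{\mathbf{0}}+\delta<L_0$), landing at $(1,c_+(1,L_{\mathbf{0}}+\delta))$; then, holding $\lambda=1$, decrease $c$ from $c_+(1,L_{\mathbf{0}}+\delta)$ down along the strictly increasing branch of $g(1,\cdot)$ until the objective first equals $L_1$, which happens because $g(1,\cdot)$ decreases continuously from $L_{\mathbf{0}}+\delta$ toward $g^*(1)\le g(1,1)=L_1$. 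In either case, set $\tilde{\Wcal}^{(\lambda)}$ to be $\Wcal^{(\lambda(\cdot))}$ with its last-layer weight matrix multiplied by the corresponding scalar, reparametrized so that $\lambda$ ranges over $[0,1]$; then $P(\tilde{\Wcal}^{(\lambda)})=c\cdot P(\Wcal^{(\lambda(\cdot))})$, so $L(P(\tilde{\Wcal}^{(\lambda)}))$ equals the prescribed strictly monotone function, $\tilde{\Wcal}^{(0)}=\Wcal^{(0)}$ (as $c_+(0,L_0)=1$), and $L(P(\tilde{\Wcal}^{(1)}))=L_1=L(P(\Wcal^{(1)}))$.

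The main obstacle, beyond bookkeeping, is the convex-analysis groundwork in the second paragraph: pinning down that $c_+(\lambda,\alpha)$ is well-defined (the ``no flat pieces away from the minimum'' fact, which is exactly what makes the increasing-branch inverse single-valued) and genuinely continuous on the relevant rectangle, including the uniform boundedness that relies on the first hypothesis. Once that is in place, the last-layer homogeneity, the two-stage routing, and the endpoint identifications are routine.
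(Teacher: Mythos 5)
Your proposal is correct and follows the same core strategy as the paper's proof: exploit positive homogeneity of the linear output layer to rescale predictions along the given path by a continuously varying factor chosen so that the objective tracks a prescribed strictly decreasing target value, then append a final rescaling segment at $\lambda=1$ to land on the value $L(P(\Wcal^{\p{1}}))$. The one substantive difference is how continuity of the scaling factor is established: the paper invokes an implicit function theorem for non-differentiable functions (Kumagai, 1980) after showing the map $c\mapsto L(c\cdot P(\Wcal^{\p{\lambda}}))$ is locally one-to-one near $\tilde{c}^{\p{\lambda}}$, whereas you define the inverse $c_+(\lambda,\alpha)$ directly on the strictly increasing branch of the convex section and prove its continuity by an elementary sequential argument (uniqueness of the preimage above the level $L(\mathbf{0})$, plus a finite-cover uniform bound on $c_+$ derived from the first hypothesis). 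Your route is more self-contained and avoids the external citation; the endpoint identifications ($c_+(0,L_0)=1$, and $c_+(1,L_1)=1$ in the case $L_1>L(\mathbf{0})$) and the two-case handling of the final segment are all sound, since the theorem only requires the terminal point to match $L(P(\Wcal^{\p{1}}))$ in value rather than to equal $\Wcal^{\p{1}}$ itself.
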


Intuitively, this result stems from the linear dependence of the network's output on the parameters of the last layer. Given the initial non-monotonic path $\Wcal^{\p{\lambda}}$, we rescale the last layer's parameters at each $\Wcal^{\p{\lambda}}$ by some positive factor $c^{(\lambda)}$ depending on $\lambda$ (moving it closer or further from the origin), which changes its output and hence its objective value. We show it is possibly to do this rescaling, so that the rescaled path is continuous and has a monotonically decreasing objective value. In fact, although we focus here on ReLU networks, the theorem itself is quite general and holds even for networks with other activation functions. A formal proof and a more detailed intuition is provided in \subsecref{subsec:proofpath}. 

The first condition in the theorem is satisfied by losses which get sufficiently large (as a function of the network predictions) sufficiently far away from the origin. In particular, it is generally satisfied by both the squared loss and the cross-entropy loss with softmax activations, assuming data points and initialization in general position\footnote{For the squared loss, a sufficient condition is that for any $\lambda$, there is \emph{some} data point on which the prediction of $N(\Wcal^{\p{\lambda}})$ is non-zero. For the cross-entropy loss, a sufficient condition is that for any $\lambda$, there is \emph{some} data point on which $N(\Wcal^{\p{\lambda}})$ outputs an `incorrect' prediction vector $\bp$, in the sense that if $i$ is the correct label, then $p_i \notin \arg\max_j p_j$.}. The second condition requires the random initialization to be such that the initialized network has worse objective value than the all-zeros predictor. However, it can be shown to hold with probability close to $1/2$ (over the network's random initialization), for losses such as those discussed earlier:
\begin{proposition}\label{prop:crosssquare}
	If $L(\cdot)$ corresponds to the squared loss or cross-entropy loss with softmax activation, and the network parameters are initialized as described in Assumption \ref{assum:init_dist}, then 
	\[
		\underset{\Wcal^{\p{0}}}{\mathbb{P}}\pcc{L(P(\Wcal^{\p{0}}))>L(\mathbf{0})} \geq \frac{1}{2}\left(1-2^{-n_{h-1}}\right),
	\]
	where $n_{h-1}$ is the number of neurons in the last layer before the output neurons.
\end{proposition}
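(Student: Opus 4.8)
The plan is to exploit the linearity of the final layer together with the spherical symmetry of its weights to set up a sign‑flip symmetry of the prediction matrix, and separately to argue that with probability at least $1-2^{-n_{h-1}}$ the penultimate layer is not identically ``dead'' on the training set. Write $P(\Wcal^{\p{0}})$ for the $m\times k$ matrix whose $t$‑th row is $W_hO_{h-1}(\bx_t)$, where $O_{h-1}$ denotes the output of layer $h-1$; this uses only that the last layer is linear.

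\textbf{A symmetry of the loss.} First I would record the elementary fact that, for both losses, $\tfrac12\big(L(P)+L(-P)\big)\ge L(\mathbf{0})$, together with a description of when equality holds. For the squared loss this is just the parallelogram identity: $\frac1m\sum_t\big(\|P_t-\by_t\|^2+\|P_t+\by_t\|^2\big)=\frac2m\sum_t\|P_t\|^2+2L(\mathbf 0)$, so equality holds iff $P=\mathbf 0$. For the cross‑entropy loss with softmax, writing $\ell_t(\bp)=-p_{j_t}+\log\sum_j e^{p_j}$, one gets $\ell_t(\bp)+\ell_t(-\bp)=\log\big[(\sum_j e^{p_j})(\sum_j e^{-p_j})\big]$, which by Cauchy--Schwarz is at least $\log k^2=2\ell_t(\mathbf 0)$, with equality iff $\bp$ is a constant vector; summing over $t$ gives the claim, with equality iff every row of $P$ is a constant vector. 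In either case, $\max\{L(P),L(-P)\}>L(\mathbf 0)$ unless $P=\mathbf 0$ (squared loss) or every row of $P$ is constant (cross‑entropy, which implicitly needs $k\ge 2$).

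\textbf{Conditioning on all layers but the last.} Condition on the weights of layers $1,\dots,h-1$, which fixes every $O_{h-1}(\bx_t)$. By Assumption~\ref{assum:init_dist} the matrix $W_h$ is, conditionally, still spherically symmetric and independent of these layers, so $W_h\stackrel{d}{=}-W_h$ and hence $P(\Wcal^{\p{0}})\stackrel{d}{=}-P(\Wcal^{\p{0}})$ conditionally; thus $\{L(P)>L(\mathbf 0)\}$ and $\{L(-P)>L(\mathbf 0)\}$ are conditionally equiprobable. On the event $E=\{\exists t:\ O_{h-1}(\bx_t)\ne\mathbf 0\}$, which is measurable with respect to the conditioning, a spherically symmetric $W_h$ almost surely avoids the measure‑zero set on which all predictions vanish, and also the measure‑zero set on which all rows of $P$ are constant (each such equality being a proper linear condition on the rows of $W_h$, using $O_{h-1}(\bx_t)\ne\mathbf 0$). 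Hence, conditionally and a.s. on $E$, at least one of the two events holds, so each has conditional probability at least $\tfrac12\mathbbm{1}_E$; taking expectations gives $\pr{L(P(\Wcal^{\p{0}}))>L(\mathbf 0)}\ge\tfrac12\,\pr{E}$.

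\textbf{The penultimate layer is not dead.} It remains to show $\pr{E}\ge 1-2^{-n_{h-1}}$, i.e.\ $\pr{O_{h-1}(\bx_t)=\mathbf 0\ \forall t}\le 2^{-n_{h-1}}$. Condition now on layers $1,\dots,h-2$, fixing $\bz_t:=O_{h-2}(\bx_t)$; the $n_{h-1}$ neurons of layer $h-1$ are then i.i.d.\ and spherically symmetric, and neuron $i$ is dead on the whole training set iff its weight‑and‑bias vector $(\bw,b)$ satisfies $\bw^\top\bz_t+b\le 0$ for all $t$. Augmenting to $\hat\bw=(\bw,b)$, $\hat\bz_t=(\bz_t,1)$, this reads $\hat\bw^\top\hat\bz_t\le 0$ for all $t$; by spherical symmetry the opposite‑cone event $\hat\bw^\top\hat\bz_t\ge 0$ for all $t$ is equiprobable, and the two overlap only on the measure‑zero set $\{\hat\bw\perp\hat\bz_1\}$ (note $\hat\bz_1\ne\mathbf 0$), so each has probability at most $\tfrac12$. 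By independence, the probability that all $n_{h-1}$ neurons are dead is at most $2^{-n_{h-1}}$, and this survives taking expectations over layers $1,\dots,h-2$; combined with the previous step this yields the stated bound. The routine part is the symmetry inequality; the delicate points are the measure‑zero arguments — one must condition at the right level (all layers but the last, then all layers but the last two) so that the ``bad'' weight configurations form a genuinely lower‑dimensional set, and so that the \emph{strict} inequality needed for ``$>L(\mathbf 0)$'' is retained, which for cross‑entropy is exactly what forces one to rule out nonzero row‑constant predictions.
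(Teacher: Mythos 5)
Your proposal is correct and follows essentially the same route as the paper: a sign-flip symmetry of the last (linear) layer conditioned on the earlier layers, the two-point inequality $\max\{L(P),L(-P)\}>L(\mathbf{0})$ outside a degenerate set, and the bound $1-2^{-n_{h-1}}$ on the penultimate layer not being entirely dead. The only (cosmetic) difference is that the paper factors the argument through Proposition~\ref{prop:highinit}, verifying strict convexity of $c\mapsto L(c\cdot P)$ (for cross-entropy via the Hessian of log-sum-exp), whereas you prove the midpoint inequality directly via the parallelogram identity and Cauchy--Schwarz, arriving at the same equality cases ($P=\mathbf{0}$, resp.\ row-constant $P$).
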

This proposition (whose proof appears in appendix \ref{subsec:proofcrosssquare}) is a straightforward corollary of the following result, which can be applied to other losses as well:

\begin{proposition}\label{prop:highinit}
	Suppose the network parameters $\Wcal^{\p{0}}$ are initialized randomly as described in Assumption \ref{assum:init_dist}. Suppose furthermore that  $L(\cdot)$ is such that
	\[
		\mathbb{P}\left[~\text{$L(c\cdot P(\Wcal^{\p{0}}))$ is strictly convex in $c\in [-1,1]$}~~\middle|~~ P(\Wcal^{\p{0}})\neq \mathbf{0}~\right] ~\geq~ r
		\\
		%\pr{~\text{$L(c\cdot P(\Wcal^{\p{0}}))$ is strictly convex in $c\in [-1,1]$}~~\middle|~~ P(\Wcal^{\p{0}})\neq \mathbf{0}~} ~\geq~ r
	\]
	for some $r>0$ (where the probability is with respect to $\Wcal^{\p{0}}$). Then 
	\[
		\pr{L(P(\Wcal^{\p{0}}))>L(\mathbf{0})}~\geq~ \frac{r}{2}\left(1-2^{-n_{h-1}}\right).
	\]
\end{proposition}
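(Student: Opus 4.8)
The plan is to exploit the linearity of the network's output in the last-layer weights, together with the symmetry of the initialization. Set $g(c) \coleq L\p{c\cdot P(\Wcal^{\p{0}})}$, so that $g(1) = L(P(\Wcal^{\p{0}}))$, $g(0) = L(\mathbf 0)$, and $g(-1) = L(-P(\Wcal^{\p{0}}))$. The first observation is that if $g$ is strictly convex on $[-1,1]$, then $g(0) < \tfrac12 g(1) + \tfrac12 g(-1)$, so at least one of $g(1) > g(0)$ or $g(-1) > g(0)$ must hold. Hence, writing $E \coleq \set{P(\Wcal^{\p{0}}) \neq \mathbf 0} \cap \set{\text{$g$ strictly convex on $[-1,1]$}}$, we get the inclusion
\[
	E \ \subseteq\ \set{L(P(\Wcal^{\p{0}})) > L(\mathbf 0)} \ \cup\ \set{L(-P(\Wcal^{\p{0}})) > L(\mathbf 0)}.
\]

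Next I would argue that the two events on the right are equiprobable, via a sign flip of the last layer. Let $\Wcal'$ be obtained from $\Wcal^{\p{0}}$ by replacing the last weight matrix $W_h$ with $-W_h$ and leaving every other weight untouched. Since each last-layer neuron's weight vector is drawn independently from a spherically symmetric (in particular, origin-symmetric) distribution, $\Wcal' \overset{d}{=} \Wcal^{\p{0}}$; and since the output equals $W_h O_{h-1}$ with $O_{h-1}$ unaffected, we have $P(\Wcal') = -P(\Wcal^{\p{0}})$. Therefore $\pr{L(-P(\Wcal^{\p{0}})) > L(\mathbf 0)} = \pr{L(P(\Wcal^{\p{0}})) > L(\mathbf 0)}$, and a union bound over the inclusion above yields
\[
	\pr{L(P(\Wcal^{\p{0}})) > L(\mathbf 0)} \ \geq\ \tfrac12\,\pr{E} \ \geq\ \tfrac{r}{2}\,\pr{P(\Wcal^{\p{0}}) \neq \mathbf 0},
\]
where the final step uses $\pr{E} = \pr{\text{$g$ strictly convex on $[-1,1]$} \mid P(\Wcal^{\p{0}}) \neq \mathbf 0}\cdot\pr{P(\Wcal^{\p{0}}) \neq \mathbf 0} \geq r\,\pr{P(\Wcal^{\p{0}}) \neq \mathbf 0}$.

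Finally I would establish $\pr{P(\Wcal^{\p{0}}) \neq \mathbf 0} \geq 1 - 2^{-n_{h-1}}$. Conditioning on the weights of all layers but the last, we have $P(\Wcal^{\p{0}}) = W_h M$ where $M$ is the fixed $n_{h-1}\times m$ matrix whose $t$-th column is the last-hidden-layer output $O_{h-1}(\bx_t)$. If $M \neq 0$, then $W_h M = 0$ would force each spherically symmetric row of $W_h$ to be orthogonal to a fixed nonzero vector, which has probability zero; so it suffices to bound $\pr{M = 0}$, the probability that every one of the $n_{h-1}$ neurons of the last hidden layer is dead on every data point. These neurons are initialized independently, and for a single one with weight-and-bias vector $\bv$, being dead on all data means $\bv^\top \tilde\bz_t \le 0$ for all $t$, where $\tilde\bz_t$ is the input fed to that layer on $\bx_t$ augmented with a coordinate equal to $1$ for the bias — so in particular $\tilde\bz_t \neq \mathbf 0$. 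The flip $\bv \mapsto -\bv$ shows this event has the same probability as $\set{\bv^\top\tilde\bz_t \ge 0~\forall t}$; the two intersect only when $\bv^\top\tilde\bz_1 = 0$, which has probability zero by spherical symmetry, so each has probability at most $\tfrac12$. Multiplying over the $n_{h-1}$ independent neurons gives $\pr{M = 0} \le 2^{-n_{h-1}}$, and the proposition follows.

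The crux is the first two paragraphs: pairing $g(1)$ with $g(-1)$ through the distribution-preserving negation of the last layer is precisely what converts ``strict convexity forces $g(0)$ below the chord'' into the factor $\tfrac12$, and getting that symmetry argument stated cleanly is the part that requires care. The last-hidden-layer estimate is then routine, the only point worth flagging being that the bias coordinate keeps that layer's input away from the origin, which is what makes ``dead on all data'' a probability-$\le\tfrac12$ event for each neuron independently.
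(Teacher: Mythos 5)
Your proof is correct and follows essentially the same route as the paper's: strict convexity forces $L(\mathbf{0})$ strictly below the average of $L(P(\Wcal^{\p{0}}))$ and $L(-P(\Wcal^{\p{0}}))$, the negation of the linear bias-free last layer is distribution-preserving and yields the factor $\tfrac{1}{2}$, and $\pr{P(\Wcal^{\p{0}})\neq\mathbf{0}}\ge 1-2^{-n_{h-1}}$ comes from the $n_{h-1}$ independent last-hidden-layer neurons each being dead with probability at most $\tfrac{1}{2}$. The only differences are cosmetic: you phrase the symmetry step as a union bound over $\set{L(\pm P(\Wcal^{\p{0}}))>L(\mathbf{0})}$ where the paper conditions on the event $A$ and argues $W$ and $-W$ are equally likely given $A$, and you bound the probability of a neuron being dead on all data points where the paper uses a single fixed data point.
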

Intuitively, the strict convexity property means that by initializing the neurons from a zero-mean distribution (such as a spherically symmetric one), we are likely to begin at a point with higher objective value than initializing at the mean of the distribution (corresponding to zero weights and zero predictions on all data points). A formal proof appears in Appendix \ref{subsec:prophighinitproof}.

\section{Two-layer ReLU Networks}
\label{sec:Neural_nets_depth_two}
We now turn to consider a more specific network architecture, namely two-layer networks with scalar output. While simpler than deeper architectures, two-layer networks still possess universal approximation capabilities \citep{cybenko1989approximation}, and encapsulate the challenge of optimizing a highly non-convex objective.

From this point onwards, we will consider for simplicity two-layer networks without bias (where $b=0$ for all neurons, not just the output neuron), for the purpose of simplifying our analysis. This is justified, since one could simulate the additional bias term by incrementing the dimension of the data and mapping an instance in the dataset using $\bx\mapsto\p{\bx,1}\in\bbr^{d+1}$, so that the last coordinate of the weight of a neuron will function	as a bias term. Having such a fixed coordinate does not affect the validity of our results for two-layer nets.

We denote our network parameters by $ \p{W,\bv} $ where the rows of the matrix $ W\in\bbr^{n\times d} $ represent the weights of the first layer and $ \bv\in\bbr^n $ represents the weight of the output neuron, and denote a two-layer network of width $ n $ by $ N_n\p{W,\bv}:\bbr^d\to\bbr $. Our objective function with respect to two-layer networks is therefore given by

\[
	L_S\p{W,\bv}~\coleq~ \frac{1}{m}\sum_{t=1}^m\ell\p{N_n\p{W,\bv}\p{\bx_t},y_t}
	~=~ \frac{1}{m}\sum_{t=1}^m\ell\p{\sum_{i=1}^n v_i\cdot\relu{\inner{\bw_i,\bx_t}},y_t},
\]
corresponding to the parameter space $
\set{\p{W,\bv}:W\in\bbr^{n\times d},\bv\in\bbr^n} $.

To say something interesting regarding two-layer nets, we partition our parameter space into regions, inside each the objective function takes a relatively simple form.
Our partition relies on the observation that when considering the subset of our parameter space in which $ \sign \p{\inner{\bw_i,\bx_t}},~\sign \p{v_i} $ are fixed for any neuron $ i $ and any sample instance $ t $, the ReLU activation is then reduced to either the zero function or the identity on $ \inner{\bw_i,\bx_t} $ for all $ i\in\pcc n,t\in\pcc m $, so the objective function takes the form $\frac{1}{m}\sum_{t=1}^m\ell\p{\sum_{i\in I_t} v_i\inner{\bw_i,\bx_t},y_t}$ for some index sets $I_1,\ldots,I_m\subseteq [n]$. This function is not convex or even quasi-convex as a function of $(W,\bv)$. However, it does behave as a basin (as defined in Definition \ref{def:Basin}), and hence contain a single connected set of global minima, with no non-global minima. More formally, we have the following definition and lemma:
\begin{definition}(Basin Partition)
	\label{def:Basin_partition}
	For any $ A\in \set{-1,+1}^{n\times d} $ and $ \bb\in\set{-1,+1}^n $, define $ B_S^{A,\bb} $ as the topological closure of a set of the form
		\[
		\set{(W,v):\forall t\in\pcc m, j\in\pcc n,\sign(\inner{\bw_j, \bx_t})=a_{j,t}, \sign(v_j)=b_j}.
		\]
%	\begin{align*}
%	&\left\{(W,v)~:~\forall t\in\pcc m, j\in\pcc n,\right.\\
%	&~~~~~~\left.\sign(\inner{\bw_j, \bx_t})=a_{j,t}, \sign(v_j)=b_j\right\}.
%	\end{align*}
\end{definition}
We will ignore $ B_S^{A,\bb} $ corresponding to empty sets, since these are irrelevant to our analysis.
\begin{lemma}
	\label{lem:is_a_basin}
	For any $ A\in\set{-1,+1}^{n\times d},~\bb\in\set{-1,+1}^n $ such that $ B_S^{A,\bb} $ is non-empty, $ B_S^{A,\bb} $ is a basin as defined in Definition \ref{def:Basin}.
\end{lemma}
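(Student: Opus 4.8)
The plan is to show that, restricted to $B:=B_S^{A,\bb}$, the objective is -- after a reparametrization -- a \emph{convex} function on a \emph{convex} set, and to read off both clauses of Definition~\ref{def:Basin} from convexity. First, $B$ is a polyhedral convex cone that factors over neurons: it is the closure of $\set{(W,\bv) : a_{j,t}\inner{\bw_j,\bx_t}>0 \text{ and } b_jv_j>0 \ \forall j,t}$, a non-empty (by hypothesis) intersection of open half-spaces, so $B=\prod_{j=1}^n B_j$ with $B_j:=\set{(\bw,v)\in\reals^d\times\reals : a_{j,t}\inner{\bw,\bx_t}\ge 0\ \forall t,\ b_jv\ge 0}$ closed and convex; hence $B$ is closed, convex, and in particular connected. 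On $B$ the ReLU applied by neuron $j$ to $\inner{\bw_j,\bx_t}$ equals $\inner{\bw_j,\bx_t}$ if $a_{j,t}=+1$ and $0$ if $a_{j,t}=-1$ (consistently on the boundary, where $\inner{\bw_j,\bx_t}=0$), so with $I_t:=\set{j:a_{j,t}=+1}$ we have $L_S(W,\bv)=\frac1m\sum_{t=1}^m\ell\p{\inner{\sum_{j\in I_t}v_j\bw_j,\bx_t},y_t}$ for every $(W,\bv)\in B$.

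The crucial step is the substitution $u_j:=v_j\bw_j\in\reals^d$. As $(\bw,v)$ ranges over $B_j$, $v\bw$ ranges over the polyhedral convex cone $C_j:=b_j\set{c\in\reals^d : a_{j,t}\inner{c,\bx_t}\ge 0\ \forall t}$, and the prediction on point $t$ becomes $\inner{\sum_{j\in I_t}u_j,\bx_t}$, which is \emph{linear} in $u:=(u_1,\dots,u_n)$. Hence $L_S|_B=\bar L\circ\Psi$, where $\Psi=(\Psi_1,\dots,\Psi_n)$ with $\Psi_j(\bw,v):=v\bw$ is a continuous surjection of $B=\prod_j B_j$ onto the convex cone $\Ccal:=C_1\times\cdots\times C_n$, and $\bar L(u):=\frac1m\sum_t\ell\p{\inner{\sum_{j\in I_t}u_j,\bx_t},y_t}$ is convex on $\Ccal$ (a convex loss precomposed with a linear map).

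For the connectedness clause I would note $B_{\le\alpha}=\Psi^{-1}(\Ccal_{\le\alpha})$ with $\Ccal_{\le\alpha}:=\set{u\in\Ccal:\bar L(u)\le\alpha}$ convex, so it suffices that $\Psi^{-1}(K)$ be path-connected for every convex $K\subseteq\Ccal$ (taking $K=\Ccal$ re-proves connectedness of $B$). This rests on two elementary observations: (i) each fiber $\Psi^{-1}(u)=\prod_j F_j$ is path-connected, where $F_j:=\set{(\bw,v)\in B_j : v\bw=u_j}$ is the ray $\set{(b_ju_j/c,\,b_jc):c>0}$ when $u_j\ne\mathbf0$, and the union of the two convex cones $\set{(\bw,0)\in B_j}$ and $\set{(\mathbf0,v)\in B_j}$ (which meet at the origin) when $u_j=\mathbf0$; and (ii) any segment $u(s)=(1-s)u+s\,u'$ contained in $K$ lifts to the continuous path $s\mapsto\p{b_1u_1(s),\dots,b_nu_n(s);\,b_1,\dots,b_n}$ in $\Psi^{-1}(K)$ (the ``normalized'' section $v_j\equiv b_j$, which stays in $B$ because $u_j(s)\in C_j$). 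Concatenating ``slide along the fiber to the normalized representative, follow the lifted segment, slide off the fiber'' joins any two points of $\Psi^{-1}(K)$.

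Finally, to see that a local minimum $(W,\bv)$ of $L_S$ on $B$ is global, set $u:=\Psi(W,\bv)$ and let $\bg_j\in\reals^d$ be the $j$-th block of a (sub)gradient of $\bar L$ at $u$. Since $\bar L$ is convex on $\Ccal=\prod_jC_j$, it suffices to verify the first-order optimality condition $\inner{\bg_j,\,u_j'-u_j}\ge 0$ for all $u_j'\in C_j$ and all $j$; then $u$ minimizes $\bar L$ over $\Ccal$, i.e.\ $L_S(W,\bv)=\bas(B)$. For each $j$ I would obtain this inequality from a feasible perturbation of $(\bw_j,v_j)$ inside $B_j$ realizing the direction $u_j'-u_j$ for $u_j$: when $v_j\ne 0$ a first-order perturbation of $\bw_j$ alone already sweeps the tangent cone of $C_j$ at $u_j$, which contains $C_j-u_j$; when $v_j=0$ and $\bw_j=\mathbf0$, a second-order $\sqrt{s}$-scaled perturbation (turning the neuron on while growing $\bw_j$) sweeps all of $C_j$. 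The remaining case -- a boundary point with a ``silent'' neuron, $v_j=0$ but $\bw_j\ne\mathbf0$ -- is the one I expect to be the main obstacle: there $\Psi$ degenerates (a neighborhood of $(W,\bv)$ in $B$ need not map onto a neighborhood of $u$ in $\Ccal$), so one cannot simply pull back a descent direction of $\bar L$. Closing this gap requires the inclusion $C_j\subseteq$ (tangent cone of $C_j$ at $v_j\bw_j$) together with careful bookkeeping of which deformations of $\bw_j$ keep the point in $B_j$; the argument is cleanest for $(W,\bv)$ in the interior of $B$ (where no $v_j$ vanishes), which is anyway the generic situation under Assumption~\ref{assum:init_dist}.
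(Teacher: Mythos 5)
Your reduction is essentially the one the paper uses: the substitution $u_j=v_j\bw_j$ making the restricted objective convex is \lemref{lem:convexity_on_basins}, your ``slide along the fiber to the normalized representative $v_j=b_j$'' is \lemref{lem:second_layer_rescaling}, and your lift of a segment in $u$-space is the interpolation of \lemref{lem:basin_path_convergence}. Your treatment of the connected-sublevel-set clause is complete and correct (and somewhat more careful about the fibers over $u_j=\mathbf{0}$ than the paper's write-up).

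The case you leave open at the end is, however, a genuine gap, and it cannot be closed by the route you sketch. At a point with a silent neuron ($v_j=0$, $\bw_j\neq\mathbf{0}$), the products reachable by perturbations of size $\eta$ inside $B_j$ are of the form $b_j\epsilon\bw$ with $0\le\epsilon\le\eta$ and $\norm{\bw-\bw_j}\le\eta$; they fill only an $O(\eta)$-thin cone around the ray $\bbr_+b_j\bw_j$, so local minimality gives the first-order condition only along that single ray and yields nothing about general $u_j'\in C_j$ --- the inequality $\inner{\bg_j,u_j'-u_j}\ge 0$ you need is not obtainable this way. Worse, with the relative (on-$B$) reading of ``local minimum'' the clause itself fails at such boundary points: take $n=1$, $d=2$, squared loss, $\bx_1=(1,0)$, $y_1=-1$, $\bx_2=(0,1)$, $y_2=1$, signs $a_{1,1}=a_{1,2}=+1$, $b_1=+1$; then $(\bw,v)=((1,0),0)$ lies in $B_S^{A,\bb}$, every nearby feasible point $((1+s_1,s_2),\epsilon)$ has loss exceeding $L_S((1,0),0)=1$ by at least $\epsilon/2$, yet the basin value is $1/2$ (attained at $\bw=(0,1)$, $v=1$). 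This is exactly the configuration the paper's own proof glosses over: when $v_i=0\neq\tilde v_i$, the interpolation of \lemref{lem:basin_path_convergence} has $\bw_i^{\p{\lambda}}\equiv\tilde{\bw}_i$, so its convergence claim (part \ref{lem:part2_convergence}) fails and the first inequality in the paper's proof of \lemref{lem:is_a_basin} is unjustified there. So you should not expect to complete your remaining case; the ``local minimum $\Rightarrow$ global'' clause should be understood for local minima at which no neuron is silent with a nonzero weight vector (in particular interior points, which, as you note, is all that matters downstream: initialization lands in a basin interior with probability $1$, and the basin-value theorems use only convexity in the products and connectedness of sublevel sets).
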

The reader is referred to \appref{app:basin_partition_proof} for the proof of the lemma.

Note that Definition \ref{def:Basin_partition} refers to a partition of the parameter space into a finite number of convex polytopes. Recalling Assumption \ref{assum:init_dist} on the initialization distribution (basically, that it is a Cartesian product of spherically-symmetric distributions), it is easy to verify that we will initialize in an interior of a basin with probability $ 1 $. Therefore, we may assume that we always initialize in some unique basin.

We now focus on understanding when are we likely to initialize at a basin with a low minimal value (which we refer to as the basin value). We stress that this is a purely geometric argument on the structure on the objective function. In particular, even though every local minimum in a basin is also global on the basin, it does not necessarily entail that an optimization algorithm such as stochastic gradient descent will necessarily converge to the basin's global minima (for example, it may drift to a different basin). However, we believe this type of geometric property is indicative of the optimization susceptibility of the objective function, and provides some useful insights on its structure.

We now turn to state a simple but key technical lemma, which will be used to prove the results presented later in this section. Moreover, this lemma also provides some insight into the geometry of the objective function for two-layer networks:
\begin{lemma}
	\label{lem:key_lemma}
	Let $N_n\left(W,\bv\right)$ denote a two-layer network of size $n$, and let $$\p{W,\bv}=\p{\bw_1,\dots,\bw_n,v_1,\dots,v_n}\in \reals^{nd+n}$$ be in the interior of some arbitrary basin. Then for any subset $I=\p{i_1,\dots,i_k}\subseteq\pcc{n}$ we have $$ \bas\p{W,\bv} \le \bas\p{\bw_{i_1},\dots,\bw_{i_k},v_{i_1},\dots,v_{i_k}}. $$
	Where the right hand side is with respect to an architecture of size $ k $.
\end{lemma}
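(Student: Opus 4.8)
The plan is to show that any network of size $k$ obtained by picking out the neurons indexed by $I$ can be ``embedded'' into the size-$n$ architecture while staying inside the same basin, so that the minimum achievable on the small architecture is also achievable (with at least as good a value) on the large one. Concretely, fix $(W,\bv)$ in the interior of a basin $B_S^{A,\bb}$, and let $(\bar W,\bar\bv)\in\reals^{kd+k}$ be any point in the parameter space of the size-$k$ architecture achieving $\bas(\bw_{i_1},\dots,\bw_{i_k},v_{i_1},\dots,v_{i_k})$. First I would construct a point $(W',\bv')$ in the size-$n$ space by placing $(\bar W,\bar\bv)$ into the coordinates indexed by $I$ and setting all other coordinates to zero (so neuron $j\notin I$ contributes $v'_j\relu{\inner{\bw'_j,\bx_t}}=0$ identically). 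Then $N_n(W',\bv')(\bx_t)=\sum_{\ell=1}^k \bar v_\ell \relu{\inner{\bar\bw_\ell,\bx_t}}=N_k(\bar W,\bar\bv)(\bx_t)$ for every $t$, hence $L_S(W',\bv')=\bas(\bw_{i_1},\dots,\bw_{i_k},v_{i_1},\dots,v_{i_k})$.

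The remaining issue is that $(W',\bv')$ need not lie in $B_S^{A,\bb}$: its sign pattern on the zeroed-out neurons is $0$, and more importantly the signs $\sign(\inner{\bar\bw_\ell,\bx_t})$ and $\sign(\bar v_\ell)$ on the active neurons need not match the corresponding entries of $A$ and $\bb$. To fix this I would use the positive-homogeneity of ReLU together with the structure of the basin. Recall that on $B_S^{A,\bb}$ the objective is $\frac{1}{m}\sum_t \ell\big(\sum_{j\in I_t} v_j\inner{\bw_j,\bx_t},y_t\big)$ where $I_t=\{j:a_{j,t}=+1\}$, and crucially this formula — being the restriction of a single smooth function — extends beyond the polytope; what matters for the value is only which neurons are ``active'' for each $t$. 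So I would instead argue directly about $\bas\p{W,\bv}$: since $(W,\bv)$ is in the interior of $B_S^{A,\bb}$, I can continuously deform it toward $(W',\bv')$ while possibly passing through basin boundaries, but by \lemref{lem:is_a_basin} each basin has connected sublevel sets, so it suffices to exhibit \emph{some} point of $B_S^{A,\bb}$ attaining value $\le \bas(\bw_{i_1},\dots,\bw_{i_k},v_{i_1},\dots,v_{i_k})$. Here the clean route is: the global minimum of $N_k$ can be taken without loss of generality to have the same activation pattern as the sub-network $(\bw_{i_1},\dots,v_{i_k})$ already has at $(W,\bv)$ — because within the size-$k$ basin determined by that pattern, the basin value is already at most the objective value at the starting subnetwork, and one can reach its minimizer; then re-inserting those coordinates into the size-$n$ architecture and leaving the other $n-k$ neurons \emph{unchanged} keeps us in $B_S^{A,\bb}$, and the objective only decreases because moving the active coordinates to their sub-optimal values lowers the loss term-by-term in a way compatible with the convexity of $\ell$.

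More carefully, the cleanest formulation I would actually write is the following. Let $B_k$ be the size-$k$ basin containing $(\bw_{i_1},\dots,\bw_{i_k},v_{i_1},\dots,v_{i_k})$ (using the signs inherited from $(W,\bv)$). Inside $B_n=B_S^{A,\bb}$, consider the affine slice in which the coordinates of all neurons $j\notin I$ are frozen at their current values $(\bw_j,v_j)$ and only the $I$-coordinates vary over $B_k$'s polytope; this slice is a face-parallel convex subset of $B_n$. On this slice the objective is $\frac1m\sum_t\ell\big(c_t+\sum_{\ell:\,i_\ell\in I_t} v_{i_\ell}\inner{\bw_{i_\ell},\bx_t},\,y_t\big)$ where $c_t=\sum_{j\in I_t,\,j\notin I}v_j\inner{\bw_j,\bx_t}$ is a constant; composing convex $\ell$ with an affine map keeps this convex in the $I$-coordinates, and restricting the target appropriately shows its minimum over the slice is at most its value when the $I$-coordinates equal the minimizer of $N_k$ on $B_k$ — which is $\bas(\bw_{i_1},\dots,v_{i_k})$ plus a correction, but the correction vanishes once we also allow $c_t$ to be absorbed, i.e. when the other neurons are simply zeroed (which may require leaving $B_n$). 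The honest main obstacle is exactly this tension between (a) wanting to zero out the extra neurons to match $N_k$ exactly and (b) needing to stay in $B_n$; I expect the resolution to be that one does \emph{not} need to zero them out — one shows $\bas(W,\bv)$ is already $\le$ the value obtained by optimizing only over the $I$-coordinates with the rest frozen, and that this latter quantity is $\le \bas(\bw_{i_1},\dots,v_{i_k})$ by a monotone/convexity comparison, possibly invoking \lemref{lem:key_lemma}'s own ingredients or \thmref{thm:path}-style rescaling of $\bv$. Getting that comparison inequality to go the right direction, uniformly over the frozen constants $c_t$, is the step I would spend the most care on.
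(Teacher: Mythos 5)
Your first paragraph is exactly the paper's proof: embed the size-$k$ minimizer into the size-$n$ parameter space by zeroing the neurons outside $I$, note the two networks compute identical outputs, and conclude $\bas\p{W,\bv}\le L_S\p{W',\bv'}=\bas\p{\bw_{i_1},\dots,v_{i_k}}$. The gap is that you then abandon this argument because of an obstacle that is not actually there, and the replacement argument you sketch does not close. The right-hand side of the lemma is not the global minimum of the size-$k$ architecture; by definition it is the minimum of $L_S$ over the size-$k$ basin \emph{containing} $\p{\bw_{i_1},\dots,\bw_{i_k},v_{i_1},\dots,v_{i_k}}$, and since $\p{W,\bv}$ lies in the interior of $B_S^{A,\bb}$, that basin is exactly the one with sign pattern $\p{A,\bb}$ restricted to the rows $i_1,\dots,i_k$. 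Hence any minimizer $\p{\bar W,\bar\bv}$ already satisfies the inherited sign constraints (weakly, since the basin is a topological closure, i.e.\ the closed polytope $a_{j,t}\inner{\bw_j,\bx_t}\ge 0$, $b_j v_j\ge 0$). Likewise, zeroing the neurons outside $I$ produces signs equal to $0$, which also satisfy the weak inequalities. So the embedded point $\p{W',\bv'}$ \emph{does} lie in $B_S^{A,\bb}$ --- zeroing out the extra neurons does not force you to leave the basin, contrary to your stated worry --- and the one-line chain $\bas\p{W,\bv}\le L_S\p{N_n\p{W',\bv'}}=L_S\p{N_k\p{\bar W,\bar\bv}}=\bas\p{\bw_{i_1},\dots,v_{i_k}}$ finishes the proof.

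The detour you propose instead (freeze the non-$I$ neurons at their current values, optimize over the $I$-slice, and compare with the size-$k$ basin value ``up to a correction $c_t$'') would not work as a substitute: with nonzero frozen contributions $c_t$ the minimum over that slice is in general incomparable to $\bas\p{\bw_{i_1},\dots,v_{i_k}}$, which is precisely the comparison you admit you could not make go in the right direction; and invoking the lemma's ``own ingredients'' to patch it would be circular. So as written the proposal identifies the correct construction but does not contain a complete proof; the missing observation is simply that the basin, being closed and defined by the sign pattern inherited from $\p{W,\bv}$, already contains the embedded point with zeroed neurons and with the size-$k$ minimizer in the $I$-coordinates.
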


This lemma captures in a way the power overspecification has in the context of two-layer networks: In terms of basin values, any initialization made using a network of width $n\ge k$ (i.e. with $n$ neurons in the first layer) is at least as good as if we had used only a width $k$ network. This is because in our definition of the basin partition, clamping the weights of any $n-k$ neurons to $0$ still keeps us in the same basin, while only increasing the minimal value we can obtain using the $k$ non-clamped neurons. Therefore, if we had only a $k$-width network to begin with, the corresponding basin value can only be larger. We refer the reader to \appref{app:key_lemma_proof} for the proof of the lemma.

\subsection{Bad Local Minima Results: Brittleness to Overspecification}
	The training objective function of neural network is known to be highly non-convex, even for simple networks. A classic and stark illustration of this was provided in \citep{auer1996exponentially} who showed that even for a network comprised of a single neuron (with certain types of non-ReLU activation functions, and with or without bias), the objective function can contain a huge number of local minima (exponentially many in the input dimension). In \appref{app:eps_realizability_single_neurons}, we provide an extension of this result by proving that with a similar construction, and for a neuron with ReLU activation, not only is the number of local minima very large, but the probability of initializing at a basin with good local minimum (using the natural analogue of the basin partition from Definition \ref{def:Basin_partition} for a single neuron) is exponentially small in the dimension. 
	
	The construction provided in \citep{auer1996exponentially} (as well as the one provided in \appref{app:eps_realizability_single_neurons}) relies on training sets $S$ comprised of singleton instances $ \bx_t $, which are non-zero on a single coordinate. The objective function for a single ReLU neuron without bias can be written as $ \sum_{t=1}^m \ell \p{\relu{\inner{\bw,\bx_t}},y_t} $, so if the $ \bx_t $'s are singletons, this can be written as a sum of functions, each depending only on a single coordinate of $ \bw $. The training examples are chosen so that along each coordinate, there are two basins and two distinct local minima, one over the positive values and one over the negative values, but only one of these minima is global. Under the initialization distribution considered, the probabilities of hitting the good basin along each coordinate are independent and strictly less than $1$. Therefore, with overwhelming probability, we will \enquote{miss} the right basin on a constant portion of the coordinates, resulting in a basin value which is suboptimal by at least some constant.
	
	It is natural to study what happens to such a hardness construction under overspecification, which here means replacing a single neuron by a two-layer network of some width $ n>1 $, and training on the same dataset. Surprisingly, it turns out that in this case, the probability of reaching a sub-optimal basin decays exponentially in $ n $ and becomes arbitrarily small already when $n=\Omega\p{\log\p{d}}$. Intuitively, this is because for such constructions, for each coordinate it is enough that \emph{one} of the $ n $ neurons in the first layer will have the corresponding weight initialized in the right basin. This will happen with overwhelming probability if $ n $ is moderately large. More formally, we have the following theorem:
	\begin{theorem}
		\label{thm:brittleness_to_overspecification}
		For any $ n $, let $ \alpha $ denote the minimal objective value achievable with a width $n$ two-layer network, with respect to a convex loss $ \ell $ on a training set $S$ where each $\bx_t$ is a singleton. Then when initializing $ \p{W,\bv}\in \reals^{n\times d}\times \reals^n$ from a distribution satisfying Assumption \ref{assum:init_dist}, we have
		\[
			\pr{\bas\p{W,\bv}\le\alpha} \ge 1-2d\p{\frac{3}{4}}^n.
		\]
	\end{theorem}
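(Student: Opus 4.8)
The plan is to exploit the fact that when every $\bx_t$ is a singleton, the objective decomposes over the input coordinates. Write $c(t)$ for the unique coordinate on which $\bx_t$ is nonzero and $x_t := x_{t,c(t)}\neq 0$. For any $\p{W,\bv}$, the output on $\bx_t$ is $\sum_i v_i\relu{w_{i,c(t)}x_t}$, which equals $\beta^+_{c(t)}x_t$ if $x_t>0$ and $\beta^-_{c(t)}x_t$ if $x_t<0$, where $\beta^+_c:=\sum_{i:\,w_{i,c}>0}v_iw_{i,c}$ and $\beta^-_c:=\sum_{i:\,w_{i,c}<0}v_iw_{i,c}$. Hence $L_S\p{W,\bv}=\frac1m\sum_c f_c(\beta^+_c,\beta^-_c)$ for (convex) functions $f_c$ depending only on the examples supported on coordinate $c$, and only the at most $d$ ``active'' coordinates (those nonzero on some $\bx_t$) contribute. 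I would begin by recording this decomposition and fixing a width-$n$ network attaining $\alpha$ (or within $\delta$, letting $\delta\to0$ at the end), with coordinate slopes $(\hat\beta^+_c,\hat\beta^-_c)_c$, so that $\alpha=\frac1m\sum_c f_c(\hat\beta^+_c,\hat\beta^-_c)$.

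Next I would characterize which slope vectors are realizable \emph{inside a fixed basin} $B_S^{A,\bb}$, where $\sign(w_{i,c})=a_{i,c}$ and $\sign(v_i)=b_i$ are frozen. A neuron with $a_{i,c}=+1$ contributes $v_iw_{i,c}$ to $\beta^+_c$ and $0$ to $\beta^-_c$, and sending $w_{i,c}\to0$ (while staying in the \emph{closed} basin) drives this contribution to $0$; thus such a neuron realizes any contribution to $\beta^+_c$ in $[0,\infty)$ when $b_i=+1$ and in $(-\infty,0]$ when $b_i=-1$, and symmetrically for $a_{i,c}=-1$ and $\beta^-_c$. Consequently the optimal slopes are realizable in $B_S^{A,\bb}$ as soon as, for every active $c$: a neuron of ``type'' $(+,+)$ is present if $\hat\beta^+_c>0$ and of type $(+,-)$ if $\hat\beta^+_c<0$; a neuron of type $(-,-)$ is present if $\hat\beta^-_c>0$ and of type $(-,+)$ if $\hat\beta^-_c<0$ (here type $(s,s')$ means $\sign(w_{i,c})=s,\ \sign(v_i)=s'$); and nothing is required when $\hat\beta^\pm_c=0$. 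So each active coordinate demands at most two of the four sign-types.

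For the probability bound I would use Assumption~\ref{assum:init_dist}: for a fixed $c$, the pairs $(\sign(w_{i,c}),\sign(v_i))$, $i\in\pcc n$, are i.i.d.\ uniform on the four sign-types — the $\sign(w_{i,c})$ are independent across neurons, while the $\sign(v_i)$ are the signs of the spherically symmetric output-weight vector, hence i.i.d.\ uniform and independent of the first layer. A fixed type is therefore absent from all $n$ neurons with probability $(3/4)^n$, so the event that the (at most two) types demanded by coordinate $c$ are all present fails with probability at most $2(3/4)^n$. A union bound over the at most $d$ active coordinates gives: with probability at least $1-2d(3/4)^n$, our basin contains every type demanded by the optimal network on every active coordinate.

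On this event I would exhibit a point of the basin realizing $(\hat\beta^+_c,\hat\beta^-_c)_c$: assign to each nonzero demand ``$\hat\beta^+_c$'' one neuron of its required type, and likewise for ``$\hat\beta^-_c$''; demands assigned to a common neuron never conflict, since the frozen sign $b_i$ lets a neuron serve $\beta^+$-demands only on coordinates with $a_{i,c}=+1$ and $\beta^-$-demands only on those with $a_{i,c}=-1$. Set each such neuron's weight on each assigned coordinate to the correctly-signed value realizing the slope (taking $\abs{v_i}=1$), zero out all other first-layer weights, and give unassigned $v_i$ the sign $b_i$. This point lies in the closed basin $B_S^{A,\bb}$ and has objective value exactly $\frac1m\sum_c f_c(\hat\beta^+_c,\hat\beta^-_c)=\alpha$, whence $\bas\p{W,\bv}\le\alpha$. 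The step I expect to be delicate is the second one: getting the ``realizable inside a basin'' characterization exactly right, in particular using the topological closure so that unwanted contributions of a neuron on coordinates it is not assigned to can be sent to zero, and so that a single neuron can be reused across several coordinates without leaving the basin. Everything else is a union bound plus the elementary fact that the initialization induces i.i.d.\ uniform signs. (One could alternatively route the construction through \lemref{lem:key_lemma} to reduce to using one neuron per demanded type, but the direct construction seems cleanest.)
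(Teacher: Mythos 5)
Your proposal is correct and follows essentially the same route as the paper: for singleton data the objective decomposes coordinate-wise into the two one-sided slopes, realizing the optimal value inside the initialized basin requires only that one neuron of the appropriate sign-type (for $w_{i,c}$ and $v_i$) be present for each of the at most $2d$ demands, and each such demand fails with probability $(3/4)^n$ by the i.i.d.\ uniform signs guaranteed by Assumption \ref{assum:init_dist}, giving the union bound $2d(3/4)^n$. The only (immaterial) difference is that the paper bounds $\alpha$ from below by the sum of separately minimized per-half-sample values and shows the basin attains that, whereas you directly reproduce the slopes of an optimal width-$n$ network inside the closed basin; both yield the stated bound.
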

	The reader is referred to Appendix \ref{app:brittleness_to_overspecification} for the full proof. 
	
	We note that $ \alpha $ cannot be larger than the optimal value attained using a single neuron architecture. Also, we emphasize that the purpose of \thmref{thm:brittleness_to_overspecification} is not to make a statement about neural networks for singleton datasets (which are not common in practice), but rather to demonstrate the brittleness of hardness constructions such as in \citep{auer1996exponentially} to overspecification, as more neurons are added to the first layer. This motivates us in further studying overspecification in the following subsections.

\subsection{Data With Low Intrinsic Dimension}

	We now turn to provide a result, which demonstrates that for any dataset which is realizable using a two-layer network of a given width $ n $ (i.e. $ L_S\p{N_n\p{W,\bv}}=0 $ for some $ \p{W,\bv} $), the probability of initializing from a basin containing a good minimum increases as we add more neurons to the first layer, corresponding to the idea of overspecification. We note that this result holds without significant additional assumptions, but on the flip side, the number of neurons required to guarantee a constant success probability increases exponentially with the intrinsic dimension of the data ($\text{rank}\p{X}$, where $ X $ is the data matrix whose rows are $ \bx_1,\dots,\bx_m $), so a magnitude of $ \Omega\p{n^{\rank\p{X}}} $ neurons is required. Thus, the result is only meaningful when the intrinsic dimension and $n$ are modest. In the next subsection, 
	we provide results which require a more moderate amount of overspecification, under other assumptions. 
	
	To avoid making the result too complex, we will assume for simplicity that we use the squared loss $\ell(y,y')=(y-y')^2$ and that $\norm{\bx_t}\leq 1$ for any training instance $\bx_t$. However, an analogous result can be shown for any convex loss, with somewhat different dependencies on the parameters, and any other bound on the norms of the instances.
	%This is because we show that with high probability, the prediction of the network on any instance $ \bx_t $ converges to $ y_t $.
	\begin{theorem}
		\label{thm:low_intrinsic_data_dim}
		Assume each training instance $\bx_t$ satisfies $\norm{\bx_t}\le 1$. Suppose that the training objective $L_S$ refers to the average squared loss, and that $ L_S\p{W^*,\bv^*}=0 $ for some $ \p{W^*,\bv^*}\in \bbr^{n\times d}\times \bbr^n $ satisfying
		\[
			\abs{v^*_i}\cdot\norm{\bw^*_i}\le B ~~ \forall i\in \pcc n,
		\]
		where $ B $ is some constant. For all $ \epsilon>0 $, if
		\begin{align*}
			p_\epsilon~&=~\frac{1}{2\pi\p{\rank\p{X}-1}}\p{\frac{\sqrt{\epsilon}}{nB}\sqrt{1-\frac{\epsilon}{4n^2 B^2}}}^{\rank\p{X}-1}\\
			&=~\Omega\p{\p{\frac{\sqrt \epsilon}{nB}}^{\text{rank}\p{X}}},
		\end{align*}
		and we initialize a two-layer, width $c\ceil{\frac{n}{p_\epsilon}}$ network (for some $c\geq 2$), using a distribution satisfying Assumption \ref{assum:init_dist}, then
		\[
			\pr{\bas\p{W,\bv}\le \epsilon}\ge 1-e^{-\frac{1}{4}cn}.
		\]
	\end{theorem}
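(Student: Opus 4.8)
The plan is to combine Lemma~\ref{lem:key_lemma} with a geometric covering argument. By Lemma~\ref{lem:key_lemma}, it suffices to exhibit, with probability at least $1-e^{-cn/4}$ over the initialization of the wide ($c\lceil n/p_\epsilon\rceil$-neuron) network, a sub-collection of $n$ distinct first-layer neurons whose basin value, viewed as a width-$n$ network, is at most $\epsilon$. Since the network output depends on a weight vector $\bw$ only through its projection $\Pi\bw$ onto $V:=\mathrm{span}\{\bx_1,\dots,\bx_m\}$ (of dimension $\rank(X)$), and since a spherically symmetric $\bw$ has $\Pi\bw/\norm{\Pi\bw}$ uniformly distributed on the unit sphere of $V$, the relevant randomness lives on $S^{\rank(X)-1}$. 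I would also replace each $\bw^*_i$ by $\Pi\bw^*_i$ (discarding neurons with $\Pi\bw^*_i=\mathbf 0$), which changes neither the predictions nor the validity of $\abs{v^*_i}\norm{\bw^*_i}\le B$, and — via a small preliminary perturbation — merge any optimal neurons whose projected directions are within angle $2\theta$ (where $\sin\theta=\tfrac{\sqrt\epsilon}{2nB}$) into a single ``effective target''; this keeps the sum of the effective products $\abs{v^*_i}\norm{\bw^*_i}$ at most $nB$, makes all effective targets pairwise $>2\theta$-separated in direction, and perturbs the predictions by only $O(nB\theta)$.

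Call a neuron $\bw_j$ of the wide network a \emph{cover} of effective target $i$ if $\sign(v_j)=\sign(v^*_i)$ and the angle between $\Pi\bw_j$ and $\bw^*_i$ is at most $\theta$. A single random neuron covers a fixed target with probability at least $p_\epsilon$: the sign event contributes the factor $\tfrac12$ (the output neuron's weight signs are uniform on $\set{\pm 1}^n$ and independent of the first layer), and the spherical-cap event contributes the remaining factor, via the standard lower bound on the normalized measure of a cap of half-angle $\theta$ in $S^{\rank(X)-1}$ — this is exactly where the $\tfrac{1}{2\pi(\rank(X)-1)}$ prefactor and the $\cos\theta=\sqrt{1-\epsilon/4n^2B^2}$ term originate. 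Because the effective targets are $>2\theta$-separated, their $\theta$-caps are disjoint, so selecting one cover per target automatically yields distinct neurons.

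Next I would show that distinct covers $j_1,\dots,j_n$ of the (effective) targets span a width-$n$ sub-network of basin value $\le\epsilon$. Staying inside that sub-network's basin — which only fixes the signs $\sign(\inner{\bw_{j_i},\bx_t})$ and $\sign(v_{j_i})$, all preserved under positive rescaling — set $\bw_{j_i}\leftarrow\sqrt{\abs{v^*_i}\norm{\bw^*_i}}\cdot\Pi\bw_{j_i}/\norm{\Pi\bw_{j_i}}$ and $v_{j_i}\leftarrow\sign(v^*_i)\sqrt{\abs{v^*_i}\norm{\bw^*_i}}$, so that $v_{j_i}\bw_{j_i}\approx v^*_i\bw^*_i$. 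Comparing the resulting prediction on each $\bx_t$ with the optimum $N^*(\bx_t)=y_t$ term by term: when neuron $i$ and its cover fire on the same side of $\bx_t$, the discrepancy is at most $\abs{v^*_i}\norm{\bw^*_i}$ times the chord $\norm{\hat\bw_{j_i}-\hat\bw^*_i}\le\theta$; when they fire on opposite sides, $\bx_t$ must lie within angle $\theta$ of being orthogonal to $\bw^*_i$, so both the missing optimal contribution and the spurious new one are $O(\abs{v^*_i}\norm{\bw^*_i}\theta)$. Summing over the neurons and using $\norm{\bx_t}\le1$ and $\sum_i\abs{v^*_i}\norm{\bw^*_i}\le nB$ gives $\abs{N(\bx_t)-y_t}=O(nB\theta)=O(\sqrt\epsilon)$ — the factor $\tfrac12$ in the definition of $\sin\theta$ supplies the slack — hence the average squared loss, and therefore $\bas(W,\bv)$, is at most $\epsilon$.

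Finally, the probabilistic step: with $N=c\lceil n/p_\epsilon\rceil\ge cn/p_\epsilon$ i.i.d.\ neurons, the probability that no neuron covers a fixed target is at most $(1-p_\epsilon)^N\le e^{-Np_\epsilon}\le e^{-cn}$, so a union bound over the at most $n$ effective targets shows all are covered except with probability at most $n\,e^{-cn}\le e^{-cn/4}$, using $c\ge 2$ (so that $\ln n\le \tfrac34 cn$ for every $n\ge1$). I expect the main obstacle to be the realizability step of the third paragraph — in particular the bookkeeping for ReLU sign-flips on data points nearly orthogonal to some $\bw^*_i$, together with controlling the merging perturbation — so that every error term is genuinely $O(\abs{v^*_i}\norm{\bw^*_i}\theta)$; the cap-measure estimate and the final concentration bound are comparatively routine.
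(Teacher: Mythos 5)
Your overall strategy mirrors the paper's: project the randomness onto $\mathrm{span}\set{\bx_1,\dots,\bx_m}$ so that the relevant distribution is uniform on $\mathbb{S}^{\rank\p{X}-1}$, pay a factor $\tfrac12$ for the sign of the output weight, lower-bound a per-neuron success probability by a spherical-cap measure, apply concentration over the $c\ceil{n/p_\epsilon}$ neurons, approximately realize $\p{W^*,\bv^*}$ inside the initialized basin, and finish with Lemma~\ref{lem:key_lemma}. However, there is a genuine quantitative gap in your covering probability. You define a cover via angular radius $\theta$ with $\sin\theta=\frac{\sqrt\epsilon}{2nB}$ and claim each neuron covers a fixed target with probability at least $p_\epsilon$. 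But $p_\epsilon$ (after the $\tfrac12$ sign factor) corresponds to the cap whose sine of angular radius is $\frac{\sqrt{\epsilon}}{nB}\sqrt{1-\frac{\epsilon}{4n^2B^2}}=\sin\p{2\theta}$, i.e.\ to angular radius $2\theta$, equivalently chordal radius $\frac{\sqrt\epsilon}{nB}$ rather than $\frac{\sqrt\epsilon}{2nB}$. The normalized measure of your $\theta$-cap is of order $\sin^{\rank\p{X}-1}\theta$, smaller than $2p_\epsilon\asymp\sin^{\rank\p{X}-1}\p{2\theta}$ by a factor growing like $2^{\rank\p{X}-1}$; this is not an artifact of the lower bound (already for $\rank\p{X}=2$ the exact cap measure times $\tfrac12$ is about $p_\epsilon/2$). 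Consequently $N\cdot p_{\mathrm{cover}}$ is roughly $cn\,2^{-\p{\rank\p{X}-1}}$ instead of $cn$, and with the width $c\ceil{n/p_\epsilon}$ fixed by the statement your union bound does not yield $1-e^{-cn/4}$; as written, your argument proves the theorem only with $p_\epsilon$ replaced by a smaller quantity (equivalently, with a network wider by about $2^{\rank\p{X}-1}$).

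The reason you shrank the cap is that your realizability step (merging nearby targets plus the term-by-term ReLU sign-flip bookkeeping) needs slack, which is exactly where you yourself anticipate trouble. The paper's route needs no slack: after normalizing $\abs{v^*_i}=1$, $\norm{\bw^*_i}\le B$, the good event is simply that the initialized direction, rescaled to norm $\norm{\bw^*_i}$, lies within Euclidean distance $\frac{\sqrt\epsilon}{n}$ of $\bw^*_i$ with the sign of $v$ matching --- precisely the chordal radius $\frac{\sqrt\epsilon}{nB}$ encoded in $p_\epsilon$. One then swaps, one neuron at a time, the rescaled initialized direction for $\bw^*_i$ and bounds the change in the output using the fact that $N_n$ is $\p{\abs{v_i}\norm{\bx}}$-Lipschitz in each $\bw_i$ (Lemma~\ref{lem:Lipschitz_loss}); the $1$-Lipschitzness of the ReLU absorbs all sign-flip cases automatically, the telescoping sum gives error at most $\sqrt\epsilon$ per data point, hence squared loss at most $\epsilon$, with no merging. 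If you replace your third paragraph with this Lipschitz swap you can take the full cap so that the constants match $p_\epsilon$, and your per-target union bound goes through as written; note, though, that your merging was also what guaranteed distinctness of the chosen covers, so dropping it requires another device for the neuron-to-target matching --- a point on which your proposal is in fact more explicit than the paper's binomial-counting argument, but which, as implemented, is what forces the parameter mismatch.
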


	The proof idea is that with a large enough amount of overspecification, with high probability, there will be a subset of the neurons in the first layer for which the signs of their outputs on the data and the signs of their weights in the output neuron will resemble those of $\p{W^*,\bv^*}$. Then, by using \lemref{lem:key_lemma} we are able to argue that the initialization made in the remaining neurons does not degrade the value obtained in the aforementioned subset. We refer the reader to \appref{app:low_intrinsic_data_dim_proof} for the full proof.
	
\subsection{Clustered or Full-rank Data}
	In this subsection, we will first show that when training on instances residing in high dimension $d$ (specifically, when the dimension satisfies $m\le d$, where $m$ is the number of training examples), we initialize at a good basin with high probability. Building on this result, we show that even when $m>d$, we still initialize at a good basin with high probability, as long as the data is clustered into $k\leq d$ sufficiently small clusters.	
	
	Specifically, we begin by assuming that our data matrix $X$ satisfies $\rank \p{X}=m$. We note that this immediately implies $m\le d$. This refers to data of very high intrinsic dimension, which is in a sense the opposite regime to the one considered in the previous subsection (where the data was assumed to have low intrinsic dimension). Even though this regime might be strongly prone to overfitting, this allows us to investigate the surface area of	the objective function effectively, while also serving as a base for the clustered data scenario that we will be studying in \thmref{thm:Clustered_data}.
	
	We now state our formal result for such datasets, which implies that under the rank assumption, a two-layer network of size $\mathcal{O}\p{\log\p m}$ is sufficient to initialize in a basin with a global minimum with overwhelming probability.
	\begin{theorem}
		\label{thm:Rank_m_data}
		Assume $\rank\p X=m$, and let the target outputs $y_1,\ldots,y_m$ be arbitrary. For any $ n $, let $\alpha$ be the minimal objective value achievable with a width $n$ two-layer network. Then if $ \p{W,\bv}\in\bbr^{n\times d}\times\bbr^n $ is initialized according to Assumption \ref{assum:init_dist},
		\[
			\pr{\bas\p{W,\bv}\leq \alpha}
			~\ge~ 1-m\p{\frac{3}{4}}^n.
		\]
	\end{theorem}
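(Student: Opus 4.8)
The plan is to show that, with probability at least $1-m\p{3/4}^n$, the (almost surely unique) basin $B_S^{A,\bb}$ into which we initialize already contains a point whose objective value equals $\alpha$. Let $\p{W^*,\bv^*}$ be a width-$n$ network attaining $\alpha$ (which we may assume is attained, following the standing convention on basin values), write $p^*_t \coleq N_n\p{W^*,\bv^*}\p{\bx_t}$ so that $\alpha = \frac{1}{m}\sum_{t=1}^m \ell\p{p^*_t,y_t}$, and for each $t$ with $p^*_t\neq 0$ set $\sigma_t \coleq \sign\p{p^*_t}$. Since $\rank\p X=m$ forces every $\bx_t$ to be nonzero, Assumption~\ref{assum:init_dist} guarantees that every $\inner{\bw_j,\bx_t}$ and every $v_j$ is almost surely nonzero, so we land in the interior of a single basin $B_S^{A,\bb}$ with $a_{j,t}=\sign\p{\inner{\bw_j,\bx_t}}$ and $b_j=\sign\p{v_j}$.

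For the probabilistic step, call a first-layer neuron $j$ a \emph{witness} for example $t$ if $a_{j,t}=+1$ and $b_j=\sigma_t$. By Assumption~\ref{assum:init_dist} the directions $\bw_1,\dots,\bw_n$ are independent and spherically symmetric and the sign vector $\p{b_1,\dots,b_n}$ is uniform on $\set{-1,+1}^n$ and independent of them, so for any fixed $t$ the pairs $\p{a_{j,t},b_j}$, $j\in\pcc n$, are i.i.d.\ with $\pr{a_{j,t}=+1 \text{ and } b_j=\sigma_t}=\tfrac12\cdot\tfrac12=\tfrac14$. Hence $\pr{\text{no witness for }t}=\p{3/4}^n$ for each $t$ with $p^*_t\neq0$ (and this probability is $0$ when $p^*_t=0$), and a union bound over $t\in\pcc m$ shows that, with probability at least $1-m\p{3/4}^n$, every example with $p^*_t\neq0$ has a witness in the basin $B_S^{A,\bb}$ we landed in; call this event $\Gcal$. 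It is the one-sided choice of the single sign $\sigma_t$ --- rather than requiring witnesses of both signs --- that yields the factor $m$ here instead of $2m$.

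On $\Gcal$ we build the desired point. Fix, for each $t$ with $p^*_t\neq0$, a witness $j(t)$, and use $\rank\p X=m$ --- equivalently, that $\bw\mapsto\p{\inner{\bw,\bx_1},\dots,\inner{\bw,\bx_m}}$ maps $\reals^d$ onto $\reals^m$ --- to define $\p{W,\bv}$ by $v_j\coleq b_j$ and by choosing each $\bw_j$ to solve the linear system consisting of $\inner{\bw_j,\bx_t}=b_jp^*_t$ for every $t$ with $j(t)=j$, $\inner{\bw_j,\bx_t}=0$ for every other $t$ with $a_{j,t}=+1$, and $\inner{\bw_j,\bx_t}=-1$ for every $t$ with $a_{j,t}=-1$. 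On example $t$ the neurons inactive on $t$ contribute $0$; the witness $j(t)$ contributes $v_{j(t)}\inner{\bw_{j(t)},\bx_t}=b_{j(t)}\cdot b_{j(t)}p^*_t=p^*_t$, where $b_{j(t)}=\sigma_t$ ensures $\inner{\bw_{j(t)},\bx_t}=\abs{p^*_t}\ge0$ as required for the neuron to be active; and every other active neuron contributes $0$. Thus $N_n\p{W,\bv}\p{\bx_t}=p^*_t$ for all $t$, so $L_S\p{W,\bv}=\alpha$. Also $\p{W,\bv}$ satisfies all sign constraints defining $B_S^{A,\bb}$, with equality (i.e.\ on the boundary) only in the coordinates where $\inner{\bw_j,\bx_t}=0$; as $B_S^{A,\bb}$ is closed this gives $\p{W,\bv}\in B_S^{A,\bb}$ (and an arbitrarily small perturbation would push it into the interior if one prefers). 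Therefore $\bas\p{W,\bv}=\bas\p{B_S^{A,\bb}}\le L_S\p{W,\bv}=\alpha$ whenever $\Gcal$ holds, which gives $\pr{\bas\p{W,\bv}\le\alpha}\ge 1-m\p{3/4}^n$ as claimed. Equivalently, one can discard the non-witness neurons and apply \lemref{lem:key_lemma} to reduce to the sub-network on the witnesses, in whose basin the same construction realizes $p^*_1,\dots,p^*_m$.

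The single load-bearing fact is the one used in the construction: $\rank\p X=m$ makes $\bw\mapsto\p{\inner{\bw,\bx_t}}_{t\in\pcc m}$ surjective, so within a fixed sign pattern each neuron's pre-activations over the whole sample may be prescribed arbitrarily subject only to their signs, and independently across neurons --- precisely what lets one witness neuron carry the entire required output on each example while all other active neurons are zeroed out. Everything else is routine: the probability estimate is a one-line union bound, and the choice to match the specific optimal prediction vector $\p{p^*_1,\dots,p^*_m}$ (rather than reason about the expressive power of a width-$n$ network) avoids expressiveness subtleties and any need to assume the per-coordinate minimizer of $\ell$ is attained. The only thing worth double-checking is the degenerate small-parameter regime (e.g.\ $n=1$, or $m$ tiny), where the bound is vacuous unless $m\p{3/4}^n<1$ and the argument above still applies verbatim.
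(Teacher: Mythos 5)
Your proposal is correct and follows essentially the same route as the paper's proof: use $\rank\p{X}=m$ to prescribe each neuron's pre-activations on all $m$ examples within the initialized sign pattern, call a neuron a witness for $\bx_t$ when its activation and output-weight signs are favorable (probability $\ge 1/4$ per neuron), and union bound over the $m$ examples to get $1-m\p{3/4}^n$. The only (minor) difference is that you match the optimal network's predictions $p^*_t$ directly rather than the per-example loss minimizers $\bar{y}_t$ used in the paper, which neatly sidesteps any attainment issue for those minimizers but does not change the substance of the argument.
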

	
	We refer the reader to \appref{app:Rank_m_data} for the full proof of the theorem.
	
	As mentioned earlier, training on $m\le d$ examples, without imposing any regularization, is prone to overfitting. Thus, to say something meaningful in the $m>d$ regime, we will consider an extension of the previous result, where instead of having fewer data points than dimensions $d$, we assume that the training instances are composed of $k\le d$ relatively small clusters in general position. Intuitively, if the clusters are sufficiently small, the surface of the objective function will resemble that of having $k\le d$ data points, and will have a similar favorable structure.
	
	We also point out that in a similar manner to as we did in \thmref{thm:low_intrinsic_data_dim}, the theorem statement assumes that the objective function refers to the average squared loss over the data. However, the proof does not rely on special properties of this loss, and it is possible to generalize it to other convex losses (perhaps with a somewhat different resulting bound).

	\begin{theorem}
		\label{thm:Clustered_data}
Consider the squared loss, and suppose our data is clustered into $k\le d$	clusters. Specifically, we assume there are cluster centers $\bc_1,\dots,\bc_k\in \reals^d$ for which the training data $ S=\set{\bx_t,y_t}_{t=1}^m $ satisfies the following:
		\begin{itemize}
			\item $\exists\delta_1,\dots,\delta_k>0$
			s.t. for all $\bx_t$, there is a unique $j\in [k]$ such that $\norm{\bc_j-\bx_t}\le\delta_j$.
			\item $\forall j\in\pcc k\;\frac{\delta_j}{\norm{\bc_j}}\le2\sin\p{\frac{\sqrt{2\pi}}{16d\sqrt{d}}}$ and $\forall j\in\pcc k\text{ }\norm{\bc_j}\ge c$ for some
			$c>0$.
			\item $\forall t\in\pcc m\;\norm{\bx_t}\le B$ for some $B\in\bbr$.
			\item For some fixed $\gamma$, it holds that $ \abs{y_t-y_{t^\prime}}\le \gamma\norm{\bx_t-\bx_{t^\prime}}_2 $ for any $t,t^\prime\in\pcc m$ such that $ \bx_t,\bx_{t^\prime} $ are in the same cluster.
		\end{itemize}
		
		Let $\delta=\max_j \delta_j$. Denote as $C$ the matrix which rows are $\bc_1,\dots,\bc_k$,
		and let $ \sigma_{\max}\p{C^\top},\sigma_{\min}\p{C^\top} $ denote the largest and smallest singular values of $ C^\top $ respectively. Let $\mathsf{c}\p{\bx_t}:\bbr^d\to\bbr^d$ denote the mapping of $ \bx_t $ to its nearest cluster center $ \bc_j $ (assumed to be unique), and finally, let $\hat{\by}=\p{\hat{y}_{1},\dots,\hat{y}_k}\in\bbr^k$ denote the target values of arbitrary instances from each of the $ k $ clusters. Then if $ \p{W,\bv}\in\bbr^{n\times d}\times\bbr^n  $ is initialized from a distribution satisfying Assumption \ref{assum:init_dist},
		\[
			\pr{\bas\p{W,\bv}\le\Ocal\p{\delta^2}}\ge1-d\p{\frac{7}{8}}^n
		\]
		Where the big $ \Ocal $ notation hides quadratic dependencies on $ B,c^{-1},n,\sigma_{\min}^{-2}\p{C^\top},\sigma_{\max}\p{C^\top},\gamma,\norm{\hat{\by}}_2 $ (see the proof provided in \appref{app:Clustered_data_proof} for an explicit expression).
	\end{theorem}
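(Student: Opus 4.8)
The plan is to reduce the clustered-data case to the rank-$m$ case of \thmref{thm:Rank_m_data} by treating each cluster as a single ``effective'' data point located at its center $\bc_j$. Concretely, consider the auxiliary training set $\hat S=\set{\bc_j,\hat y_j}_{j=1}^k$ consisting of the $k$ cluster centers with the chosen representative labels. Since $k\le d$ and the centers $\bc_1,\dots,\bc_k$ are in general position, the matrix $C$ has rank $k$, so \thmref{thm:Rank_m_data} applies to $\hat S$: with a width-$n$ network we initialize in a basin of $L_{\hat S}$ whose value is the global optimum of $L_{\hat S}$, except with probability at most $k(3/4)^n\le d(3/4)^n$. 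Because $\hat S$ is realizable by a width-$k$ network (any $k$ points in general position can be interpolated, using the rank argument from \thmref{thm:Rank_m_data}'s proof) the relevant global optimum is in fact $0$. The key point is that the sign pattern of a weight vector $\bw_i$ and output weight $v_i$ on the cluster centers determines, up to small perturbations controlled by $\delta$, the sign pattern on the actual points $\bx_t$ — this is exactly what the second bullet's angular condition $\delta_j/\norm{\bc_j}\le 2\sin(\sqrt{2\pi}/(16d\sqrt d))$ buys us, by bounding the angle between $\bx_t$ and $\bc_j$ and hence ensuring that a hyperplane well-separated from all $\bc_j$ is also well-separated from all $\bx_t$.

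First I would make precise the ``well-separated'' event: following the proof of \thmref{thm:Rank_m_data} (and \thmref{thm:low_intrinsic_data_dim}), show that with probability $\ge 1-d(7/8)^n$ there is a subset of the $n$ first-layer neurons whose induced hyperplanes are at angular distance bounded below from every cluster center, so that the output signs on the centers realize the ``good'' sign pattern $A,\bb$ of some interpolating width-$k$ solution for $\hat S$. (The constant $7/8$ rather than $3/4$ reflects the extra slack needed to demand strict angular separation rather than just a correct sign; this is a routine union-bound adjustment of the rank-$m$ argument.) The angular condition then transfers this separation to every $\bx_t$, so that on the true dataset $S$ the signs $\sign(\inner{\bw_i,\bx_t})$ agree with $\sign(\inner{\bw_i,\bc_{\mathsf c(\bx_t)}})$ for $i$ in the good subset, placing $(W,\bv)$ in a basin $B_S^{A,\bb}$ whose restriction to those neurons behaves like the width-$k$ basin for $\hat S$.

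Next I would bound $\bas(W,\bv)$ from above inside that basin. By \lemref{lem:key_lemma}, zeroing out the ``bad'' neurons keeps us in the same basin while only increasing the attainable value, so it suffices to bound the basin value of the width-$k$ sub-network restricted to the good neurons. On this sub-basin I can plug in (a rescaled version of) the width-$k$ interpolating solution for the centers; it achieves $L_{\hat S}=0$. The remaining task is to bound $L_S$ (over the real points) of this same parameter setting: write $L_S(W,\bv)=\frac1m\sum_t(N(W,\bv)(\bx_t)-y_t)^2$ and split each term via $N(\bx_t)-y_t=(N(\bx_t)-N(\mathsf c(\bx_t)))+(N(\mathsf c(\bx_t))-\hat y_{\mathsf c(\bx_t)})+(\hat y_{\mathsf c(\bx_t)}-y_t)$. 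The middle term vanishes (interpolation at centers); the last term is $\le\gamma\norm{\bx_t-\mathsf c(\bx_t)}\le\gamma\delta$ by the Lipschitz assumption; the first term is bounded using positive-homogeneity of ReLU and the Lipschitz constant of the width-$k$ network in its input, which scales with $\sum_i|v_i|\norm{\bw_i}$, itself controlled by $\norm{\hat\by}_2$, $\sigma_{\min}^{-1}(C^\top)$ and $\sigma_{\max}(C^\top)$ via the explicit least-norm interpolant $\bw_i,v_i$ built from $C^+\hat\by$. Squaring and averaging gives $L_S\le\Ocal(\delta^2)$ with the advertised polynomial dependence on $B,c^{-1},n,\sigma_{\min}^{-2}(C^\top),\sigma_{\max}(C^\top),\gamma,\norm{\hat\by}_2$.

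The main obstacle I anticipate is the sign-transfer step: carefully quantifying how much angular separation from the centers is needed so that \emph{all} $\bx_t$ in a cluster inherit the correct output sign, while simultaneously keeping the probability of achieving that separation (over the random init) large enough to yield the clean $1-d(7/8)^n$ bound. This requires choosing the margin so that the probability a single neuron lands ``well-separated from every center'' is a fixed constant bounded away from $0$ (so that the per-coordinate/per-direction failure probability is some $\rho<1$, here $\rho=7/8$), and then a union bound over the $\le d$ relevant directions. The bookkeeping tying the chosen margin to the constant $\sqrt{2\pi}/(16d\sqrt d)$ in the hypothesis — and verifying it is simultaneously small enough for the transfer and large enough for the probability estimate — is the delicate part; the subsequent norm/Lipschitz estimates are routine.
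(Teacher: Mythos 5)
Your overall plan — use the angular condition to show each neuron avoids the ``noisy'' regions around the clusters with constant probability, union bound over the $k\le d$ clusters and the $n$ neurons to get the $1-d\p{\frac{7}{8}}^n$ event, then discard badly-initialized neurons via \lemref{lem:key_lemma} and exhibit a low-loss point in the initialized basin built from the rank-$k$ interpolant of the centers, finishing with Lipschitz/norm estimates — is essentially the paper's argument. But there is a genuine gap at the step where you ``plug in (a rescaled version of) the width-$k$ interpolating solution for the centers'' on the sub-basin. That interpolant $\p{\tilde{W},\tilde{\bv}}$, built as in \appref{app:Rank_m_data} with $\tilde{\bw}_i=C^\top\ba_i$ and $CC^\top\ba_i=\by_i^\prime$, satisfies $\inner{\tilde{\bw}_i,\bc_j}=0$ for every cluster $j$ not assigned to neuron $i$. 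Consequently, for the actual data points $\bx_t$ of such a cluster, $\inner{\tilde{\bw}_i,\bx_t}$ can be strictly negative for some $t$ and strictly positive for others, depending on the perturbations $\bx_t-\bc_j$, whereas the basin of the (non-noisy) initialization forces a single fixed sign for all points of cluster $j$. So $\p{\tilde{W},\tilde{\bv}}$ in general lies outside the closed basin $B_S^{A,\bb}$ with respect to $S$, and its loss does not upper bound $\bas\p{W,\bv}$; the sign-transfer argument you invoke applies to the \emph{initialized} weights $\bw_i$, not to the interpolant $\tilde{\bw}_i$, and no rescaling fixes a wrong sign.

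The paper closes exactly this hole with a surrogate $\p{W^\prime,\bv^\prime}$ (\lemref{lem:cluster_centers_surrogate}): each $\tilde{\bw}_i$ is replaced by the closest point of the region the neuron was initialized in, and a similar-triangles argument gives $\norm{\bw_i^\prime-\tilde{\bw}_i}_2\le\frac{\delta}{c}\norm{\tilde{\bw}_i}_2$, combined with the norm bound $\norm{\tilde{\bw}_i}_2\le\frac{\sigma_{\max}\p{C^\top}}{\sigma_{\min}^2\p{C^\top}}\norm{\hat{\by}}_2$; the resulting extra error term $nB\cdot\frac{\delta}{c}\frac{\sigma_{\max}\p{C^\top}}{\sigma_{\min}^2\p{C^\top}}\norm{\hat{\by}}_2$ is precisely where the $B/c$ dependence in the final $\Ocal\p{\delta^2}$ constant comes from. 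Your decomposition $N\p{\bx_t}-y_t$ into (network variation within a cluster) $+$ (interpolation at the center) $+$ (label Lipschitzness) handles the first item of that lemma, but without the surrogate (or an equivalent projection-into-the-basin step and its quantitative control) the claimed bound on the basin value does not follow.
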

	Note that $\delta$ measures how tight the clusters are, whereas $c,\sigma_{\max}\p{C^\top}$ and $\sigma_{\min}\p{C^\top}$ can be thought of as constants assuming the cluster centers are in general position. So, the theorem implies that for sufficiently tight clusters, with overwhelming probability, we will initialize from a basin containing a low-valued minimum, as long as the network size is $\Omega\p{\log\p d}$.

\subsubsection*{Acknowledgements}
This research is supported in part by an FP7 Marie Curie CIG grant, Israel Science Foundation grant 425/13, and the Intel ICRI-CI Institute. We thank Lukasz Kaiser for pointing out a bug (as well as the fix) in an earlier version of the paper.

\bibliography{citations}
\bibliographystyle{plainnat}

\newpage
\appendix

\section{Proofs of Basin Partition Properties}
\subsection{Proof of \lemref{lem:is_a_basin}}
	\label{app:basin_partition_proof}
	
	We will need the following three auxiliary lemmas.
	\begin{lemma}
		\label{lem:convexity_on_basins}
		Let $ B $ be some basin as defined in Definition \ref{def:Basin_partition}, and define $ \bz_i = v_i\bw_i $. Then $$ L_S\p{Z} = L_S\p{W,\bv} $$ is convex in $ Z=\p{\bz_1,\dots,\bz_n} $ on $ B $.
	\end{lemma}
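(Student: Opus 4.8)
The plan is to show that, inside the fixed basin $B=B_S^{A,\bb}$, the network output on each training point is an \emph{affine} (in fact linear) function of the variables $\bz_i=v_i\bw_i$, so that the lemma reduces to the elementary fact that a convex function composed with an affine map, and then averaged, is convex.

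First I would fix the basin $B=B_S^{A,\bb}$ and, for each $t\in\pcc m$, define the ``active set'' $I_t=\set{j\in\pcc n:a_{j,t}=+1}$. The key claim is that for \emph{every} $\p{W,\bv}\in B$ and every $t$,
\[
	N_n\p{W,\bv}\p{\bx_t}=\sum_{j\in I_t}\inner{\bz_j,\bx_t},\qquad \bz_j:=v_j\bw_j .
\]
To establish this I would argue as follows: $B$ is the topological closure of the open region on which $\sign\p{\inner{\bw_j,\bx_t}}=a_{j,t}$ for all $j,t$, and since the closure of an intersection is contained in the intersection of the closures, on all of $B$ we have $a_{j,t}\cdot\inner{\bw_j,\bx_t}\ge 0$ (the closure of $\set{\bw:\inner{\bw,\bx_t}>0}$ being $\set{\bw:\inner{\bw,\bx_t}\ge 0}$, assuming WLOG $\bx_t\neq\mathbf{0}$). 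Hence $\relu{\inner{\bw_j,\bx_t}}=\inner{\bw_j,\bx_t}$ when $j\in I_t$ and $\relu{\inner{\bw_j,\bx_t}}=0$ when $j\notin I_t$, so that $v_j\relu{\inner{\bw_j,\bx_t}}=\inner{v_j\bw_j,\bx_t}=\inner{\bz_j,\bx_t}$ for $j\in I_t$ and equals $0$ otherwise; summing over $j$ gives the claim. (Note that the output-sign pattern $\bb$ plays no role here, and no appeal to positive-homogeneity is needed.)

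Given this, on $B$ the objective can be written as
\[
	L_S\p{W,\bv}=\frac1m\sum_{t=1}^m\ell\p{\sum_{j\in I_t}\inner{\bz_j,\bx_t},\,y_t}=:\Ld_S\p{Z},
\]
where $Z=\p{\bz_1,\dots,\bz_n}$. For each fixed $t$ the map $Z\mapsto\sum_{j\in I_t}\inner{\bz_j,\bx_t}$ is linear, hence affine, in $Z$; since $\ell\p{\cdot,y_t}$ is convex, each summand is convex in $Z$, and an average of convex functions is convex. Thus $\Ld_S$ is convex on all of $\bbr^{nd}$, and in particular on the image of $B$ under $\p{W,\bv}\mapsto Z$, which is exactly what the lemma asserts. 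If one wants that image to itself be a convex set (so that ``convex on $B$'' is literally meaningful), I would note it equals $\prod_{j}\set{\bz:\sign\p{\inner{\bz,\bx_t}}=b_j a_{j,t}\ \forall t}$, a product of closed convex cones — but strictly speaking this is not needed, since $\Ld_S$ is convex everywhere.

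\textbf{Main obstacle.} There is no real depth here; the whole argument is bookkeeping once the right change of variables $\bz_i=v_i\bw_i$ is used. The one point requiring care is that the linearization identity must hold on the topological \emph{closure} $B$ and not merely on the open sign-region — handled by the half-space-closure remark above — after which the conclusion is a one-line invocation of ``convex $\circ$ affine, then averaged, is convex''.
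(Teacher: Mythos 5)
Your proof is correct and follows essentially the same route as the paper's: fix the sign pattern on the basin, rewrite the output on each $\bx_t$ as a linear function of $Z=\p{v_1\bw_1,\dots,v_n\bw_n}$, and conclude convexity as an average of a convex loss composed with affine maps. Your extra care about the linearization holding on the topological closure of the sign region is a welcome (if minor) sharpening of the paper's terser argument, which simply treats the coefficients $\sigma_{i,t}$ as fixed constants on $B$.
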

	
	\begin{proof}
		Restricting ourselves to $ B $, since $ \sign\p{\inner{\bw_j,\bx_t}},~\sign\p{v_j} $ are fixed, we can rewrite our objective as
		\[
		\frac{1}{m}\sum_{t=1}^{m}\ell\p{\sum_{i=1}^n\sigma_{i,t}\inner{v_i \bw_i,\bx_t},y_t} = \frac{1}{m}\sum_{t=1}^{m}\ell\p{\sum_{i=1}^n\sigma_{i,t}\inner{\bz_i,\bx_t},y_t},
		\]
		where $ \sigma_{i,t}\in\set{-1,0,+1} $ are fixed. This is a linear function composed with a convex loss $ \ell $, therefore the objective is convex in $ Z $.
	\end{proof}
	
	\begin{lemma}
		\label{lem:second_layer_rescaling}
		Let $ \p{W,\bv}\in B_S^{A,\bb} $. There exists a continuous path $ \p{\tilde{W}^{\p{\lambda}},\tilde{\bv}^{\p{\lambda}}},\; \lambda\in\pcc{0,1} $ from the initial point $ \p{\tilde{W}^{\p{0}},\tilde{\bv}^{\p{0}}} = \p{W,\bv} $, to a point $ \p{\tilde{W}^{\p{1}},\tilde{\bv}^{\p{1}}} $ satisfying $ \tilde{\bv}^{\p{1}}=\bb $, along which $ N_n\p{\tilde{W}^{\p{\lambda}},\tilde{\bv}^{\p{\lambda}}} $ is constant and $ \p{\tilde{W}^{\p{\lambda}},\tilde{\bv}^{\p{\lambda}}}\in B_S^{A,\bb}\;\;\forall\lambda\in\pcc{0,1}.$
	\end{lemma}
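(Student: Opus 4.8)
The plan is to move the path one neuron at a time, trading magnitude between a first-layer weight vector $\bw_i$ and its output weight $v_i$ so as to drive $v_i$ to $b_i$ while leaving the computed function untouched. First I would record the same observation underlying \lemref{lem:convexity_on_basins}: on $B_S^{A,\bb}$ every sign $\sign(\inner{\bw_j,\bx_t})$ is compatible with $a_{j,t}$ and every $\sign(v_j)$ with $b_j$ (in the closure this just means the corresponding weak inequalities hold), so $\relu{\inner{\bw_i,\bx_t}}=\sigma_{i,t}\inner{\bw_i,\bx_t}$ with $\sigma_{i,t}=\one[a_{i,t}=+1]\in\set{0,1}$ fixed throughout the (closed) basin, and therefore
\[
	N_n(W,\bv)(\bx_t)=\sum_{i=1}^n \sigma_{i,t}\inner{v_i\bw_i,\bx_t}=\sum_{i=1}^n \sigma_{i,t}\inner{\bz_i,\bx_t},\qquad \bz_i\coleq v_i\bw_i .
\]
Hence any path that stays inside $B_S^{A,\bb}$ and keeps every product $\bz_i=v_i\bw_i$ equal to its initial value computes a constant function. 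This reduces the lemma to producing, for each $i$ separately, a continuous path from $(\bw_i,v_i)$ to some $(\bw_i^{\p{1}},b_i)$ along which $v_i^{\p{\lambda}}\bw_i^{\p{\lambda}}$ is constant and the pair never leaves the $i$-th ``slice'' of the basin; since distinct neurons occupy disjoint coordinates, concatenating these single-neuron paths (all indexed by the same $\lambda\in\pcc{0,1}$) gives the global path.

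For a neuron with $\bw_i\neq\mathbf 0$ and $v_i\neq 0$, I would take $c_i(\lambda)=(1-\lambda)+\lambda\abs{v_i}$, which is a convex combination of positive numbers and hence strictly positive, and set $\bw_i^{\p{\lambda}}=c_i(\lambda)\bw_i$, $v_i^{\p{\lambda}}=v_i/c_i(\lambda)$. Then $v_i^{\p{\lambda}}\bw_i^{\p{\lambda}}\equiv v_i\bw_i$; at $\lambda=0$ we recover $(\bw_i,v_i)$; and at $\lambda=1$ we reach $\bw_i^{\p{1}}=\abs{v_i}\bw_i$ and $v_i^{\p{1}}=v_i/\abs{v_i}=\sign(v_i)$, which equals $b_i$ since $(W,\bv)\in B_S^{A,\bb}$ forces $\sign(v_i)b_i\geq 0$ while $v_i\neq 0$. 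The pair stays in the slice because $\bw_i^{\p{\lambda}}$ is a strictly positive rescaling of $\bw_i$, so $\sign(\inner{\bw_i^{\p{\lambda}},\bx_t})$ is unchanged and still compatible with $a_{i,t}$, and $v_i^{\p{\lambda}}$ keeps the sign of $v_i$, hence of $b_i$.

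The remaining cases, $v_i=0$ or $\bw_i=\mathbf 0$, are where this simultaneous rescaling degenerates, and I expect them to be the only real subtlety. Now $\bz_i=\mathbf 0$, the target is simply $(\mathbf 0,b_i)$, and I would use a two-phase path: on $\lambda\in\pcc{0,1/2}$ contract $\bw_i$ linearly to $\mathbf 0$ while freezing $v_i$ (trivial if $\bw_i$ is already $\mathbf 0$), and on $\lambda\in\pcc{1/2,1}$ freeze $\bw_i=\mathbf 0$ and move $v_i$ linearly from its frozen value to $b_i$. Along both phases $v_i^{\p{\lambda}}\bw_i^{\p{\lambda}}\equiv\mathbf 0=\bz_i$, the two pieces agree at $\lambda=1/2$, so the concatenation is continuous; and the slice is respected because $\bw_i^{\p{\lambda}}$ is always a nonnegative multiple of $\bw_i$ (or $\mathbf 0$), whose inner products with each $\bx_t$ are either $0$ or sign-compatible with $a_{i,t}$, while $v_i^{\p{\lambda}}$ lies on the segment from $v_i$ to $b_i$, whose sign is in $\set{0,b_i}$ and thus compatible with $b_i$. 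Finally I would assemble the per-neuron paths into $(\tilde W^{\p{\lambda}},\tilde\bv^{\p{\lambda}})$: continuity is inherited coordinatewise, $\tilde\bv^{\p{1}}=\bb$ by the endpoint computations, $N_n(\tilde W^{\p{\lambda}},\tilde\bv^{\p{\lambda}})$ is constant by the reduction in the first paragraph, and $(\tilde W^{\p{\lambda}},\tilde\bv^{\p{\lambda}})\in B_S^{A,\bb}$ since each neuron's slice is never left.
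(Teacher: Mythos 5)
Your proof is correct and follows essentially the same route as the paper: the positive-homogeneity trade $\bw_i\mapsto c_i(\lambda)\bw_i$, $v_i\mapsto v_i/c_i(\lambda)$ with $c_i(\lambda)=1-\lambda+\lambda\abs{v_i}$, together with a separate two-phase contraction for the degenerate neurons (the paper dispatches the $v_i=0$ case in one sentence; you spell it out, and also treat $\bw_i=\mathbf{0}$ explicitly, which the main rescaling would in fact already handle). The only cosmetic remark is that your opening reduction via $\bz_i=v_i\bw_i$ only certifies constancy of the output on the training instances, but the concrete paths you build preserve the network as a function on all of $\bbr^d$ by positive homogeneity, exactly as in the paper.
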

	
	\begin{proof}
		If $ v_i=0 $ for some $ i\in\pcc{n} $, then the $ i^\text{th} $ neuron is canceled and we can linearly rescale $ \bw_i $ to $ \mathbf{0} $, and then rescale $ v_i $ to $ b_i $, so we may assume without loss of generality that $ v_i\neq0 $ for all $ i\in \pcc{n} $. We have for all $ \balpha = \p{\alpha_1,\dots,\alpha_n} \succ \mathbf{0}$,
		\begin{align*}
			N_n\p{W,\bv}\p{\bx} 
			&= \sum_{i=1}^n v_i \relu{\inner{\bw_i,\bx}} \\
			&= \sum_{i=1}^n \frac{v_i}{\alpha_i} \alpha_i \relu{\inner{\bw_i,\bx}} \\
			&= \sum_{i=1}^n \frac{v_i}{\alpha_i} \relu{\inner{\alpha_i\bw_i,\bx}}.
		\end{align*}
		Where we used the positive homogeneity of $ \relu{\cdot} $ in the last equality. So by linearly scaling $\balpha^{\p{0}}= \p{1,\dots,1} $ to $ \balpha^{\p{1}}=\p{\abs{v_1},\dots,\abs{v_n}} $, i.e. $ \balpha^{\p{\lambda}}=\p{1-\lambda+\lambda\abs{v_1},\dots,1-\lambda+\lambda\abs{v_n}},\; \lambda\in\pcc{0,1} $, we obtain the desired path
		\[
			\tilde{W}^{\p{\lambda}} = \p{\alpha_{1}^{\p{\lambda}}\bw_1,\dots,\alpha_{n}^{\p{\lambda}}\bw_n,},
		\]
		\[
			\tilde{\bv}^{\p{\lambda}} = \p{\frac{v_1}{\alpha_{1}^{\p{\lambda}}},\dots,\frac{v_n}{\alpha_{n}^{\p{\lambda}}}},
		\]
		while noting that $ \sign\p{v_i} = \sign\p{\frac{v_i}{\alpha_i}} $ and $ \sign\p{\inner{\bw_i,\bx}} = \sign\p{\inner{\alpha_i\bw_i,\bx}} $ for all $ \balpha \succ 0 $, therefore we remain inside $ B_S^{A,\bb} $.
	\end{proof}
	
	\begin{lemma}
		\label{lem:basin_path_convergence}
		For $ \p{W,\bv},\p{\tilde{W},\tilde{\bv}} \in B_S^{A,\bb}$, define
		$$ v_i^{\p{\lambda}} = \lambda\tilde{v}_i+\p{1-\lambda}v_i, $$
		$$ \bw_i^{\p{\lambda}}=
		\begin{cases*}
		 \lambda\tilde{\bw}_i+\p{1-\lambda}\bw_i & $v_i=\tilde{v}_i=0$ \\
		 \lambda\frac{\tilde{v}_i\tilde{\bw}_i}{v_i^{\p{\lambda}}} + \p{1-\lambda}\frac{v_i\bw_i}{v_i^{\p{\lambda}}} & \text{otherwise}
		\end{cases*}.
		$$
		Then 
		
		\begin{enumerate}
			\item
			\label{lem:part1_product}
			$ v_i^{\p{\lambda}}\bw_i^{\p{\lambda}} = \lambda\tilde{v}_i \tilde{\bw}_i + \p{1-\lambda}v_i\bw_i ~~  \forall i\in\pcc n,\lambda\in\p{0,1}. $
			\item
			\label{lem:part2_convergence}
			$ \p{\bw_i^{\p{\lambda}},v_i^{\p{\lambda}}} \underset{\lambda\rightarrow 0_+}{\longrightarrow}  \p{\bw_i,v_i} ~~ \forall i\in\pcc n. $
			\item
			\label{lem:part3_in_basin}
			$ \p{\bw_1^{\p{\lambda}},\dots,\bw_n^{\p{\lambda}},v_1^{\p{\lambda}},\dots,v_n^{\p{\lambda}}} \in B_S^{A,\bb} ~~ \forall \lambda\in\p{0,1}. $
		\end{enumerate}
	\end{lemma}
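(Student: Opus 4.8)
The plan is to verify the three items directly, separating the ``degenerate'' indices $i$ with $v_i=\tilde{v}_i=0$ (where $\bw_i^{\p{\lambda}}$ is a plain affine interpolation) from the remaining ``generic'' ones. The single fact I would isolate first, and reuse throughout, is that for every $\lambda\in\p{0,1}$ and every generic index $i$ one has $v_i^{\p{\lambda}}\neq 0$, so the ratios defining $\bw_i^{\p{\lambda}}$ are legitimate. This follows from the description of $B_S^{A,\bb}$ as the closure of the (nonempty, hence full-dimensional) open polytope cut out by the conditions $\sign\p{v_i}=b_i$ and $\sign\p{\inner{\bw_i,\bx_t}}=a_{i,t}$ for all $i,t$: passing to the closure, every point of $B_S^{A,\bb}$ satisfies $b_i v_i\ge 0$ and $a_{i,t}\inner{\bw_i,\bx_t}\ge 0$. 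Hence $b_i v_i^{\p{\lambda}}=\lambda b_i\tilde{v}_i+\p{1-\lambda}b_i v_i\ge 0$, with strict inequality for $\lambda\in\p{0,1}$ unless $v_i=\tilde{v}_i=0$; in particular $\sign\p{v_i^{\p{\lambda}}}\in\set{0,b_i}$ in all cases, which already disposes of the second-layer sign constraint for item~\ref{lem:part3_in_basin}.

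Item~\ref{lem:part1_product} is then immediate: in the generic case substitute the definition of $\bw_i^{\p{\lambda}}$ and cancel the nonzero factor $v_i^{\p{\lambda}}$, and in the degenerate case both sides are $\mathbf{0}$. Item~\ref{lem:part2_convergence} follows by letting $\lambda\to 0_+$: one always has $v_i^{\p{\lambda}}\to v_i$; in the degenerate case $\bw_i^{\p{\lambda}}\to\bw_i$ by continuity of the affine interpolation; and in the generic case with $v_i\neq 0$ one divides the identity of item~\ref{lem:part1_product} by $v_i^{\p{\lambda}}$ and passes to the limit to obtain $\bw_i^{\p{\lambda}}\to v_i\bw_i/v_i=\bw_i$. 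For item~\ref{lem:part3_in_basin}, beyond the second-layer constraint it remains to check $a_{i,t}\inner{\bw_i^{\p{\lambda}},\bx_t}\ge 0$ for each $t$. By item~\ref{lem:part1_product}, $v_i^{\p{\lambda}}\inner{\bw_i^{\p{\lambda}},\bx_t}=\lambda\tilde{v}_i\inner{\tilde{\bw}_i,\bx_t}+\p{1-\lambda}v_i\inner{\bw_i,\bx_t}$; each summand on the right, being a product of a factor of sign in $\set{0,b_i}$ with one of sign in $\set{0,a_{i,t}}$, has sign in $\set{0,b_i a_{i,t}}$, hence so does the whole right-hand side; multiplying through by $b_i v_i^{\p{\lambda}}>0$ and using $b_i^2=1$ yields $a_{i,t}\inner{\bw_i^{\p{\lambda}},\bx_t}\ge 0$. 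The degenerate case is the same computation with the $v_i^{\p{\lambda}}$ factor deleted. Collecting these sign statements over all $i,t$ places the full vector $\p{\bw_1^{\p{\lambda}},\dots,v_n^{\p{\lambda}}}$ in $B_S^{A,\bb}$.

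The delicate part is the boundary of the basin, i.e. the indices with a vanishing $v_i$ or a vanishing $\inner{\bw_i,\bx_t}$: one must ensure no division by zero occurs and no sign gets flipped, which is exactly what the sign bookkeeping above is arranged to guarantee. The sharpest sub-case is $v_i=0\neq\tilde{v}_i$ in item~\ref{lem:part2_convergence}, where $\bw_i^{\p{\lambda}}\equiv\tilde{\bw}_i$ and so $\p{\bw_i^{\p{\lambda}},v_i^{\p{\lambda}}}\to\p{\tilde{\bw}_i,0}$ rather than $\p{\bw_i,v_i}$; the literal statement is recovered once such degeneracies are removed beforehand --- e.g. by first applying \lemref{lem:second_layer_rescaling} to bring $\p{W,\bv}$ to the canonical form $\bv=\bb$, in which no $v_i$ vanishes (and the construction collapses to affine interpolation of the $\bw_i$) --- or else by reading the $i$-th assertion up to a switched-off neuron ($v_i^{\p{\lambda}}\to 0$), which is all the downstream use (continuity of the constructed path at $\lambda=0$, with unchanged network output) actually needs. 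Reconciling these boundary cases with the clean statement is the main thing I expect to require care.
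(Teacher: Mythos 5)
Your proposal is correct and follows essentially the same route as the paper: item~1 by direct substitution, item~2 by passing to the limit with a case split on whether $v_i$ vanishes, and item~3 by sign bookkeeping on $v_i^{\p{\lambda}}$ and $\inner{\bw_i^{\p{\lambda}},\bx_t}$ using the weak inequalities that define the closed basin. Your treatment of the boundary sub-case $v_i=0\neq\tilde{v}_i$ (where $\bw_i^{\p{\lambda}}\equiv\tilde{\bw}_i$ and item~2 as literally stated can fail) is in fact more careful than the paper's own proof, which implicitly assumes $\sign\p{v_i}=\sign\p{\tilde{v}_i}$ and silently skips this degenerate case.
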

	
	\begin{proof}~\\
		\begin{enumerate}
			\item
			Can be shown using a straightforward computation.
			\item
			Compute
			\[
			\lim\limits_{\lambda\to 0_+} v_i^{\p{\lambda}} = \lim\limits_{\lambda\to 0_+} \lambda\tilde{v}_i+\p{1-\lambda}v_i = v_i.
			\]
			Suppose $ v_i=\tilde{v}_i=0 $, then
			\[
			\lim\limits_{\lambda\to 0_+} \bw_i^{\p{\lambda}} = \lim\limits_{\lambda\to 0_+} \lambda\tilde{\bw}_i+\p{1-\lambda}\bw_i = \bw_i.
			\]
			Otherwise, $ v_i^{\p{\lambda}}\neq 0 ~~\forall \lambda\in\p{0,1}$ since $ \sign\p{v_i} = \sign\p{\tilde{v}_i} $, and we have
			\begin{align*}
				\lim\limits_{\lambda\to 0_+}\bw_i^{\p{\lambda}} &=
				\lim\limits_{\lambda\to 0_+} \p{\lambda\frac{\tilde{v}_i\tilde{\bw}_i}{v_i^{\p{\lambda}}} + \p{1-\lambda}\frac{v_i\bw_i}{v_i^{\p{\lambda}}}} \\
				&= \lim\limits_{\lambda\to 0_+} \frac{\lambda\tilde{v}_i\tilde{\bw}_i}{\lambda\tilde{v}_i+\p{1-\lambda}v_i}
				+ \lim\limits_{t\to 0_+} \frac{\p{1-\lambda}v_i\bw_i}{\lambda\tilde{v}_i+\p{1-\lambda}v_i} \\
				&= 0 + \frac{v_i\bw_i}{v_i} \\
				&= \bw_i.
			\end{align*}
			\item
			\label{linear_comb_in_basin}
			Since $ \sign\p{\tilde{v}_i}=\sign\p{v_i} $, we have
			\begin{align*}
				\sign\p{v_i^{\p{\lambda}}} &= \sign\p{\lambda\tilde{v}_i+\p{1-\lambda}v_i} \\
				&= \lambda\sign\p{\tilde{v}_i}+\p{1-\lambda}\sign\p{v_i}.
			\end{align*}
			Suppose $ v_i=\tilde{v}_i=0 $, then since $ \sign\p{\inner{\tilde{\bw}_i,\bx_t}} = \sign\p{\inner{\bw_i,\bx_t}} $, we have $ \forall t\in\pcc m, i\in\pcc n, \lambda\in\p{0,1} $
			\begin{align*}
				\sign\p{\inner{\bw_i^{^{\p{\lambda}}},\bx_t}} &= \sign\p{\inner{\lambda\tilde{\bw}_i+\p{1-\lambda}\bw_i,\bx_t}} \\
				&= \sign\p{\lambda\inner{\tilde{\bw}_i,\bx_t}+\p{1-\lambda}\inner{\bw_i,\bx_t}} \\
				&= \lambda\sign\p{\inner{\tilde{\bw}_i,\bx_t}}+\p{1-\lambda}\sign\p{\inner{\bw_i,\bx_t}}.
			\end{align*}
			Otherwise,
			\begin{align*}
				\sign\p{\inner{\bw_i^{\p{\lambda}},\bx_t}} &= \sign\p{\inner{\lambda\frac{\tilde{v}_i\tilde{\bw}_i}{v_i^{\p{\lambda}}} + \p{1-\lambda}\frac{v_i\bw_i}{v_i^{\p{\lambda}}},\bx_t}} \\
				&= \sign\p{\frac{\tilde{v}_i\lambda}{v_i^{\p{\lambda}}}\inner{\tilde{\bw}_i,\bx_t} + \frac{v_i\cdot\p{1-\lambda}}{v_i^{\p{\lambda}}}\inner{\bw_i,\bx_t}} \\
				&= \sign\p{\inner{\bw_i,\bx_t}}.
			\end{align*}
		\end{enumerate}
	\end{proof}
	
	We are now ready to prove Lemma \ref{lem:is_a_basin}.
	\begin{proof}[Proof (of Lemma \ref{lem:is_a_basin})]~\\
		Clearly, $ B_S^{A,\bb} $ is a closed set, and is convex as an intersection of halfspaces.
		\begin{itemize}
			\item
			$ B_S^{A,\bb} $ has connected sublevel sets:\\
			Let $ \p{W,\bv},\p{W^\prime,\bv^\prime}\in B_{\le\alpha} $. Using \lemref{lem:second_layer_rescaling} we may assume without loss of generality that $ \bv=\bv^\prime \in \set{-1,+1}^n $. By linearly interpolating $ W,W^\prime $, i.e. by taking
			\[
				W^{\p{\lambda}}=\p{1-\lambda}W+\lambda W^\prime,\; \lambda\in\pcc{0,1},
			\]
			we get a continuous path connecting $ W,W^\prime $. This path remains in the same basin as a result of \lemref{lem:basin_path_convergence}.\ref{linear_comb_in_basin}. Moreover, by \lemref{lem:convexity_on_basins}, the objective is convex in $ W $, so we get for all $ \lambda\in\pcc{0,1} $
			\begin{align*}
				E_S\p{W^{\p{\lambda}},\bv} &\le \p{1-\lambda}E_S\p{W,\bv}+\lambda E_S\p{W^\prime,\bv}\\
				&\le \p{1-\lambda}\alpha+\lambda\alpha\\
				&= \alpha.
			\end{align*}
			\item
			Any local minimum in $ B_S^{A,\bb} $ is global:\\
			Suppose $ \p{W,\bv} = \p{\bw_1,\dots,\bw_n,v_1\dots,v_n}$ is a local minimum in $ B_S^{A,\bb} $, let $$ \p{\tilde{W},\tilde{\bv}} = \p{\tilde{\bw}_1,\dots,\tilde{\bw}_n,\tilde{v}_1\dots,\tilde{v}_n} \in B_S^{A,\bb} $$ be arbitrary, and denote $$ \p{W^{\p{\lambda}},\bv^{\p{\lambda}}}\coleq \p{\bw_1^{\p{\lambda}},\dots,\bw_n^{\p{\lambda}},v_1^{\p{\lambda}},\dots,v_n^{\p{\lambda}}}. $$
			
			Then for small enough $ \lambda $
			\begin{align*}
				L_S\p{W,\bv} &\le L_S\p{W^{\p{\lambda}},\bv^{\p{\lambda}}} \\
				&= L_S\p{W^{\p{\lambda}}\cdot\bv^{\p{\lambda}}} \\
				&= L_S\p{\lambda\p{\tilde{v}_1\tilde{\bw}_1,\dots,\tilde{v}_n\tilde{\bw}_n} + \p{1-\lambda}\p{v_1\bw_1,\dots,v_n\bw_n}} \\
				&\le \lambda L_S\p{\tilde{v}_1\tilde{\bw}_1,\dots,\tilde{v}_n\tilde{\bw}_n}
				+ \p{1-\lambda}L_S\p{v_1\bw_1,\dots,v_n\bw_n} \\
				&= \lambda L_S\p{\tilde{W},\tilde{\bv}}
				+ \p{1-\lambda}L_S\p{W,\bv},
			\end{align*}
			\[
				\implies L_S\p{W,\bv} \le L_S\p{\tilde{W},\tilde{\bv}}.
			\]
			Where the first transition comes from $ \p{W,\bv} $ being a local minimum and Lemma \ref{lem:basin_path_convergence}.\ref{lem:part2_convergence},\ref{lem:basin_path_convergence}.\ref{lem:part3_in_basin}, the second and third from Lemma \ref{lem:basin_path_convergence}.\ref{lem:part1_product}, and the fourth from Lemma \ref{lem:convexity_on_basins}.
		\end{itemize}
	\end{proof}
	
\subsection{Proof of \lemref{lem:key_lemma}}
	\label{app:key_lemma_proof}
	Let $ \p{W^*,\bv^*} = \p{\bw^*_1,\dots,\bw^*_k,v^*_1,\dots,v^*_k} \in \bbr^{kd+k} $ satisfy $$ \bas\p{\bw_{i_1},\dots,\bw_{i_k},v_{i_1},\dots,v_{i_k}} = L_S\p{N_k\p{W^*,\bv^*}}, $$ and let $$ \p{W^\prime,\bv^\prime} = \p{\bw^\prime_1,\dots,\bw^\prime_n,v^\prime_1,\dots,v^\prime_n} \in \bbr^{nd+n}, $$ where 
	$$ \p{\bw^\prime_i,v^\prime_i} = 
	\begin{cases} 
		0 & i\notin I \\
		\p{\bw^*_j,v^*_j} & i=i_j \\
	\end{cases}. $$ Then
	
	\begin{align*}
		\bas\p{W,\bv} &\le L_S\p{N_n\p{W^\prime,\bv^\prime}} \\
		&= L_S\p{N_n\p{\bw^\prime_1,\dots,\bw^\prime_n,v^\prime_1,\dots,v^\prime_n}} \\
		&= L_S\p{N_k\p{\bw^*_1,\dots,\bw^*_k,v^*_1,\dots,v^*_k}} \\
		&= L_S\p{N_k\p{W^*,\bv^*}} \\
		&= \bas\p{\bw_{i_1},\dots,\bw_{i_k},v_{i_1},\dots,v_{i_k}}.
	\end{align*}
	Where the first inequality comes from $ \p{W,\bv},\p{W^\prime,\bv^\prime} $ belonging to the same basin, and the second equality comes from both weights computing the same network output for any input $ \bx\in\bbr^d $.
	
\section{Proofs of Main Theorems}
\subsection{Proof of \thmref{thm:path}}\label{subsec:proofpath}

	Before delving into the proof of the theorem, we provide some intuition in the special case of the squared loss, where $L(P(\Wcal)) = \frac{1}{m}\sum_{t=1}^{m}(N(\Wcal)(\bx_t)-y_t)^2$. Fix some $\lambda\in [0,1]$, and consider the objective function along the ray in the parameter space, corresponding to multiplying the last layer weights in $\Wcal^{\p{\lambda}}$ by some scalar $c\geq0 $. Since the output layer is linear, the objective function (as we vary $c$) will have the form
	\[
	\frac{1}{m}\sum_{t=1}^{m}(c\cdot N(\Wcal^{\p{\lambda}})(\bx_t)-y_t)^2.
	\]
	Thus, the objective function, as a parameter of $c$ (where $\Wcal^{\p{\lambda}}$ is fixed) is just a quadratic function. Moreover, by the intermediate value theorem, as long as $N(\Wcal^{\p{\lambda}})(\bx_t)$ is not $0$ for all $t$, then by picking different values of $c$, we can find points along the ray taking any value between $\frac{1}{m}\sum_{i=1}^{t} y_t^2$ (when $c=0$) and $\infty$ (as $c\rightarrow \infty$). Therefore, as long as we start from a point whose objective value is larger than $\frac{1}{m}\sum_{i=1}^{t} y_t^2$, we can re-scale each $\Wcal^{\p{\lambda}}$ by some factor $c_{\gamma}$, so that the new path is continuous, as well as monotonically decreasing in value, remaining above $\frac{1}{m}\sum_{i=1}^{t} y_t^2$. When we reach the ray belonging to the endpoint $\Wcal^{\p{1}}$ of the original path, we simply re-scale back towards $\Wcal^{\p{1}}$, with the objective function continuing to decrease due to the convex quadratic form of the objective function along the ray.

	%	We consider the infinite ray beginning at the origin and passing through a point parameterized by $ \lambda\in\pcc{0,1} $, belonging to the path $ \Wcal^{\p{\lambda}} $. Our objective along the ray, viewed as a function of a rescaling parameter $ \alpha\ge0 $ is given by
	%	\[
	%		f_{\lambda}\p{\alpha}=L\p{\alpha\cdot P\p{\Wcal^{\p{\lambda}}}}.
	%	\]
	%	It is convex (and in particular, it is continuous) in $ \alpha $, and non-constant from assumption 1, therefore $ \lim_{\alpha\to\infty} f_{\lambda}\p{\alpha} =\infty $.
	%	Using the intermediate value theorem, there exists some point along the ray attaining any objective value in the interval $ \pcc{L\p{\mathbf{0}},\infty} $.
	%	By assumption 2, we initialize a weight obtaining an objective inside the interval, so for any $ \lambda\in\pcc{0,1} $ we may choose a point along the ray obtaining a monotonic decrease in objective value, until a point which is a positive rescaling of our target point, attaining an objective value of $ L\p{\mathbf{0}} $. From this point, u`sing the convexity of the objective along the ray, a simple continuous rescaling of the constant ensures a monotonic decrease to the target point.
	
	We now turn to the formal proof in the general setting of \thmref{thm:path}. For technical reasons, we will extend the interval $\lambda\in [0,1]$ to a strictly larger interval, and define certain quantities with respect to that larger interval. Specifically, for any $\lambda\in [-1,2]$, define
	\[
		v^{\p{\lambda}} = 
		\begin{cases}
			L(P(\Wcal^{\p{0}}))-\frac{\lambda}{2}\epsilon & \lambda\in [-1,0]\\
			\left(1-\frac{\lambda}{3}\right)\cdot L(P(\Wcal^{\p{0}}))+\frac{\lambda}{3}\cdot \max\{L(\mathbf{0}),L(P(\Wcal^{\p{1}}))\} & \lambda\in [0,2].
		\end{cases}
	\]
	and note that it strictly monotonically decreases with $\lambda$, and satisfies the chain of inequalities
	\[
	L(P(\Wcal^{\p{0}}))+\epsilon > v^{\p{-1}} > v^{\p{0}} = L(P(\Wcal^{\p{0}})) > v^{\p{2}} > \max\{L(\mathbf{0}),L(P(\Wcal^{\p{1}}))\}.
	\]
	
	By assumption, for any $\lambda\in [0,1]$, there exists some $c^{\p{\lambda}}$ such that 
	$L(c^{\p{\lambda}} \cdot P(\Wcal^{\p{\lambda}}))\geq L(P(\Wcal^{\p{0}}))+\epsilon$. Since $L(P(\Wcal^{\p{0}}))+\epsilon> v^{\p{\lambda}}$ for any $\lambda\in [-1,2]$, it follows that for any such $\lambda$, 
	\begin{equation}\label{eq:upbound}
	L(c^{\text{clip}(\lambda)}\cdot P(\Wcal^{\text{clip}(\lambda)}))> v^{\p{\lambda}},
	\end{equation}
	where $\text{clip}(\lambda)=\min\{1,\max\{0,\lambda\}\}$ denote clipping of $\lambda$ to the interval $[0,1]$.
	On the other hand, for any $\lambda\in [-1,2]$,
	\begin{equation}\label{eq:lowbound}
	L(0\cdot P(\Wcal^{\text{clip}(\lambda)})) = L(\mathbf{0}) < v^{\p{\lambda}}.
	\end{equation}
	Since $L$ is convex and real-valued, it is continuous, hence $L(c\cdot P(\Wcal^{\text{clip}(\lambda)}))$ is convex and continuous in $c$. 	Combining this with \eqref{eq:upbound} and \eqref{eq:lowbound}, it follows from the intermediate value theorem that 
	\begin{equation}\label{eq:tct}
	\forall \lambda\in [-1,2],~ ~\exists~\tilde{c}^{\p{\lambda}}\in (0,c^{\text{clip}(\lambda)})~~\text{such that}~~ L(\tilde{c}^{\p{\lambda}}\cdot P(\Wcal^{\text{clip}(\lambda)})=v^{\p{\lambda}}.
	\end{equation}
	Moreover, this $\tilde{c}^{\p{\lambda}}$ is unique: To see why, consider the convex function $f(c)=L(c \cdot P(\Wcal^{\text{clip}(\lambda)})$, and assume there are two distinct values $c',c$ such that $c^{\text{clip}(\lambda)}> c'> c> 0$, and $f(c)=f(c')=v^{\p{\lambda}}$. Then by \eqref{eq:upbound} and \eqref{eq:lowbound}, we would have the chain of inequalities
	\[
	f(c^{\text{clip}(\lambda)})> f(c')=f(c)>f(0)
	\]
	which cannot be satisfied by a convex function $f$.
	
	We now make the following series of observations on $\tilde{c}^{\p{\lambda}}$, which establish their continuity in $\lambda$ and that $\tilde{c}^{\p{0}}=1$:
	\begin{enumerate}
		\item \label{obs:2}\emph{For any $\lambda\in (-1,2)$, there is some open neighborhood of $\tilde{c}^{\p{\lambda}}$ in which the univariate function $c\mapsto L(c\cdot P(\Wcal^{\text{clip}(\lambda)}))$ is one-to-one}: As discussed above, $c\mapsto L(c\cdot P(\Wcal^{\text{clip}(\lambda)}))$ is convex, and does not attain a minimal value at $\tilde{c}^{\p{\lambda}}$ (since $L(\tilde{c}^{\p{\lambda}}\cdot P(\Wcal^{\text{clip}(\lambda)}))= v^{\p{\lambda}}>L(0\cdot P(\Wcal^{\text{clip}(\lambda)}))$). Therefore, $L(c\cdot P(\Wcal^{\text{clip}(\lambda)}))$ must be strictly increasing or decreasing in some open neighborhood of $\tilde{c}^{\p{\lambda}}$, and therefore it is locally one-to-one.
		\item \label{obs:3}\emph{$\tilde{c}^{\p{\lambda}}$ is continuous in $\lambda\in (-1,2)$}: Consider the function $f(c,\lambda)=L(c\cdot P(\Wcal^{\text{clip}(\lambda)}))-v^{\p{\lambda}}$, where $(c,\lambda)\in (0,\infty)\times (-1,2)$. By definition of $\tilde{c}^{\p{\lambda}}$, we have $f(\tilde{c}^{\p{\lambda}},\lambda)=0$ for all $\lambda$. Moreover, $f$ is continuous (since $P(\Wcal^{\text{clip}(\lambda)}),v^{\p{\lambda}}$ are continuous in $\lambda$, and $L$ is convex and hence continuous), and by observation \ref{obs:2} above, $f(\cdot,\lambda)$ is locally one-to-one for any $\lambda\in (-1,2)$. Applying a version of the implicit function theorem from multivariate calculus for possibly non-differentiable functions (see \citep{kumagai1980implicit}), it follows that there exists some unique and continuous mapping of $\lambda$ to $c$ in an open neighborhood of $(\tilde{c}^{\p{\lambda}},\lambda)$, for which $f(c,\lambda)=0$. 
		But as discussed earlier, for a given $\lambda$, $c=\tilde{c}^{\p{\lambda}}$ is the unique value for which $f(c,\lambda)=L(c\cdot P(\Wcal^{\text{clip}(\lambda)}))-v^{\p{\lambda}}=0$, so this continuous mapping must map $\lambda$ to $\tilde{c}^{\p{\lambda}}$ locally at every $\lambda$. Since this holds for any $\lambda$, the mapping of $\lambda$ to $\tilde{c}^{\p{\lambda}}$ is continuous on $\lambda\in (-1,2)$. 
		\item \emph{$\tilde{c}_{0}=1$}: By definition of $\tilde{c}^{\p{\lambda}}$ and $v^{\p{\lambda}}$ at $\lambda=0$, $ L(\tilde{c}^{\p{0}}\cdot P(\Wcal^{\p{0}}))=v^{\p{0}}=L(P(\Wcal^{\p{0}}))$, which is clearly satisfied for $\tilde{c}^{\p{0}}=1$, and as discussed earlier, is not satisfied for any other value.
	\end{enumerate}
	Based on the above observations, we have that $\tilde{c}^{\p{\lambda}}$, as a function of $\lambda\in [0,1]$, is continuous, begins at $\tilde{c}_0=1$, and satisfies $L(\tilde{c}^{\p{\lambda}}\cdot P(\Wcal^{\p{\lambda}}))=v^{\p{\lambda}}$. Moreover, $v^{\p{\lambda}}$ is strictly decreasing in $\lambda$. Therefore, letting
	\begin{equation}\label{eq:newpath}
	\{\tilde{\Wcal}^{\p{\lambda}},\lambda\in [0,1]\}
	\end{equation}
	denote the path in the parameter space, where each $\tilde{\Wcal}^{\p{\lambda}}$ equals $\Wcal^{\p{\lambda}}$ with the last layer weights re-scaled by $\tilde{c}^{\p{\lambda}}$, we have that \eqref{eq:newpath}  indeed defines a continuous path from the initialization point $\Wcal^{\p{0}}$ in the parameter space, along which the loss $L(P(\tilde{\Wcal}^{\p{\lambda}})$ is strictly monotonically decreasing. 
	
	All that remains now is to argue that from $\tilde{\Wcal}^{\p{1}}$, we have a strictly monotonically decreasing path to a point whose loss equals  $L(P(\Wcal^{\p{1}}))$. To see this, note that by definition of $\tilde{\Wcal}^{\p{1}}$ and $v^{\p{1}}$, we have $L(P(\tilde{\Wcal}^{\p{1}})) = v^{\p{1}} > L(P(\Wcal^{\p{1}}))$. Therefore,
	\[
	L(c\cdot P(\Wcal^{\p{1}}))
	\]
	is convex in $c$, equals $L(P(\tilde{\Wcal}^{\p{1}}))$ at $c=\tilde{c}^{\p{\lambda}}$, and equals the strictly smaller value $L(P(\Wcal^{\p{1}}))$ at $c=1$. Therefore, by re-scaling the last layer parameters of $\tilde{\Wcal}^{\p{1}}$ to match those of $\Wcal^{\p{1}}$, we are guaranteed to strictly and monotonically decrease the loss, until we get a loss equal to $L(P(\Wcal^{\p{1}}))$. Concatenating this with the continuous path $\tilde{\Wcal}^{\p{\lambda}},\lambda\in [0,1]$, the result follows.

\subsection{Proof of \propref{prop:crosssquare}}
	\label{subsec:proofcrosssquare}
	It is enough to verify that for both losses, proposition \ref{prop:highinit} holds with $r=1$.
	
	For the squared loss, if $P(\Wcal^{\p{0}})\neq \mathbf{0}$, then consider the first training example $(\bx_i,y_i)$ for which $P(\Wcal^{\p{0}})(\bx_i)\neq \mathbf{0}$. In that case, it is easily verified that $(c\cdot P(\Wcal^{\p{0}})(\bx_i)-y_i)^2$ is strictly convex in $c$ (for any $c$), and therefore $L(c\cdot P(\Wcal^{\p{0}}))=\frac{1}{m}\sum_{i=1}^{m}(c\cdot P(\Wcal^{\p{0}})(\bx_i)-y_i)^2$ is also strictly convex, as an average of convex functions where at least one of them is strictly convex. Therefore, strict convexity holds with probability $r=1$.
	
	For the cross-entropy loss, it is enough to consider the first training example on which the prediction vector $\bp$ of $P(\Wcal^{\p{\lambda}})$ is non-zero, and show strict convexity on that example with probability $1$. Since the loss on other examples are convex as well, we get overall strict convexity with probability $1$ as required. Specifically, we need to show strict convexity in $c$ of the function 
	\begin{equation}\label{eq:logsumexp}
	-\log\left(\frac{\exp(c\cdot p_{j_i})}{\sum_{j}\exp(c\cdot p_j)}\right) = \log\left(\sum_j \exp(c\cdot p_{j})\right)-c\cdot p_{j_i}.
	\end{equation}
	where $j_i$ is the correct class. To do so, consider the function $f(\bp)=\log(\sum_j \exp(p_j))$. A straightforward calculation reveals that its Hessian equals
	\[
	\nabla^2 f(\bp) = \text{diag}(\bq)-\bq\bq^\top~~\text{where}~~ \bq = \frac{1}{\sum_j \exp(p_j)}\left(\exp(p_1),\exp(p_2),\ldots,\exp(p_k)\right),
	\]
	so the second derivative of the function in \eqref{eq:logsumexp} w.r.t. $c$ at some value $c$ equals
	\begin{equation}\label{eq:secondderivative}
	\bp^\top \nabla^2 f(c\cdot \bp)\bp.
	\end{equation}
	We now argue that this is strictly positive, unless $\bp$ is a constant vector $p_1=p_2=\ldots=p_k$, in which case the function in \eqref{eq:logsumexp} is indeed strictly convex. To see this, note that the Hessian of $f$ is a rank-1 perturbation of the $k\times k$ positive definite matrix $\text{diag}(\bq)$, so its rank is at least $k-1$. Thus, there is only a $1$-dimensional subspace of vectors $\bv$, for which $\bv^\top (\text{diag}(\bq)-\bq\bq^\top)\bv=0$, which can be verified to be exactly the subspace of constant vectors. Thus, \eqref{eq:secondderivative} is positive unless $\bp$ is a constant vector. 
	
	To finish the proof for the cross-entropy loss, it remains to show that the probability that $\bp$ is non-constant (conditioned on $P(\Wcal^{\p{0}})\neq\mathbf{0})$ is $1$. To simplify the notation, let $NZ$ be the event that $P(\Wcal^{\p{0}})\neq\mathbf{0}$, let $P$ be the event that conditioned on $NZ$, $\bp$ (the first non-zero prediction vector over the training examples) is also non-constant. Also, let $V$ be the event that conditioned on $NZ$, then for the same training example as $\bp$, the input vector to the output neurons is non-zero. Then it holds that
	\begin{equation}\label{eq:VP}
		\pr{P|NZ} = \frac{\pr{P|V,NZ}\pr{V|NZ}}{\pr{V|P,NZ}}.
	\end{equation}
	$\pr{P|V,NZ}=1$, since the linear output neurons are initialized independently from a spherically-symmetric distribution supported on non-zero vectors, so given a non-zero input, the probability that some neurons will output different values is $1$. Also, $\pr{V|P,NZ}=\pr{V|NZ}=1$, since conditioned on $NZ$, $\bp\neq \mathbf{0}$ by definition, and since the output neurons compute a homogeneous linear mapping, the input to these neurons must also be non-zero. Plugging these observations back into \eqref{eq:VP}, we get that $\pr{P|NZ}=1$ as required.

\subsection{Proof of Proposition \ref{prop:highinit}}\label{subsec:prophighinitproof}
	We first prove that 
	\begin{equation}
	\label{eq:prnzero}
	\pr{P(\Wcal^{\p{0}})\neq \mathbf{0}}\geq 1-2^{-n_{h-1}}.
	\end{equation}
	To see this, consider any neuron in the $\p{h-1}^\text{th}$ layer, computing $\bx\mapsto \relu{\inner{\bw,\bx}+b}$. Since $(\bw,b)$ is drawn from a spherically symmetric distribution supported on non-zero vectors, it holds for any fixed $\bx$ that  $\pr{\inner{\bw,\bx}+b>0}=\pr{\inner{\bw,\bx}+b<0}=\frac{1}{2}$. Therefore, $\pr{\relu{\inner{\bw,\bx}+b}\neq0}=\frac{1}{2}$. Since the weights at each neuron are drawn independently, and there are $n_{h-1}$ neurons in the $(h-1)^{\text{th}}$ layer, it follows that a linear output neuron receives a non-zero input with probability at least $1-2^{-n_{h-1}}$. If this event occurs, then the output of the output neuron will be non-zero with probability $1$. Since this holds for any fixed network input, it holds in particular for (say) the first training example, in which case $P(\Wcal^{\p{0}})$ will be non-zero with such a probability. Letting $A$ be the event that $P(\Wcal^{\p{0}})\neq \mathbf{0}$, as well as $L(c\cdot P(\Wcal^{\p{0}}))$ being strictly convex in $c\in [-1,+1]$, we have by \eqref{eq:prnzero} and the assumption in the statement that $\pr{A}\geq r\left(1-2^{-n_{h-1}}\right)$
	
	Let $W$ be the realization of the random variable $P(\Wcal^{\p{0}})$. Since the output neurons are initialized from a spherically symmetric distribution, $\pr{W}=\pr{-W}$ for any $W$. Moreover, it is easy to verify that for any $W$, event $A$ occurs for $P(\Wcal^{\p{0}})=W$ if and only if it occurs for $P(\Wcal^{\p{0}})=-W$. Therefore,
	\[
	\pr{W|A} = \frac{\pr{W}\pr{A|W}}{\pr{A}} = \frac{\pr{-W}\pr{A|-W}}{\pr{A}} = \pr{-W|A}. 
	\]
	In other words, conditioned on $A$, for any realization $W$, we are equally likely to get $W$ or $-W$. Also, conditioned on $A$ (which implies strict convexity), $\max\{L(W),L(-W)\}\geq \frac{L(W)+L(-W)}{2} > L\left(\frac{W+(-W)}{2}\right)=L(\mathbf{0})$. Therefore, by symmetry, $\pr{L(P(\Wcal^{\p{0}}))>L(\mathbf{0})~|~A}\geq \frac{1}{2}$. As a result, $\pr{L(P(\Wcal^{\p{0}}))>L(\mathbf{0})}\geq \frac{1}{2}\pr{A}\geq \frac{r}{2}\left(1-2^{-n_{h-1}}\right)$.

\subsection{Proof of \thmref{thm:brittleness_to_overspecification}}
\label{app:brittleness_to_overspecification}
	Denote for all $ j\in\pcc n $,
	\[
		S_j^+=\set{\bx\in S:x_j>0},S_j^-=\set{\bx\in S:x_j<0},
	\]
	and observe the objective value on $ S_j^+ $ 	satisfies for all $ j\in\pcc d $,
	\begin{align*}
		L_{S_j^+}\p{W,\bv} &= \sum_{t:\bx_t\in 		S_j^+}\ell\p{\sum_{i=1}^n v_i\relu{\inner{\bw_i,\bx_t}},y_t} \\
		&= \sum_{t:\bx_t\in 		S_j^+}\ell\p{x_{t,j}\sum_{i=1}^n v_i\relu{w_{i,j}},y_t}.
	\end{align*}
	Similarly,
	\[
		L_{S_j^-}\p{W,\bv} = \sum_{t:\bx_t\in S_j^-}\ell\p{-x_{t,j}\sum_{i=1}^n v_i\relu{-w_{i,j}},y_t}.
	\]
	Since $ \ell $ is convex, $ L_{S_j^+},L_{S_j^-} $ 	are convex in $ \sum_{i=1}^n v_i \relu{w_{i,j}},\sum_{i=1}^n v_i \relu{-w_{i,j}}, $ respectively, so their minimal values are well defined. It is clear that the minimum achievable using a single neuron is lower bounded by the minimum achievable using two-layer nets, $ \alpha $, which in turn is lower bounded by the average of all minimal objective values over the various $ S_j^{\pm} $. It then suffices to show that we initialize from a basin achieving such value, which we denote as $ \beta\le\alpha $, with high probability. Moreover, since the objective value on $ S_j^{\pm} $ is independent for each $ S_j^{\pm} $, it is enough to minimize the objective on each $ S_j^{\pm} $ separately.
	
	Since our basins correspond to the partition of our search space to a fixed sign at each coordinate, we have that for the expression $ \sum_{i=1}^n v_i \relu{w_{i,j}} $ to take the optimal value $ p^* $ in our initialized basin, it suffices that $ \sign\p{w_{i,j}}=1 $ and $ \sign\p{v_i}=\sign\p{p^*} $ for some $ i\in\pcc n $. Using an analogous argument for $ S_j^- $ we have,
	\begin{itemize}
		\item
		The probability of this condition not to hold for a single neuron is at most $ \frac{3}{4} $.
		\item
		The probability of this condition not to hold
		for all neurons (since by Assumption \ref{assum:init_dist} all neurons are independent) is at most $ \p{\frac{3}{4}}^n $.
		\item
		By using the union bound, the probability that
		exists some $ S_j^{\pm} $ such that no neuron can obtain the minimal value over it is at most
		\[
		2d\p{\frac{3}{4}}^n.
		\]
	\end{itemize}
	We conclude that when initializing $ \p{W,\bv} $
	using a distribution satisfying Assumption \ref{assum:init_dist} then
	\[
		\pr{\bas\p{W,\bv}\le\beta\le\alpha} \ge 1-2d\p{\frac{3}{4}}^n.
	\]

\subsection{Proof of \thmref{thm:low_intrinsic_data_dim}}
	\label{app:low_intrinsic_data_dim_proof}
	We will need the following two auxiliary lemmas:
	
	\begin{lemma}
		\label{lem:Lipschitz_loss}
		$N_{n}\p{\bw_{1},\dots,\bw_{n},\bv}\p{\bx}$ is $\p{\abs{v_i}\cdot\norm{\bx}}$-Lipschitz in each $\bw_i$.
	\end{lemma}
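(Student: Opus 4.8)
The plan is to isolate the dependence of the network's output on the single block $\bw_i$ and to track the Lipschitz constant through the few operations that block passes through. Fixing all of the remaining parameters, we may write
\[
	N_n\p{\bw_1,\dots,\bw_n,\bv}\p{\bx} = v_i\relu{\inner{\bw_i,\bx}} + \sum_{j\ne i} v_j\relu{\inner{\bw_j,\bx}},
\]
and observe that the second summand does not depend on $\bw_i$ at all, so it contributes nothing to the Lipschitz constant in $\bw_i$. It therefore suffices to bound the Lipschitz constant of the map $\bw_i \mapsto v_i\relu{\inner{\bw_i,\bx}}$.

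I would then decompose this map as $\bw_i \mapsto \inner{\bw_i,\bx} \mapsto \relu{\inner{\bw_i,\bx}} \mapsto v_i\relu{\inner{\bw_i,\bx}}$ and bound each stage separately. The first stage is linear in $\bw_i$, and by Cauchy--Schwarz $\abs{\inner{\bw_i,\bx}-\inner{\bw_i',\bx}} = \abs{\inner{\bw_i-\bw_i',\bx}} \le \norm{\bw_i-\bw_i'}\cdot\norm{\bx}$, so it is $\norm{\bx}$-Lipschitz. The second stage, the ReLU, is $1$-Lipschitz, since $\abs{\relu{a}-\relu{b}}\le\abs{a-b}$ for all $a,b\in\bbr$ (either check the sign cases directly, or use $\relu{z}=\half\p{z+\abs{z}}$, a sum of two $1$-Lipschitz maps scaled by $\half$, giving constant $1$). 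The final stage, multiplication by the fixed scalar $v_i$, scales all distances by exactly $\abs{v_i}$. Composing the three stages, $\bw_i \mapsto v_i\relu{\inner{\bw_i,\bx}}$ is $\abs{v_i}\cdot\norm{\bx}$-Lipschitz, which together with the first paragraph yields the claim.

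There is no genuine obstacle in this argument; it is essentially a bookkeeping exercise. The only mild point worth flagging is that $\relu{\cdot}$ is not differentiable, so the bound on that stage is taken directly from the definition of Lipschitz continuity rather than from a gradient bound; everything else is the standard fact that the Lipschitz constant of a composition is the product of the constants, and that terms independent of the variable may be dropped.
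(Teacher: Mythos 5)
Your argument is correct, and it is exactly the routine computation the paper has in mind: the authors state the lemma without proof, noting it is immediate from the definition of $N_n$, and your decomposition into the Cauchy--Schwarz bound, the $1$-Lipschitzness of $\relu{\cdot}$, and the scaling by $\abs{v_i}$ is the natural way to spell that out.
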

	
	We leave this lemma without proof, and note that it is immediate from the definition of $ N_n $.
	
	The following lemma provides a lower bound on the probability that the $ i^\text{th} $ neuron is initialized from a region with a point of distance at most $ \delta $ from $ \bw^*_i $.
	\begin{lemma}
		\label{lem:good_init_lower_bound}
		Let $ \delta>0 $, let $ \p{W^*,\bv^*} $ satisfying $ \norm{\bw^*_i}_2=1, ~\forall i\in\pcc n $, and let $ \p{W,\bv} $ be a point on an origin-centered sphere chosen uniformly at random. Then $ \forall i\in\pcc n $
		\begin{align*}
			\pr{\exists \tilde{\bw}_i: \norm{\tilde{\bw}_i-\bw_i^*}_2\le\delta, ~\sign\p{\inner{\tilde{\bw},\bx_t}}=\sign\p{\inner{\bw,\bx_t}}~\forall t\in\pcc m} \\ \ge\frac{1}{\pi\p{\rank\p{X}-1}}\p{\delta\sqrt{1-\frac{\delta^2}{4}}}^{\text{rank}\p{X}-1}.
		\end{align*}
	\end{lemma}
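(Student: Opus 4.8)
The plan is to reduce this ambient-dimension-$d$ event to a spherical-cap probability inside the subspace $V := \operatorname{span}\{\bx_1,\dots,\bx_m\}\subseteq\reals^d$, which has dimension $r:=\rank(X)$, exploiting that the sign pattern $\big(\sign\inner{\bw,\bx_t}\big)_{t=1}^m$ depends on $\bw$ only through its orthogonal projection $P_V\bw$. Concretely, decompose $\bw_i=\bz+\bw^\perp$ and $\bw_i^*=\bz^*+(\bw^*)^\perp$ along $\reals^d=V\oplus V^\perp$, with $\bz=P_V\bw_i$ and $\bz^*=P_V\bw_i^*$. For any candidate $\tilde\bw_i=\tilde\bz+\tilde\bw^\perp$ the sign constraints involve only $\tilde\bz$, while $\norm{\tilde\bw_i-\bw_i^*}^2=\norm{\tilde\bz-\bz^*}^2+\norm{\tilde\bw^\perp-(\bw^*)^\perp}^2$ is minimized over $\tilde\bw^\perp$ by the choice $\tilde\bw^\perp=(\bw^*)^\perp$. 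Hence the event in the lemma is equivalent to the existence of $\tilde\bz\in V$ with $\norm{\tilde\bz-\bz^*}\le\delta$ and $\sign\inner{\tilde\bz,\bx_t}=\sign\inner{\bz,\bx_t}$ for all $t$. (If $\bz^*=\mathbf 0$ the event holds with probability $1$, by taking $\tilde\bz$ an arbitrarily small positive multiple of $\bz$, so we may assume $\bz^*\neq\mathbf 0$.)

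Next comes the key step: since the sign pattern is scale-invariant, it is enough to match the \emph{direction} of $\bz$. Put $\bu:=\bz/\norm{\bz}$ and $\hat\bz^*:=\bz^*/\norm{\bz^*}$. If $\norm{\bu-\hat\bz^*}\le\delta$, then $\tilde\bz:=\norm{\bz^*}\,\bu$ does the job: being a positive multiple of $\bz$ it has the required sign pattern, and $\norm{\tilde\bz-\bz^*}=\norm{\bz^*}\,\norm{\bu-\hat\bz^*}\le\delta$ since $\norm{\bz^*}=\norm{P_V\bw_i^*}\le\norm{\bw_i^*}=1$. So it remains to lower bound $\pr{\norm{\bu-\hat\bz^*}\le\delta}$. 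By spherical symmetry of the initialization, $\bw_i/\norm{\bw_i}$ is uniform on the unit sphere of $\reals^d$, hence its normalized projection $\bu=P_V\bw_i/\norm{P_V\bw_i}$ is uniform on the unit sphere $S^{r-1}$ of $V$.

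Finally, $\norm{\bu-\hat\bz^*}\le\delta$ is precisely the statement that $\bu$ lies in the spherical cap of angular radius $\psi:=2\arcsin(\delta/2)$ about $\hat\bz^*$ (from $\norm{\bu-\hat\bz^*}^2=2-2\cos\theta$). Its normalized surface measure equals $\big(\int_0^\psi\sin^{r-2}\theta\,d\theta\big)\big/\big(\int_0^\pi\sin^{r-2}\theta\,d\theta\big)$; I would bound the numerator below by $\int_0^\psi\sin^{r-2}\theta\cos\theta\,d\theta=\sin^{r-1}\psi/(r-1)$ (valid since $\psi\le\pi/2$ once $\delta\le\sqrt2$, which covers the nontrivial range of the bound) and the denominator above by $\pi$, then substitute $\sin\psi=\sin(2\arcsin(\delta/2))=\delta\sqrt{1-\delta^2/4}$ and $r=\rank(X)$ to obtain exactly $\frac{1}{\pi(\rank(X)-1)}\big(\delta\sqrt{1-\delta^2/4}\big)^{\rank(X)-1}$.

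I expect the only real subtlety to be the middle step: checking that matching the direction of the \emph{projected} weight near $\hat\bz^*$ (rather than matching $\bw_i^*$ itself in $\reals^d$) is both sufficient and correctly calibrated — it is the inequality $\norm{\bz^*}\le 1$ that turns a $\delta$-close direction into a $\delta$-close witness — together with the (standard but worth stating) fact that the normalized projection of a uniform sphere vector onto $V$ is again uniform on $S^{r-1}$. The dimension reduction and the spherical-cap integral estimate are otherwise routine.
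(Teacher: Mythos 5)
Your proposal is correct and follows essentially the same route as the paper's proof: reduce to the span of the data (dimension $\rank\p{X}$) using the fact that sign patterns depend only on the normalized projection of $\bw_i$, note this projection is uniform on the lower-dimensional sphere, and finish with a spherical-cap surface-area lower bound (your $\cos\theta$ trick plus the trivial case $\delta>\sqrt{2}$ replaces the paper's monotonicity argument, yielding the same constant). If anything, your treatment of the reduction step—splitting off the $V^\perp$ component of $\bw_i^*$ and rescaling by $\norm{P_V\bw_i^*}\le 1$—is more explicit than the paper's, which handles this only implicitly.
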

	Before turning to prove the lemma, we first prove the following auxiliary claim.
	
	\begin{claim}
		\label{clm:spherical_init_lower_bound}
		Let $ \delta>0 $ and let $ \ba\in\mathbb{S}^{d-1}\subseteq\bbr^d $ be a point on the $d$-dimensional unit sphere. Let $\bb$ be a point chosen uniformly at random from $\mathbb{S}^{d-1}$. Then
		\[
			\pr{\norm{\ba-\bb}_2\le\delta}\ge\frac{1}{\pi\p{d-1}}\p{\delta\sqrt{1-\frac{\delta^2}{4}}}^{d-1}.
		\]
	\end{claim}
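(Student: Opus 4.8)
The plan is to bound the probability that a uniformly random point $\bb$ on $\mathbb{S}^{d-1}$ lands in the spherical cap of Euclidean radius $\delta$ centered at $\ba$, from below. First I would translate the Euclidean constraint $\norm{\ba-\bb}_2 \le \delta$ into a constraint on the angle $\theta$ between $\ba$ and $\bb$: since $\norm{\ba-\bb}_2^2 = 2 - 2\cos\theta$, the event $\norm{\ba-\bb}_2 \le \delta$ is exactly $\cos\theta \ge 1-\delta^2/2$, i.e. $\theta \le \theta_0$ where $\cos\theta_0 = 1-\delta^2/2$ (note $\sin\theta_0 = \delta\sqrt{1-\delta^2/4}$). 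So the target probability is the normalized surface area of a spherical cap subtending half-angle $\theta_0$.

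Next I would write down the standard formula for the cap's relative measure as a ratio of integrals of $\sin^{d-2}$: the probability equals
\[
\frac{\int_0^{\theta_0}\sin^{d-2}(\phi)\,d\phi}{\int_0^{\pi}\sin^{d-2}(\phi)\,d\phi}.
\]
For the denominator I would use the crude bound $\int_0^\pi \sin^{d-2}(\phi)\,d\phi \le \pi$ (since $\sin \le 1$). For the numerator I would lower-bound $\sin^{d-2}(\phi)$ on $[0,\theta_0]$: because $\sin$ is concave on $[0,\pi]$ and $\theta_0 \le \pi/2$, we have $\sin(\phi) \ge \phi\,\frac{\sin\theta_0}{\theta_0}$ for $\phi\in[0,\theta_0]$, whence $\int_0^{\theta_0}\sin^{d-2}(\phi)\,d\phi \ge \left(\frac{\sin\theta_0}{\theta_0}\right)^{d-2}\int_0^{\theta_0}\phi^{d-2}\,d\phi = \left(\frac{\sin\theta_0}{\theta_0}\right)^{d-2}\frac{\theta_0^{d-1}}{d-1} = \frac{\theta_0\,(\sin\theta_0)^{d-2}}{d-1}$. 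Using $\theta_0 \ge \sin\theta_0$ gives the cleaner bound $\frac{(\sin\theta_0)^{d-1}}{d-1}$. Combining, the probability is at least $\frac{(\sin\theta_0)^{d-1}}{\pi(d-1)} = \frac{1}{\pi(d-1)}\left(\delta\sqrt{1-\delta^2/4}\right)^{d-1}$, as claimed.

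The only mildly delicate point — and the one I would be most careful about — is the case analysis on $\delta$ (equivalently on $\theta_0$). The concavity argument $\sin\phi \ge \phi\sin\theta_0/\theta_0$ and the inequality $\theta_0 \ge \sin\theta_0$ both require $\theta_0 \le \pi/2$, i.e. $\delta \le \sqrt{2}$. For $\delta > \sqrt{2}$ the cap already covers more than a hemisphere so the probability exceeds $1/2$, while the claimed lower bound $\frac{1}{\pi(d-1)}(\delta\sqrt{1-\delta^2/4})^{d-1}$ is easily seen to be at most $\frac{1}{\pi(d-1)}$ (the expression $\delta\sqrt{1-\delta^2/4}=\sin\theta_0\le 1$), so the bound holds trivially there too; and if $\delta \ge 2$ the constraint is vacuous and the probability is $1$. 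So apart from handling these boundary regimes, the argument is a short sequence of elementary estimates, and I do not anticipate a genuine obstacle.
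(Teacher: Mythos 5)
Your proposal is correct and follows essentially the same route as the paper: both pass to the spherical cap of angular radius $\theta_0=2\arcsin(\delta/2)$ (so $\sin\theta_0=\delta\sqrt{1-\delta^2/4}$), express its measure via $\int_0^{\theta_0}\sin^{d-2}\xi\,d\xi$, lower bound that integral by $\frac{1}{d-1}\sin^{d-1}\theta_0$, and absorb the normalization into a factor $1/\pi$. The only differences are in how the two elementary inequalities are justified (your chord/concavity bound and the crude $\int_0^\pi\sin^{d-2}\xi\,d\xi\le\pi$, versus the paper's derivative-monotonicity argument and the asserted monotonicity of $\omega_{d-2}/\omega_{d-1}$), which are minor variations of the same estimates.
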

	
	This claim suffices for proving a weaker version of \thmref{thm:low_intrinsic_data_dim} where $ \rank\p{X} $ is replaced with $ d $. However, utilizing a simple observation on the structure of the basin partition allows us to prove \lemref{lem:good_init_lower_bound} which strengthens the result.
	
	\begin{proof}
		For a point $\ba\in\mathbb{S}^{d-1}$, let $\mathsf{\bar{S}}\p{\ba,\theta}\coloneqq\set{\bb\in\mathbb{S}^{d-1}:\inner{\ba,\bb}\ge\cos\theta}$
		be the closed hyperspherical cap of angle $\theta\in\pcc{0,\pi}$. Note that if $\ba,\bb\in\mathbb{S}^{d-1}$ form an angle of $\theta^{\prime}\in\pcc{0,\theta}$ (i.e. $\bb\in \mathsf{\bar{S}}\p{\ba,\theta}$) then they form an isosceles triangle with apex angle $\theta^{\prime}$ and equal sides	of length $1$, so the distance between $\ba$ and $\bb$
		satisfies 
		\begin{align*}
			\norm{\ba-\bb} & = 2\sin\p{\frac{\theta^{\prime}}{2}}\\
			& \le 2\sin\p{\frac{\theta}{2}}.
		\end{align*}
		Taking $\delta\coleq2\sin\p{\frac{\theta}{2}}$ we have that $\theta=2\arcsin\p{\frac{\delta}{2}}$, so in order for us to lower bound $\pr{\norm{\ba-\bb}_{2}\le\delta}$, we need to compute the surface area $\nu_{d-1}\p{\theta}$ of the hyperspherical cap of angle $\theta$ at point $\ba$ (independent of $ \ba $), and normalize this quantity by the area of the hypersphere $\omega_{d-1}$.
		
		The surface area of a hyperspherical cap of radius $\theta$ is given
		by the formula: (\citep{li2011concise})
		\begin{equation}
		\label{eq:hyper-spherical_cap_area}
			\nu_{d-1}\p{\theta}=\omega_{d-2}\int_0^{\theta}\p{\sin^{d-2}\xi d\xi},
		\end{equation}
		where $\omega_{d-1}$ denotes the surface area of $\mathbb{S}^{d-1}$.		
		Consider the function $f\p{\theta}=\int_0^{\theta}\p{\sin^{d-2}\xi d\xi}-\frac{1}{d-1}\sin^{d-1}\theta$.
		It is monotonically increasing in $\pcc{0,\pi}$ since
		\begin{align*}
			f^{\prime}\p{\theta} &= \frac{\partial}{\partial\theta}\p{\int_{0}^{\theta}\p{\sin^{d-2}\xi d\xi}-\frac{1}{d-1}\sin^{d-1}\theta}\\
			& = \sin^{d-2}\theta-\sin^{d-2}\theta\cdot\cos\theta\\
			& = \sin^{d-2}\theta\cdot\p{1-\cos\theta}\\
			& \ge 0,
		\end{align*}
		where the last inequality holds for all $ \theta\in\pcc{0,\pi} $. Since $f\p 0=0$ we have $\forall\theta\in\pcc{0,\pi}$ that $\int_{0}^{\theta}\p{\sin^{d-2}\xi d\xi}\ge\frac{1}{d-1}\sin^{d-1}\theta$.
		
		We compute
		\begin{align*}
			\pr{A_{d}} &=  \frac{\omega_{d-2}}{\omega_{d-1}}\int_{0}^{\theta}\p{\sin^{d-2}\xi d\xi}\\
			&\ge  \frac{\omega_{d-2}}{\omega_{d-1}}\cdot\frac{\sin^{d-1}\theta}{d-1}\\
			&= \frac{\omega_{d-2}}{\omega_{d-1}}\cdot\frac{\sin^{d-1}\p{2\arcsin\p{\frac{\delta}{2}}}}{d-1}.
		\end{align*}
		Using the identities $\sin\p{\arcsin x}=x$, $\cos\p{\arcsin x}=\sqrt{1-x^2}$ and $\sin2x=2\sin x\cdot\cos x$, we have
		\[
			\sin^{d-1}\p{2\arcsin\p{\frac{\delta}{2}}}=\p{\delta\sqrt{1-\frac{\delta^{2}}{4}}}^{d-1}.
		\]
		Finally, $\frac{\omega_{d-2}}{\omega_{d-1}}$ can be shown to be monotonically increasing for all $d\ge2$, so $\frac{\omega_{d-2}}{\omega_{d-1}}\ge\frac{\omega_{0}}{\omega_{1}}=\frac{1}{\pi}$,	thus yielding
		\[
			\pr{A_{d}}\ge\frac{1}{\pi\p{d-1}}\p{\delta\sqrt{1-\frac{\delta^{2}}{4}}}^{d-1},
		\]
		which concludes the proof of the claim.
	\end{proof}
	
	We now turn to prove \lemref{lem:good_init_lower_bound}.
	\begin{proof}[Proof (of \lemref{lem:good_init_lower_bound})]
		Let $U=\text{span}\p{\bx_1,\dots,\bx_m}$, and define $T\p{\bx}\coleq\frac{\bu}{\norm{\bu}_2}$ where $\bx=\bu+\bu^{\perp}$ for $\bu\in U,\bu^{\perp}\in U^{\perp}$.
		First, we observe that for any initialization of $\p{W,\bv}$ , that $\p{W,\bv}$ and $ \p{T\p{W},\bv} $ where $T\p W\coleq\p{T\p{\bw_1},\dots,T\p{\bw_n}}$ both belong to the same basin, since $\forall i\in\pcc n,\forall t\in\pcc m$
		\begin{align*}
			\inner{\bx_t,\bw_i}
			&= \inner{\bx_t,\norm{\bw}_2\cdot T\p{\bw_i}+\bw_i^\perp}\\
			&= \inner{\bx_t,\norm{\bw}_2\cdot T\p{\bw_i}}+\inner{\bx_t,\bw_i^\perp}\\
			&= \inner{\bx_t,\norm{\bw}_2\cdot T\p{\bw_i}}\\
			&= \norm{\bw}_2\cdot\inner{\bx_t,T\p{\bw_i}},
		\end{align*}
		\[
			\implies\sign\p{\inner{\bx_t,\bw_i}} = \sign\p{\inner{\bx_t,T\p{\bw_i}}}.
		\]
		Thus both $W,T\p W$ belong to the same basin, achieving the same minimal value. Since any rotation $\Theta$ under which $U^\perp$ is invariant commutes with $T$, we have for any measurable set $A\subseteq U$
		\[
			\sigma_{\rank\p{X}-1}\p A=\sigma_{d-1}\p{\Theta T^{-1}\p A}=\sigma_{d-1}\p{T^{-1}\p{\Theta A}},
		\]
		where $ \sigma\p{k} $ denotes the $k$-dimensional Lebesgue measure. So initializing uniformly on an origin-centered sphere of dimension $d$ is equivalent to initializing uniformly on an origin-centered sphere of dimension $\rank\p{X}$ in the sense of the region we initialize from.	We complete the proof by invoking Claim \ref{clm:spherical_init_lower_bound} with respect to a $ \p{\rank\p{X}} $-dimensional sphere.
	\end{proof}
	
	We are now ready to prove \thmref{thm:low_intrinsic_data_dim}.
	\begin{proof}[Proof (of \thmref{thm:low_intrinsic_data_dim})]
		We first argue that since our initialization distribution satisfies Assumption \ref{assum:init_dist}, we may rescale each neuron once initialized to the unit sphere. This is justified since a linear rescaling of the weight of each neuron does not change the basin we initialized from, so the basin value remains the same. For this reason, we assume without loss of generality the distribution where each neuron is distributed uniformly and independently on the unit sphere. Define
		\[
			p_\epsilon=\frac{1}{2\pi\p{\rank\p{X}-1}}\p{\frac{\sqrt{\epsilon}}{nB}\cdot\sqrt{1-\frac{\epsilon}{4n^2 B^2}}}^{\rank\p{X}-1} = \Omega\p{\p{\frac{\sqrt \epsilon}{nB}}^{\text{rank}\p{X}}}.
		\]
		Using the positive homogeneity of the ReLU, we can rescale each $ v_i^* $ to satisfy $ \abs{v_i^*}=1 ~\forall i\in\pcc n $, and rescale $ \bw_i^* $ accordingly, so we may also assume $ \abs{v_i^*}=1,\norm{\bw_i^*}\le B~\forall i\in \pcc n $. From \lemref{lem:good_init_lower_bound} we have
		\begin{align*}
			& \pr{\exists \tilde{\bw}_i: \norm{\norm{\bw_i^*}\cdot\tilde{\bw}_i-\bw_i^*}_2\le\frac{\sqrt{\epsilon}}{n}, ~\sign\p{\inner{\tilde{\bw},\bx_t}}=\sign\p{\inner{\bw,\bx_t}}~\forall t\in\pcc m} \\
			=& \pr{\exists \tilde{\bw}_i: \norm{\tilde{\bw}_i-\frac{\bw_i^*}{\norm{\bw_i^*}}}_2\le\frac{\sqrt{\epsilon}}{n\norm{\bw_i^*}}, ~\sign\p{\inner{\tilde{\bw},\bx_t}}=\sign\p{\inner{\bw,\bx_t}}~\forall t\in\pcc m} \\
			\ge& \pr{\exists \tilde{\bw}_i: \norm{\tilde{\bw}_i-\frac{\bw_i^*}{\norm{\bw_i^*}}}_2\le\frac{\sqrt{\epsilon}}{nB}, ~\sign\p{\inner{\tilde{\bw},\bx_t}}=\sign\p{\inner{\bw,\bx_t}}~\forall t\in\pcc m} \\
			=& 2p_\epsilon,
		\end{align*}
		and also
		\[
		\pr{\sign\p{v_i}=\sign\p{v_i^*}}=\frac{1}{2}.
		\]
		Since the two events are independent, we have that both occur w.p.
		at least $p_\epsilon$. Also, this event is independent for each neuron,
		so we have w.p. at least $p_\epsilon$ for each neuron to initialize
		`close` enough to $\p{\bw_i^*,v_i^*}$. In this sense,
		we can lower bound the number of good initializations from below using
		$Z\sim B\p{N,p_\epsilon}$, where $B\p{N,p}$ is the binomial
		distribution. By using Chernoff`s bound we bound the tail of $Z$ as follows
		\begin{align*}
			& F\p{n;c\left\lceil\frac{n}{p_\epsilon}\right\rceil,p_\epsilon}
			\\ 
			\le & \exp\p{-\frac{1}{2p_\epsilon}\frac{\p{c\ceil{\frac{n}{p_\epsilon}}p_\epsilon-n}^2}{c\ceil{\frac{n}{p_\epsilon}}}}\\
			\le & \exp\p{-\frac{1}{2}\frac{\p{cn-n}^2}{cn}}\\
			\le & \exp\p{-\frac{1}{4}cn}.
		\end{align*}
		Thus with probability $\ge 1-\exp\p{-\frac{1}{4}cn}$,
		we have $n$ neurons initialized in a basin containing a point $ \tilde{W} \in \bbr^{n\times d} $ of distance at most $ \frac{\sqrt{\epsilon}}{n} $ from an optimal weight $ \bw_i^* $ for each $ i\in\pcc n $.
		
		Let $ i_1,\dots,i_n $ be the indices of the well initialized neurons, and let
		\[
			\tilde{W}_{i}=\p{\bw_1^*,\dots,\bw_i^*,\norm{\bw_{i+1}^*}\tilde{\bw}_{i+1},\dots,\norm{\bw_n^*}\tilde{\bw}_n}, ~i=0,\dots,n.
		\]
		We compute the value of the basin corresponding to these neurons as follows:
		\begin{align*}
			\bas\p{\bw_{i_1},\dots,\bw_{i_n},v_{i_1},\dots,v_{i_n}}
			&\le L\p{\tilde{W},\bv^*} \\
			&= \frac{1}{m}\sum_{t=1}^{m}\p{N_n\p{\tilde{W}_0,\bv^*}\p{\bx_t}-y_t}^2\\
			&= \frac{1}{m}\sum_{t=1}^m\abs{\sum_{i=1}^n\p{N_n\p{\tilde{W}_{i-1},\bv^*}\p{\bx_t}-N_n\p{\tilde{W}_i,\bv^*}\p{\bx_t}}}^2\\
			&\le \frac{1}{m}\sum_{t=1}^m\p{\sum_{i=1}^n\abs{N_n\p{\tilde{W}_{i-1},\bv^*}\p{\bx_t}-N_n\p{\tilde{W}_i,\bv^*}\p{\bx_t}}}^2\\
			&\le \frac{1}{m}\sum_{t=1}^m\abs{\abs{v^*_i}\norm{\bx_t}\p{\sum_{i=1}^n\norm{\tilde{W}_{i-1}-\tilde{W}_i}}}^2\\
			&\le \p{\sum_{i=1}^{n}\norm{\tilde{W}_{i-1}-\tilde{W}_{i}}}^{2}\\
			&= \p{\sum_{i=1}^n\norm{\norm{\bw_i^*}\cdot\tilde{\bw}_i-\bw_i^*}}^2\\
			&\le \p{\sum_{i=1}^{n}\frac{\sqrt{\epsilon}}{n}}^{2}\\
			&= \epsilon,
		\end{align*}
		
		where the second inequality in the triangle inequality and the third inequality is from \lemref{lem:Lipschitz_loss}. We now finish the proof by invoking \lemref{lem:key_lemma} to conclude
		\[
			\pr{\bas\p{W,\bv} \le \bas\p{\bw_{i_1},\dots,\bw_{i_n},v_{i_1},\dots,v_{i_n}} \le \epsilon} \ge 1-e^{-\frac{1}{4}cn}.
		\]
	\end{proof}
	
\subsection{Proof of \thmref{thm:Rank_m_data}}
	\label{app:Rank_m_data}
	Denote the initialization point as $W=\p{\bw_1,\dots,\bw_n,v_1,\dots,v_n}$, and define $ \p{W^\prime,\bv^\prime} $ with $ \bv^\prime=\p{\sign\p{v_1},\dots,\sign\p{v_n}} $,  $\bw_i^{\prime}=\sum_{t^{\prime}=1}^{m}a_{i,t^{\prime}}\bx_{t^{\prime}}$ where $a_{i,t^{\prime}}\in\bbr$ are to be determined later. Let $$ \p{\bar{y}_1,\dots,\bar{y}_m}=\argmin{\p{\bar{y}_1,\dots,\bar{y}_m}\in\bbr^m}\frac{1}{m}\sum_{t=1}^m \ell\p{\bar{y}_t,y_t}, $$ we want to show that for well chosen values of $a_{i,t^{\prime}}$, $\p{W^\prime,\bv^\prime}$ belongs to the same basin as $\p{W,\bv}$, and achieves the desired prediction $ \p{\bar{y}_1,\dots,\bar{y}_m} $ over a certain subset of $\p{\bx_1,\dots,\bx_m}$, while achieving a prediction of $0$ over the rest of the sample instances, effectively predicting the subset without affecting the prediction over the rest of the sample. By combining enough neurons in this manner, we are able to obtain the minimal objective value over the data. Namely, an objective value of $\alpha$.
	Define the vector  $\by_i^{\prime}=\p{y_{i,1}^{\prime},\ldots, y_{i,m}^{\prime}}^\top$, where 
	\[
		y_{i,t}^{\prime}=
		\begin{cases}
			\abs{\bar{y}_t} & \inner{\bw_i,\bx_t}>0,\; v_i\cdot \bar{y}_t\ge0,\;\forall j<i\:\inner{\bw_j,\bx_t}\le0\wedge v_j\cdot \bar{y}_t<0\\
			0 & \text{otherwise}
		\end{cases}.
	\]
	and choose $\ba_i=\p{a_{i,1},\ldots,a_{i,m}}^\top$ such that the equality
	\[
		XX^{\top}\ba_i=\by_i^{\prime}
	\]
	holds. 
	
	We first stress that by our assumption,
	\[
		XX^{\top}=\p{\begin{matrix}\inner{\bx_1,\bx_1} & \dots & \inner{\bx_1,\bx_m}\\
		\vdots & \ddots & \vdots\\
		\inner{\bx_m,\bx_1} & \dots & \inner{\bx_m,\bx_m}
		\end{matrix}}\in \bbr^{m\times m}
	\]
	is of rank $m$, and therefore $\ba_i$ exists and is well-defined.
	
	Assuming that for any $t\in\pcc m$ there exists some neuron $i$	such that $\inner{\bw_i,\bx_t}>0,\; v_i\cdot \bar{y}_t\ge0$ (We will later analyze the probability of this actually happening), we compute the prediction of our network with weights $W^{\prime}=\p{\bw_1^{\prime},\dots,\bw_n^{\prime}}$ on $\bx_t$:
	\begin{align*}
		N_n\p{W^{\prime},\bv^\prime}\p{\bx_t} &= \sum_{i=1}^n v_i^\prime\relu{\inner{\bw_i^{\prime},\bx_t}}\\
		&= \sum_{i=1}^n v_i\relu{\inner{\sum_{t^{\prime}=1}^m a_{i,t^{\prime}}\bx_{t^{\prime}},\bx_t}}\\
		&= \sum_{i=1}^n v_i\relu{\sum_{t^{\prime}=1}^m a_{i,t^{\prime}}\inner{\bx_{t^{\prime}},\bx_t}}\\
		&= \sum_{i=1}^n v_i\relu{y_{i,t}^{\prime}}\\
		&= \sum_{i=1}^n v_i\relu{\abs{\bar{y}_t}\cdot\one_{\inner{\bw_i,\bx_t}>0,\; v_i\cdot \bar{y}_t\ge 0,\;\forall j<i\:\inner{\bw_j,\bx_t}\le 0\wedge v_j\cdot \bar{y}_t<0}}\\
		&= \sum_{i=1}^n \bar{y}_t \cdot\one_{\inner{\bw_i,\bx_t}>0,\; v_{i}\cdot \bar{y}_t\ge0,\;\forall j<i\:\inner{\bw_j,\bx_t}\le 0\wedge v_j\cdot \bar{y}_t<0}\\
		&= \bar{y}_t.
	\end{align*}
	Where the last equality comes from our assumption that there exists some neuron $i$ s.t. $\inner{\bw_i,\bx_t}>0,\; v_i\cdot \bar{y}_t\ge 0$, and from the definition of $y_{i,t}^{\prime}$ which asserts that at most a single neuron will predict $\bx_t$. Thus, we have
	\[
		\forall t\in\pcc m\; N_n\p{W^{\prime},\bv}\p{\bx_t}=\bar{y}_t
	\]
	\[
		\implies L_S\p{W^{\prime},\bv}=\alpha.
	\]
	To put this result in different words, if $\bx_t$ is positive on the hyperplane induced by $\bw_i$ and if $v_i$ has the same sign as $\bar{y}_t$, then $\bw_i^{\prime}$ predicts $\bx_t$ correctly, given that $\bx_t$ was not previously predicted by a neuron $\bw_j^{\prime}$ where	$j<i$.
	
	We now assert that $ \p{W,\bv} $ and $ \p{W^\prime,\bv^\prime} $ indeed belong to the same basin with respect to $S$. For $ \bv,\bv^\prime $ this is clear by definition, and note that for $\bw_i,\bw_i^{\prime}$ we require that $\sign\p{\inner{\bw_i,\bx_t}}\cdot\sign\p{\inner{\bw_i^{\prime},\bx_t}}\ge 0,~\forall i\in\pcc n,t\in\pcc m$.
	Thus we compute:\\
	If $\inner{\bw_i,\bx_t}>0,\; v_i\cdot \bar{y}_t\ge 0,\;\forall j<i\:\inner{\bw_j,\bx_t}\le 0\wedge v_j\cdot \bar{y}_t<0$	all hold, then we have
	\[
		\sign\p{\inner{\bw_i,\bx_t}}=1,
	\]
	and
	\begin{align*}
		\sign\p{\inner{\bw_i^{\prime},\bx_t}} &= \sign\p{\inner{\bw_i^{\prime},\bx_t}}\\
		&= \sign\p{\inner{\sum_{t^{\prime}=1}^m a_{i,t^{\prime}}\bx_{t^{\prime}},\bx_t}}\\
		&= \sign\p{\sum_{t^{\prime}=1}^m a_{i,t^{\prime}}\inner{\bx_{t^{\prime}},\bx_t}}\\
		&= \sign\p{y_{i,t}^{\prime}}\\
		&= \sign\p{\abs{\bar{y}_t}}\\
		&\ge 0.
	\end{align*}
	Otherwise, we have $ \sign\p{\inner{\bw_i,\bx_t}}\le 0 $ and
	\[
		\sign\p{\inner{\bw_i^{\prime},\bx_t}} = \sign\p{y_{i,t}^{\prime}}= 0.
	\]
	Finally, we define the event $A_i^t\coleq\inner{\bw_i,\bx_t}>0,\; v_i\cdot \bar{y}_t\ge0$,
	i.e. the $i^{\text{th}}$ neuron is able to predict $\bx_t$ correctly. Since $v_i,\bw_i$ are independent, and since $ \bw_i $ is drawn from a spherically symmetric distribution for all $ i\in\pcc n $, we have
	\[
		\pr{A_i^t} = \pr{\inner{\bw_i,\bx_t}>0}\cdot\pr{v_i\cdot, \bar{y}_t\ge 0} \ge \frac{1}{4}
	\]
	\[
		\implies\pr{\overline{A_i^t}}\le\frac{3}{4}.
	\]
	Since the neurons are independent, and since $ \p{\sign\p{v_1},\dots,\sign\p{v_n}} $ is uniformly distributed on the Boolean cube, we have
	\[
		\pr{\bigcap_{i=1}^n\overline{A_i^t}}\le\p{\frac{3}{4}}^n.
	\]
	Using the union bound on $\bigcap_{i=1}^n\overline{A_i^t}$ for $t=1,\dots, m$ we get
	\[
		\pr{\exists t\text{ s.t. no neuron predicts }\bx_t} \le m\p{\frac{3}{4}}^n.
	\]
	Thus, the probability of initializing from a basin achieving a global minimum with value $\alpha$ is at least
	\[
		1-m\p{\frac{3}{4}}^{n}.
	\]

\subsection{Proof of \thmref{thm:Clustered_data}}
\label{app:Clustered_data_proof}
The idea behind the proof is comprised of two parts. The first is that by predicting the clusters' centers well, we are able to obtain a good objective value over the data. The second is that the basin partition of the clustered data is similar to the basin partition of the clusters' centers. So by approximating a good solution for the clusters' centers, we are able to reach a good objective value.

Recall that by Definition \ref{def:Basin_partition}, we partition the parameter space $ \bbr^{n\times d} $ of the first layer into sets where $ \sign\p{\inner{\bw_i,\bx_t}} $ is fixed for all $ i\in\pcc n,t\in\pcc m $. This restricts the possible weight vector of each neuron in the first layer to a subset of $ \bbr^d $. Referring to these subsets as \emph{regions}, we observe that their structure varies slightly when changing $ \delta $ from $ 0 $ (where a cluster contains a single point) to a small positive quantity, where the new regions introduced by the clusters are referred to as \emph{noisy regions} (see \figref{fig:Noisy_regions}).

\begin{figure}
	\centering
	\includegraphics[scale=0.32]{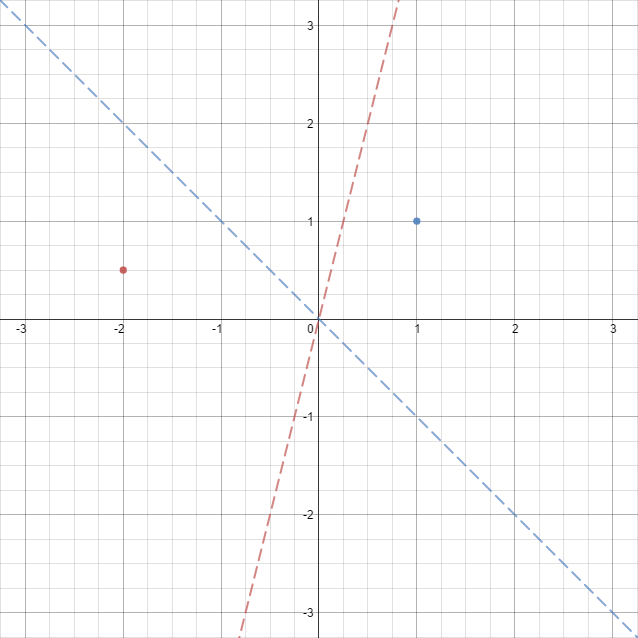}~~\includegraphics[scale=0.32]{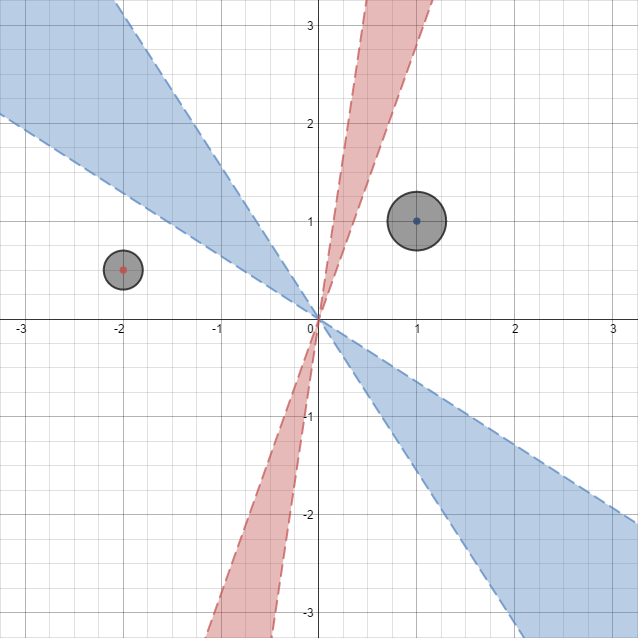}
	\caption{\footnotesize The partition of $ \bbr^2 $ into regions by the instances $\bc_1=\p{1,1},\bc_2=\p{-2,0.5}$, and the corresponding partition by clustered instances with centers $ \bc_1,\bc_2 $. The noisy regions are depicted by the light blue and light red.}
	\label{fig:Noisy_regions}
\par\end{figure}

To approximate a good solution for the clusters' centers, we need to initialize from a basin where such an approximation exists. Note that if $ \delta=0 $, then the result will hold as a corollary of \thmref{thm:Rank_m_data}. Alternatively, if $ \delta $ is small enough, then we would expect such an approximation to exist in the basins comprised of non-noisy regions, as these vary slightly when $ \delta $ is small. Therefore, we would like to assert that we initialize from these basins to guarantee the existence of a good solution.

Before delving into the proof of \thmref{thm:Clustered_data}, we first prove two auxiliary lemmas (\lemref{lem:non-noisy_init_lower_bound} and \lemref{lem:cluster_centers_surrogate}). The following lemma provides an upper bound on initializing a single neuron from a noisy region, for distributions satisfying Assumption \ref{assum:init_dist}.
\begin{lemma}
	\label{lem:non-noisy_init_lower_bound}
	Define the set of noisy regions with respect to the $ j^\text{th} $ cluster,
	\[
		A_j=\set{\bx:\norm{\bx}_2=1,\;\exists \by:\norm{\bc_j-\by}_2\le\delta_j,~\inner{\bx,\by}=0}.
	\]
	Then under the assumptions in \thmref{thm:Clustered_data}, its complement with respect to the $d$-dimensional unit sphere $A_j^c=\mathbb{S}^{d-1}\backslash A_j$ satisfies
	\[
		\frac{\sigma_{d-1}\p{A_j^c}}{\omega_{d-1}}\ge1-\frac{1}{4d}.
	\]
	Where $ \sigma_{d-1} $ is the $ \p{d-1} $-dimensional Lebesgue measure, and $ \omega_{d-1} $ is the surface area of the $d$-dimensional unit sphere.
\end{lemma}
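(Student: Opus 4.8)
The plan is to identify $A_j$ with an explicit spherical slab and then estimate the measure of that slab by a one‑dimensional integral, using the surface‑area machinery already developed for Claim~\ref{clm:spherical_init_lower_bound}. First I would carry out the geometric reduction: for a unit vector $\bx$, the hyperplane $H_\bx=\set{\by:\inner{\bx,\by}=0}$ meets the closed ball $\set{\by:\norm{\bc_j-\by}_2\le\delta_j}$ if and only if $\mathrm{dist}(\bc_j,H_\bx)\le\delta_j$, and since $\norm{\bx}_2=1$ this distance is exactly $\abs{\inner{\bx,\bc_j}}$. Hence
\[
A_j=\set{\bx\in\mathbb{S}^{d-1}:\abs{\inner{\bx,\hat{\bc}_j}}\le s_j},\qquad \hat{\bc}_j=\frac{\bc_j}{\norm{\bc_j}_2},\quad s_j=\frac{\delta_j}{\norm{\bc_j}_2},
\]
i.e. $A_j$ is the symmetric band of angular half‑width $\arcsin s_j$ about the great subsphere orthogonal to $\bc_j$.

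Next I would bound $\sigma_{d-1}(A_j)/\omega_{d-1}$. Parametrizing $\mathbb{S}^{d-1}$ by the angle $\xi$ to $\hat{\bc}_j$ and using the surface‑area identity \eqref{eq:hyper-spherical_cap_area} together with the substitution $u=\cos\xi$ gives
\[
\frac{\sigma_{d-1}(A_j)}{\omega_{d-1}}=\frac{\omega_{d-2}}{\omega_{d-1}}\int_{\pi/2-\arcsin s_j}^{\pi/2+\arcsin s_j}\sin^{d-2}\xi\,d\xi=\frac{\omega_{d-2}}{\omega_{d-1}}\int_{-s_j}^{s_j}\p{1-u^2}^{(d-3)/2}\,du.
\]
For $d\ge 3$ the integrand is at most $1$, so the integral is at most $2s_j$; for the prefactor I would use $\frac{\omega_{d-2}}{\omega_{d-1}}=\frac{1}{\sqrt{\pi}}\cdot\frac{\Gamma(d/2)}{\Gamma((d-1)/2)}$ and the elementary Gamma‑ratio bound $\frac{\Gamma((d-1)/2)}{\Gamma(d/2)}\ge\sqrt{\tfrac{2}{d-1}}$ (which follows from the monotonicity in $d$ of this ratio — a consequence of log‑convexity of $\Gamma$ — and the identity $\frac{\Gamma((d-1)/2)}{\Gamma((d+1)/2)}=\tfrac{2}{d-1}$), obtaining $\frac{\omega_{d-2}}{\omega_{d-1}}\le\sqrt{\tfrac{d}{2\pi}}$. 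Combining, $\frac{\sigma_{d-1}(A_j)}{\omega_{d-1}}\le s_j\sqrt{\tfrac{2d}{\pi}}$.

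Finally I would substitute the hypothesis. Using $\sin x\le x$, the assumption $\frac{\delta_j}{\norm{\bc_j}_2}\le 2\sin\p{\frac{\sqrt{2\pi}}{16d\sqrt d}}$ yields $s_j\le\frac{\sqrt{2\pi}}{8d\sqrt d}$, hence $s_j\sqrt{\tfrac{2d}{\pi}}\le\frac{1}{4d}$, i.e. $\frac{\sigma_{d-1}(A_j)}{\omega_{d-1}}\le\frac{1}{4d}$. Since $A_j$ and $A_j^c$ partition $\mathbb{S}^{d-1}$ up to a null set, $\frac{\sigma_{d-1}(A_j^c)}{\omega_{d-1}}=1-\frac{\sigma_{d-1}(A_j)}{\omega_{d-1}}\ge 1-\frac{1}{4d}$, as claimed.

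All the conceptual content sits in the geometric reduction of the first paragraph; the remainder is constant chasing, and the main thing to be careful about is keeping the factor $2$ in the hypothesis aligned with the $2s_j$ bound on the slab integral, so that the $\Gamma$‑ratio estimate is just tight enough to land on $\frac{1}{4d}$. The one genuinely separate case is $d=2$, where $\p{1-u^2}^{(d-3)/2}$ is no longer bounded by $1$; there I would instead bound the band by its angular length $2\arcsin s_j\le\pi s_j$ and use $\frac{\omega_0}{\omega_1}=\frac1\pi$, and the forced smallness of $s_j$ again makes the inequality hold with room to spare.
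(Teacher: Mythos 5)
Your proof is correct, and it takes a genuinely different route from the paper's. The paper never computes $A_j$ exactly: it lower-bounds $A_j^c$ by showing (Claim~\ref{clm:Set_containing_spherical_caps}) that it contains the two disjoint caps $\mathsf{S}\p{\pm\bc_j,\tfrac{\pi}{2}-\theta}$ with $\theta=2\arcsin\tfrac{\delta_j}{2\norm{\bc_j}}$, lower-bounds the cap integral $\int_0^{\pi/2-\theta}\sin^{d-2}\xi\,d\xi$ by the monotonicity argument of Claim~\ref{clm:Integral_lower_bound}, and cites \citep{leopardi2007distributing} for $\tfrac{\omega_{d-2}}{\omega_{d-1}}\le\sqrt{d/(2\pi)}$. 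You instead characterize $A_j$ exactly as the band $\set{\bx\in\mathbb{S}^{d-1}:\abs{\inner{\bx,\bc_j}}\le\delta_j}$ via the point-to-hyperplane distance, upper-bound its normalized measure by $\tfrac{\omega_{d-2}}{\omega_{d-1}}\cdot 2\delta_j/\norm{\bc_j}$ through the substitution $u=\cos\xi$ and $\p{1-u^2}^{(d-3)/2}\le1$ (valid for $d\ge3$; your separate treatment of $d=2$ is indeed needed and is correct), and you reprove the ratio bound $\tfrac{\omega_{d-2}}{\omega_{d-1}}\le\sqrt{d/(2\pi)}$ from log-convexity of $\Gamma$ rather than citing it; the constants then land on exactly $1-\tfrac{1}{4d}$ under the same hypothesis on $\delta_j/\norm{\bc_j}$. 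What your route buys: the exact band identity gives a two-sided handle on $\sigma_{d-1}\p{A_j}$ and sidesteps the geometry of the spherical projection of the cluster ball entirely — in particular the question of its angular radius, which Claim~\ref{clm:Set_containing_spherical_caps} takes to be $2\arcsin\tfrac{\delta_j}{2\norm{\bc_j}}$ although the tangent-cone half-angle of the ball is $\arcsin\tfrac{\delta_j}{\norm{\bc_j}}$, so your argument is arguably on firmer footing there. What the paper's route buys: it only needs a one-sided containment and reuses the cap-area machinery (\eqref{eq:hyper-spherical_cap_area} and the cap estimates) already deployed elsewhere in the appendix.
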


To prove the lemma we will need two auxiliary claims.

\begin{claim}
	\label{clm:Set_containing_spherical_caps}
	Let $\mathsf{S}\p{\ba,\theta}\coloneqq\set{\bb\in\mathbb{S}^{d-1}:\inner{\ba,\bb}>\cos\theta}$ denote the open hyperspherical cap of spherical radius $\theta$ and center $\bx$. Then
	\[
	\mathsf{S}\p{\bc_j,\frac{\pi}{2}-2\arcsin\frac{\delta_j}{2\norm{\bc_j}}}\dot{\cup}\mathsf{S}\p{-\bc_j,\frac{\pi}{2}-2\arcsin\frac{\delta_j}{2\norm{\bc_j}}}\subseteq A_j^c.
	\]
\end{claim}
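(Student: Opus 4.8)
The plan is to rewrite the defining property of $A_j$ as an explicit inequality for the single scalar $\inner{\bx,\bc_j}$, and then to check that inequality at every point of the two caps.

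For the reduction, observe that for a unit vector $\bx$ the set $\set{\by:\inner{\bx,\by}=0}$ is precisely the hyperplane through the origin with unit normal $\bx$, so its Euclidean distance to $\bc_j$ equals $\abs{\inner{\bx,\bc_j}}$. Hence $\bx\in A_j$ — i.e.\ some $\by$ with $\norm{\bc_j-\by}_2\le\delta_j$ lies on that hyperplane — if and only if the hyperplane meets the closed ball of radius $\delta_j$ about $\bc_j$, which happens exactly when $\abs{\inner{\bx,\bc_j}}\le\delta_j$. Therefore $A_j^c=\set{\bx\in\mathbb{S}^{d-1}:\abs{\inner{\bx,\bc_j}}>\delta_j}$, a set invariant under $\bc_j\mapsto-\bc_j$, so it suffices to show $\mathsf{S}\p{\bc_j,\tfrac{\pi}{2}-2\arcsin\tfrac{\delta_j}{2\norm{\bc_j}}}\subseteq A_j^c$, the antipodal cap following by applying the same argument with $-\bc_j$ in place of $\bc_j$.

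Next, I would fix $\bx$ in that cap, so that $\angle(\bx,\bc_j)<\tfrac{\pi}{2}-2\arcsin\tfrac{\delta_j}{2\norm{\bc_j}}$. Under the standing hypothesis $\tfrac{\delta_j}{\norm{\bc_j}}\le 2\sin\!\big(\tfrac{\sqrt{2\pi}}{16 d\sqrt{d}}\big)$ this angle bound is a mild perturbation of $\tfrac{\pi}{2}$, so in particular the angle is strictly below $\tfrac{\pi}{2}$ and every $\arcsin$ below is applied to an argument in $[0,1]$. Since cosine is decreasing on $[0,\pi]$,
\[
	\frac{\inner{\bx,\bc_j}}{\norm{\bc_j}}=\cos\angle(\bx,\bc_j)>\cos\!\p{\tfrac{\pi}{2}-2\arcsin\tfrac{\delta_j}{2\norm{\bc_j}}}=\sin\!\p{2\arcsin\tfrac{\delta_j}{2\norm{\bc_j}}}.
\]
Expanding the right-hand side with the double-angle identity $\sin 2\theta=2\sin\theta\cos\theta$ and $\sin(\arcsin t)=t,\ \cos(\arcsin t)=\sqrt{1-t^2}$ reduces the desired inclusion $\mathsf{S}\p{\bc_j,\cdot}\subseteq A_j^c$ to an elementary inequality in the single quantity $\delta_j/\norm{\bc_j}$, which (granted the smallness hypothesis) completes the argument; and since $\mathsf{S}$ is the \emph{open} cap and $A_j^c$ is open, all the inequalities in the chain remain strict.

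The distance-to-hyperplane reduction and the antipodal symmetry are routine. The only delicate point is the concluding trigonometric estimate, together with the bookkeeping around the small-angle regime (keeping every $\arcsin$ argument in $[0,1]$ and the relevant angle below $\tfrac{\pi}{2}$) — and this is also the only place where the theorem's assumption on $\delta_j/\norm{\bc_j}$ is genuinely consumed.
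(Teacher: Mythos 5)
Your reduction of $A_j^c$ is correct: for a unit vector $\bx$ the hyperplane $\set{\by:\inner{\bx,\by}=0}$ lies at distance $\abs{\inner{\bx,\bc_j}}$ from $\bc_j$, so indeed $A_j^c=\set{\bx\in\mathbb{S}^{d-1}:\abs{\inner{\bx,\bc_j}}>\delta_j}$. The genuine gap is exactly at the step you defer as a routine ``elementary inequality''. Membership in the stated cap gives $\inner{\bx,\bc_j}/\norm{\bc_j}>\sin\p{2\arcsin\frac{\delta_j}{2\norm{\bc_j}}}=\frac{\delta_j}{\norm{\bc_j}}\sqrt{1-\frac{\delta_j^2}{4\norm{\bc_j}^2}}$, whereas what you need is $\inner{\bx,\bc_j}/\norm{\bc_j}>\frac{\delta_j}{\norm{\bc_j}}$; so the inequality you would have to prove is $\sqrt{1-\delta_j^2/(4\norm{\bc_j}^2)}\ge 1$, which is false for every $\delta_j>0$, and the smallness hypothesis on $\delta_j/\norm{\bc_j}$ cannot rescue it --- it only makes the deficit (of order $(\delta_j/\norm{\bc_j})^3$) small, not zero. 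Concretely, for $d\ge 2$ any unit $\bx$ with $\inner{\bx,\bc_j/\norm{\bc_j}}$ strictly between $\sin\p{2\arcsin\frac{\delta_j}{2\norm{\bc_j}}}$ and $\frac{\delta_j}{\norm{\bc_j}}$ lies in the stated cap yet belongs to $A_j$. So the proposal does not establish the claim; what your (correct) computation actually reveals is that the claim is stated with the wrong radius: $A_j^c$ equals the union of the two open caps of spherical radius $\frac{\pi}{2}-\arcsin\frac{\delta_j}{\norm{\bc_j}}$ about $\pm\bc_j/\norm{\bc_j}$, and since $2\arcsin\frac{\delta_j}{2\norm{\bc_j}}<\arcsin\frac{\delta_j}{\norm{\bc_j}}$ the caps in the statement are slightly too large.

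For comparison, the paper's proof takes a somewhat different route --- it radially projects the ball $\set{\bz:\norm{\bc_j-\bz}_2\le\delta_j}$ onto the sphere and applies the spherical triangle inequality --- and the same issue is hidden there: it asserts that this projection is a cap of spherical radius $2\arcsin\frac{\delta_j}{2\norm{\bc_j}}$, whereas the tangent-line geometry gives radius $\arcsin\frac{\delta_j}{\norm{\bc_j}}$, which is strictly larger. With the corrected radius both arguments close immediately; yours is in fact the cleaner one, since $\cos\p{\frac{\pi}{2}-\arcsin\frac{\delta_j}{\norm{\bc_j}}}=\frac{\delta_j}{\norm{\bc_j}}$ gives the inclusion (indeed an equality) in one line, and the only downstream effect is a trivial adjustment of the constant in the assumption $\frac{\delta_j}{\norm{\bc_j}}\le 2\sin\p{\frac{\sqrt{2\pi}}{16d\sqrt{d}}}$ used in \lemref{lem:non-noisy_init_lower_bound}. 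In short: right strategy, arguably better than the paper's, but the step you flagged as the only delicate one is precisely where the argument (and the statement as written) breaks, so it cannot be waved through.
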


\begin{proof}
	Clearly, the two open hyperspherical caps are disjoint, as they are of spherical radius $\le\frac{\pi}{2}$ and the two originate in two diametrically opposite points. Assume $\bx\in\mathsf{S}\p{\bc_j,\frac{\pi}{2}-2\arcsin\frac{\delta_j}{2\norm{\bc_j}}}$, then the projection of $ \set{\bz:\norm{\bc_j-\bz}_2\le\delta_j} $ onto $\mathbb{S}^{d-1}$, denoted $P_j$, is a hyperspherical cap of spherical radius $\theta\coleq 2\arcsin\frac{\delta_j}{2\norm{\bc_j}}$. Since the dot product is a bi-linear operation, it suffices to show that $\forall\by\in P_j\;\inner{\tilde{\bx},\by}\neq 0$, where $\tilde{\bx}\in\mathbb{S}^{d-1}$ is the projection of $\bx$ onto $\mathbb{S}^{d-1}$.
	
	Let $\by\in P_j$, using the fact that $\mathsf{s}:\mathbb{S}^{d-1}\times \mathbb{S}^{d-1}\to \bbr_+$, the spherical distance function defined by $ \mathsf{s}\p{\ba,\bb}\coleq\arccos\p{\inner{\ba,\bb}} $, satisfies the triangle inequality we have
	\begin{align*}
		\mathsf{s}\p{\tilde{\bx},\by} &\le \mathsf{s}\p{\tilde{\bx},\bc_j}+\mathsf{s}\p{\bc_j,\by}\\
		&< \frac{\pi}{2}-\theta+\theta\\
		&= \frac{\pi}{2},
	\end{align*}
	\[
		\implies\inner{\tilde{\bx},\by}\neq 0.
	\]
	Where the same argument works for $\bx\in\mathsf{S}\p{-\bc_j,\frac{\pi}{2}-2\arcsin\frac{\delta_j}{2\norm{\bc_j}}}$
	and $-P_j$.
\end{proof}

Moving on to our next auxiliary claim.
\begin{claim}
	\label{clm:Integral_lower_bound}
	$ \forall\theta\ge 0 $ we have
	\[
		\int_0^{\frac{\pi}{2}-\theta}\sin^{d-2}\xi d\xi\ge\frac{\omega_{d-1}}{2\omega_{d-2}}-\theta.
	\]
\end{claim}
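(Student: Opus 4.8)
The plan is to reduce the inequality to a statement about the surface area of a hyperspherical cap, and then to bound the ``missing'' cap near the equator by a crude slab estimate. First I would recall from \eqref{eq:hyper-spherical_cap_area} that $\int_0^{\pi/2}\sin^{d-2}\xi\,d\xi = \frac{\nu_{d-1}(\pi/2)}{\omega_{d-2}} = \frac{\omega_{d-1}}{2\omega_{d-2}}$, since a hyperspherical cap of spherical radius $\pi/2$ is exactly a hemisphere, whose area is $\omega_{d-1}/2$. Hence the claim is equivalent to showing
\[
	\int_{\frac{\pi}{2}-\theta}^{\frac{\pi}{2}}\sin^{d-2}\xi\,d\xi \;\le\; \theta ,
\]
i.e.\ that the integral over the length-$\theta$ interval just below $\pi/2$ is at most $\theta$. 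This is the reformulation I would carry out as the first step; note also that for $\theta \ge \pi/2$ the right-hand side $\frac{\omega_{d-1}}{2\omega_{d-2}}-\theta$ may already be negative while the left-hand integral is nonnegative (and for $\theta\ge\pi/2$ the integral is taken to start at $0$, i.e.\ we only integrate over $[0,\pi/2]$ where the integrand is supported within $[0,\pi]$), so it suffices to treat $\theta\in[0,\pi/2]$; I would remark on this boundary case briefly.

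The second step is the actual bound on $\int_{\pi/2-\theta}^{\pi/2}\sin^{d-2}\xi\,d\xi$. The cleanest route is to observe that $\sin^{d-2}\xi \le 1$ for all $\xi$ and $d\ge 2$, so the integral is at most the length of the integration interval, which is exactly $\theta$. This immediately gives the desired inequality. (One could get a sharper bound using $\sin\xi \le 1$ only near the equator and exploiting that $\sin^{d-2}$ decays, but the trivial bound already suffices here, so I would not pursue the refinement.)

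Putting the two steps together: for $\theta\in[0,\pi/2]$,
\[
	\int_0^{\frac{\pi}{2}-\theta}\sin^{d-2}\xi\,d\xi \;=\; \int_0^{\frac{\pi}{2}}\sin^{d-2}\xi\,d\xi - \int_{\frac{\pi}{2}-\theta}^{\frac{\pi}{2}}\sin^{d-2}\xi\,d\xi \;\ge\; \frac{\omega_{d-1}}{2\omega_{d-2}} - \theta,
\]
and for $\theta > \pi/2$ the right-hand side is at most $\frac{\omega_{d-1}}{2\omega_{d-2}}-\frac{\pi}{2} \le 0 \le \int_0^{\pi/2}\sin^{d-2}\xi\,d\xi$, using that $\frac{\omega_{d-1}}{2\omega_{d-2}} = \int_0^{\pi/2}\sin^{d-2}\xi\,d\xi \le \pi/2$. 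I do not anticipate a genuine obstacle here; the only mild subtlety is bookkeeping the degenerate large-$\theta$ regime and making sure the identity $\int_0^{\pi/2}\sin^{d-2}\xi\,d\xi = \omega_{d-1}/(2\omega_{d-2})$ is invoked correctly from the hemisphere area, which is where I would be most careful.
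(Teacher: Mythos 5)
Your proof is correct and in substance the same as the paper's: both rest on the hemisphere identity $\int_0^{\pi/2}\sin^{d-2}\xi\,d\xi=\frac{\omega_{d-1}}{2\omega_{d-2}}$ together with the bound $\sin^{d-2}\xi\le 1$, the paper merely packaging this as the statement that $f(\theta)=\int_0^{\pi/2-\theta}\sin^{d-2}\xi\,d\xi-\bigl(\frac{\omega_{d-1}}{2\omega_{d-2}}-\theta\bigr)$ has nonnegative derivative and vanishes at $\theta=0$, which is exactly your estimate of the equatorial band by its length. Your aside on $\theta>\pi/2$ is slightly muddled (under the signed convention the left-hand integral need not be nonnegative), but the same bound $\abs{\sin^{d-2}\xi}\le 1$ disposes of that degenerate case, so nothing is lost.
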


\begin{proof}
	Consider the function $f\p{\theta}=\p{\int_0^{\frac{\pi}{2}-\theta}\sin^{d-2}\xi d\xi}-\p{\frac{\omega_{d-1}}{2\omega_{d-2}}-\theta}$,
	it is monotonically increasing in $\pco{0,\infty}$ since
	\begin{align*}
		f^{\prime}\p{\theta} &=
		\frac{\partial}{\partial\theta}\p{\p{\int_0^{\frac{\pi}{2}-\theta}\sin^{d-2}\xi d\xi}-\p{\frac{\omega_{d-1}}{2\omega_{d-2}}-\theta}}\\
		&= -\sin^{d-2}\p{\frac{\pi}{2}-\theta}+1\\
		&\ge 0.
	\end{align*}
	And since $f\p 0=0$ we have $\forall\theta\in\pco{0,\infty}$ that $\int_0^{\frac{\pi}{2}-\theta}\sin^{d-2}\xi d\xi\ge\frac{\omega_{d-1}}{2\omega_{d-2}}-\theta$.
\end{proof}

We now turn to prove \lemref{lem:non-noisy_init_lower_bound}.

\begin{proof}[Proof (of \lemref{lem:non-noisy_init_lower_bound})]
	Using Claims \ref{clm:Set_containing_spherical_caps} and
	\ref{clm:Integral_lower_bound}, 			\eqref{eq:hyper-spherical_cap_area} and the fact that $\forall d\ge2\;\frac{\omega_{d-1}}{\omega_d}\le\sqrt{\frac{d}{2\pi}}$
	(\citep{leopardi2007distributing}, Lemma~2.3.20), we have the following:
	\begin{align*}
		\frac{\sigma_{d-1}\p{A_j^c}}{\omega_{d-1}} &\ge \frac{\sigma_{d-1}\p{\mathsf{S}\p{\bc_j,\frac{\pi}{2}-\theta}}+\sigma_{d-1}\p{\mathsf{S}\p{-\bc_j,\frac{\pi}{2}-\theta}}}{\omega_{d-1}}\\
		&= \frac{2\nu_{d-1}\p{\frac{\pi}{2}-\theta}}{\omega_{d-1}}\\
		&= 2\frac{\omega_{d-2}}{\omega_{d-1}}\int_0^{\frac{\pi}{2}-\theta}\sin^{d-2}\xi d\xi\\
		&\ge 2\frac{\omega_{d-2}}{\omega_{d-1}}\p{\frac{\omega_{d-1}}{2\omega_{d-2}}-\theta}\\
		&= 1-\frac{2\omega_{d-2}\theta}{\omega_{d-1}}\\
		&\ge 1-4\sqrt{\frac{d}{2\pi}}\cdot\arcsin\frac{\delta_j}{2\norm{\bc_j}}\\
		&\ge 1-4\sqrt{\frac{d}{2\pi}}\cdot\arcsin\p{\sin\p{\frac{\sqrt{2\pi}}{16d\sqrt{d}}}}\\
		&= 1-\frac{1}{4d}.
	\end{align*}
\end{proof}

Now that we can bound the probability of initializing from a noisy region $ A_j,j\in\pcc k $, we turn to show that with high probability, a solution with $ \mathcal{O}\p{\delta^2} $ value can be found. Let $ C $ be the matrix with rows $ \bc_1,\dots,\bc_k $, then by \thmref{thm:Rank_m_data} we know that with high probability there exists some $ \p{\tilde{W},\tilde{\bv}} $ which achieves a value of $ 0 $ on the dataset $ \set{\bc_j,\hat{y}_j}_{j=1}^k $, and since the cluster target values are $ \gamma $-Lipschitz, this $ \p{\tilde{W},\tilde{\bv}} $ will also perform well on $ S $. Unfortunately, we cannot guarantee that $ \p{\tilde{W},\tilde{\bv}} $ resides in the basin we initialized from, as this guarantee can only be given on the basin partition where $ \delta=0 $. Instead, we take a surrogate $ \p{W^\prime,\bv^\prime} $ which approximates $ \p{\tilde{W},\tilde{\bv}} $ well, and then show that the value it achieves is also of magnitude $ \mathcal{O}\p{\delta^2} $. More formally, we have the following lemma.

\begin{lemma}
	\label{lem:cluster_centers_surrogate}
	Let $ C $ be a matrix with rows $ \bc_1,\dots,\bc_k $, satisfying $ \rank\p{C}=k $. Let $ \p{W,\bv}\in B_S^{A,\bb} $ satisfy $ \forall j\in\pcc k,\; \exists i\in\pcc n:\; \bw_i\notin \cup_r A_r,\inner{\bw_i,\bc_j}>0,v_i\cdot\hat{y}_j\ge0$. Then exist $ \p{\tilde{W},\tilde{\bv}} $ and $ \p{W^\prime,\bv^\prime} $ where the following holds:
	\begin{enumerate}
		\item 
		$ \p{\tilde{W},\tilde{\bv}} $ predicts $ y_t $ well:
		\[
			\abs{N_n\p{\tilde{W},\tilde{\bv}}-y_t}\le \delta\p{ n\frac{\sigma_{\max}\p{C^\top}}{\sigma_{\min}^2\p{C^\top}}\norm{\hat{\by}}_2+2\gamma}.
		\]
		\item
		$ \p{W^\prime,\bv^\prime}\in B_S^{A,\bb} $ approximates $ \p{\tilde{W},\tilde{\bv}} $ well:
		\[
			\abs{N_n\p{W^\prime,\bv^\prime}-N_n\p{\tilde{W},\tilde{\bv}}} \le nB\cdot\frac{\delta}{c}\frac{\sigma_{\max}\p{C^\top}}{\sigma_{\min}^2\p{C^\top}}\norm{\hat{\by}}_2.
		\]
		
	\end{enumerate}
\end{lemma}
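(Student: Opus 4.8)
The plan is to use the same two-stage device that underlies the $\delta=0$ case (\thmref{thm:Rank_m_data}), with the two items of the lemma corresponding to the two stages: first build an \emph{idealized} network $(\tilde W,\tilde\bv)$ interpolating the cluster \emph{centers} exactly, and then \emph{safely perturb} it into a surrogate $(W',\bv')$ that actually lies in the prescribed basin $B_S^{A,\bb}$.

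\textbf{Step 1 (idealized network; item~1).} I would take $\tilde\bv=(\sign v_1,\dots,\sign v_n)$ and $\tilde\bw_i=C^\top(CC^\top)^{-1}\tilde\by_i'$, where $\tilde y_{i,j}'=|\hat y_j|$ if $i$ is the least index with $\bw_i\notin\bigcup_r A_r$ (i.e., the hyperplane of $\bw_i$ misses every cluster ball), $\inner{\bw_i,\bc_j}>0$ and $v_i\hat y_j\ge 0$, and $\tilde y_{i,j}'=0$ otherwise; this adapts the construction in \appref{app:Rank_m_data} with the data matrix replaced by $C$. Since $\rank(C)=k$, $CC^\top$ is invertible, so $\tilde\bw_i$ is well defined and $\inner{\bc_j,\tilde\bw_i}=\tilde y_{i,j}'$; by the lemma's hypothesis every cluster has exactly one such ``designated'' neuron, and a computation as in \appref{app:Rank_m_data} (using $v_i\hat y_j\ge 0$) gives $N_n(\tilde W,\tilde\bv)(\bc_j)=\hat y_j$ for every $j$. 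For $\bx_t$ in cluster $j$ and a representative instance $\bx_s$ of cluster $j$ with $y_s=\hat y_j$, the triangle inequality together with the $1$-Lipschitzness of $\relu{\cdot}$ yields $|N_n(\tilde W,\tilde\bv)(\bx_t)-y_t|\le\bigl(\sum_i|\tilde v_i|\,\norm{\tilde\bw_i}\bigr)\norm{\bx_t-\bc_j}+|\hat y_j-y_t|$. The first summand is at most $\delta_j\,n\,\tfrac{\sigma_{\max}(C^\top)}{\sigma_{\min}^2(C^\top)}\norm{\hat\by}_2$, using $|\tilde v_i|\le 1$, $\norm{\bx_t-\bc_j}\le\delta_j$, and $\norm{\tilde\bw_i}\le\norm{C^\top}_{\mathrm{op}}\norm{(CC^\top)^{-1}}_{\mathrm{op}}\norm{\tilde\by_i'}\le\tfrac{\sigma_{\max}(C^\top)}{\sigma_{\min}^2(C^\top)}\norm{\hat\by}_2$; the second is at most $\gamma(\norm{\bx_t-\bc_j}+\norm{\bc_j-\bx_s})\le 2\gamma\delta_j$ by the intra-cluster Lipschitz assumption. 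Since $\delta_j\le\delta$, this is item~1.

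\textbf{Step 2 (safe surrogate; item~2).} I would keep $\bv'=\tilde\bv$, set $\bw_i'=\mathbf 0$ whenever $\tilde\bw_i=\mathbf 0$ (such neurons contribute nothing, and $\mathbf 0\in B_S^{A,\bb}$, being a limit of positive rescalings of $\bw_i$), and for each designated neuron $i$ take $\bw_i'=C^\top(CC^\top)^{-1}\bu_i$ with $(\bu_i)_r=\tilde y_{i,r}'+M\,\sign\inner{\bw_i,\bc_r}$ for a single margin $M=\Theta\!\bigl(\delta\,\tfrac{\sigma_{\max}(C^\top)}{\sigma_{\min}^2(C^\top)}\norm{\hat\by}_2\bigr)$. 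Because $\bw_i\notin\bigcup_r A_r$, one has $\inner{\bw_i,\bc_r}\ne 0$ for every $r$ and the entire ball $B(\bc_r,\delta_r)$ lies strictly on one side of the hyperplane of $\bw_i$ (the functional $\inner{\bw_i,\cdot}$ has no zero on this connected set), so $\sign\inner{\bw_i,\bx_t}=\sign\inner{\bw_i,\bc_r}=a_{i,t}$ for every $\bx_t$ in cluster $r$. Now $\inner{\bc_r,\bw_i'}=\tilde y_{i,r}'+M\,\sign\inner{\bw_i,\bc_r}$, and choosing $M$ large enough that $M\ge\delta_r\norm{\bw_i'}$ for all $r$ — a requirement that closes because $M$ enters $\norm{\bw_i'}$ only through an $O(\delta)$ term, provided $\delta$ is small enough (as guaranteed by the assumptions of \thmref{thm:Clustered_data}) — forces $\inner{\bw_i',\bx_t}$ to have the same sign $a_{i,t}$ for all $\bx_t$ in cluster $r$. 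Hence $(W',\bv')\in B_S^{A,\bb}$. Finally $\bw_i'-\tilde\bw_i=M\,C^\top(CC^\top)^{-1}\bs_i$ with $\bs_i$ the vector of signs $\sign\inner{\bw_i,\bc_r}$, so $\norm{\bw_i'-\tilde\bw_i}=O(\delta)$; since $N_n(\cdot)(\bx_t)$ is $(|v_i'|\norm{\bx_t})$-Lipschitz in each weight vector $\bw_i$ (\lemref{lem:Lipschitz_loss}), hence $B$-Lipschitz on the data as $|v_i'|=1$ on the designated neurons and $\norm{\bx_t}\le B$, replacing $\tilde W$ by $W'$ one coordinate at a time changes the prediction on $\bx_t$ by at most $B\sum_i\norm{\bw_i'-\tilde\bw_i}$; collecting the constants gives item~2.

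\textbf{Main obstacle.} The subtle part is Step 2: $\tilde\bw_i$ passes \emph{exactly} through the centers of the clusters it does not predict, so its hyperplane can come arbitrarily close to those centers and end up on the wrong side of some data point; thus $(\tilde W,\tilde\bv)\notin B_S^{A,\bb}$ in general, and one cannot perturb in an arbitrary small direction without reintroducing a sign violation. The remedy is to push every hyperplane off every center by the controlled amount $M$ \emph{in the direction dictated by $\bw_i$} — which is precisely where the hypothesis $\bw_i\notin\bigcup_r A_r$ is indispensable, as it both makes $\sign\inner{\bw_i,\bc_r}$ meaningful and pins down the side $a_{i,t}$ that must be reproduced. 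Choosing $M$ to balance basin membership (surviving the $\delta_r$-wide cluster balls) against closeness to $(\tilde W,\tilde\bv)$, and carrying these estimates through to the stated closed-form bounds, is the bulk of the remaining work.
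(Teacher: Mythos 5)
Your proposal is correct in outline and coincides with the paper's proof in its first half: the paper also defines $\p{\tilde{W},\tilde{\bv}}$ exactly as you do --- via the rank-$k$ interpolation $\tilde{\bw}_i=C^\top\p{CC^\top}^{-1}\by_i^\prime$ borrowed from the proof of \thmref{thm:Rank_m_data} applied to the cluster centers --- and proves item~1 by the same decomposition (Lipschitzness of $N_n$ in $\bx$, Claim~\ref{clm:w_tilde_norm_bound}, and the intra-cluster $\gamma$-Lipschitz assumption on the targets). Where you genuinely diverge is item~2. The paper takes $\bw_i^\prime$ to be the point of the prescribed region closest to $\tilde{\bw}_i$ and bounds $\norm{\bw_i^\prime-\tilde{\bw}_i}\le\frac{\delta_i}{\norm{\bc_i}}\norm{\tilde{\bw}_i}\le\frac{\delta}{c}\norm{\tilde{\bw}_i}$ by a similar-triangles argument (the noisy cone around each center has angular half-width $\arcsin\p{\delta_i/\norm{\bc_i}}$, and $\tilde{\bw}_i$ sits exactly on the hyperplane $\inner{\cdot,\bc_r}=0$ for the clusters it does not predict). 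You instead build $\bw_i^\prime$ by re-solving the linear system with a margin $M$ added at every center; this is more explicit than the paper's geometric sketch, correctly diagnoses why $\p{\tilde{W},\tilde{\bv}}$ itself need not lie in $B_S^{A,\bb}$, and isolates the role of the hypothesis $\bw_i\notin\cup_r A_r$. Two costs of your route, both quantitative: (i) your perturbation is $\norm{\bw_i^\prime-\tilde{\bw}_i}=M\norm{C^\top\p{CC^\top}^{-1}\bs_i}=O\p{\delta\sqrt{k}\cdot\frac{\sigma_{\max}^2\p{C^\top}}{\sigma_{\min}^4\p{C^\top}}\norm{\hat{\by}}_2}$ rather than the stated $\frac{\delta}{c}\cdot\frac{\sigma_{\max}\p{C^\top}}{\sigma_{\min}^2\p{C^\top}}\norm{\hat{\by}}_2$, so you obtain item~2 only up to a larger constant --- harmless for \thmref{thm:Clustered_data}, which absorbs everything into $\Ocal\p{\delta^2}$, but not the literal displayed inequality; (ii) closing the self-referential requirement $M\ge\delta_r\norm{\bw_i^\prime}$ needs something like $\delta\sqrt{k}\,\sigma_{\max}\p{C^\top}\sigma_{\min}^{-2}\p{C^\top}<1$, an extra smallness condition on $\delta$ that is not among the hypotheses of the lemma or of \thmref{thm:Clustered_data} (whose assumption only controls $\delta_j/\norm{\bc_j}$). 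The paper's nearest-point argument avoids both issues, at the price of a less formal justification of the distance bound.
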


Before proving the lemma, we state and prove the following two auxiliary claims.

\begin{claim}
	\label{clm:w_tilde_norm_bound}
	Let $\tilde{\bw}_i\coleq\sum_{j=1}^k a_{i,j}\bc_j=C^{\top}\ba_i$, where $ \ba_i $ satisfies the equality $ CC^{\top}\ba_i=\by_i^{\prime} $ as in \appref{app:Rank_m_data}. Then for all $ i\in\pcc n $,
	\[
		\norm{\tilde{\bw}_i}_2 \le \frac{\sigma_{\max}\p{C^\top}}{\sigma_{\min}^2\p{C^\top}}\norm{\hat{\by}}_2.
	\]
\end{claim}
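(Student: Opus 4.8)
The plan is a short linear-algebra estimate. Since $\rank(C)=k$ and $C\in\bbr^{k\times d}$, the Gram matrix $CC^\top\in\bbr^{k\times k}$ is symmetric positive definite, hence invertible, so the defining relation $CC^\top\ba_i=\by_i'$ has the unique solution $\ba_i=\p{CC^\top}^{-1}\by_i'$, and therefore $\tilde{\bw}_i=C^\top\p{CC^\top}^{-1}\by_i'$. I would then split the norm bound into three pieces and multiply them.

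First, $\norm{\tilde{\bw}_i}_2=\norm{C^\top\ba_i}_2\le\sigma_{\max}\p{C^\top}\norm{\ba_i}_2$, since $\sigma_{\max}\p{C^\top}$ is the operator norm of $C^\top$. Second, $\norm{\ba_i}_2=\norm{\p{CC^\top}^{-1}\by_i'}_2\le\norm{\p{CC^\top}^{-1}}_{\mathrm{op}}\norm{\by_i'}_2$; the eigenvalues of $CC^\top$ are the squares of the singular values of $C^\top$ (equivalently of $C$), and because $\rank(C)=k$ all $k$ of them are strictly positive, so the smallest eigenvalue of $CC^\top$ is $\sigma_{\min}^2\p{C^\top}$ and hence $\norm{\p{CC^\top}^{-1}}_{\mathrm{op}}=\sigma_{\min}^{-2}\p{C^\top}$. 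Third, by the clustered-data analogue of the construction in \appref{app:Rank_m_data} (where the "data" are the cluster centers $\bc_1,\dots,\bc_k$ with targets $\hat{y}_j$), each coordinate of $\by_i'$ equals either $0$ or $\abs{\hat{y}_j}$, so $\norm{\by_i'}_2^2\le\sum_{j=1}^k\hat{y}_j^2=\norm{\hat{\by}}_2^2$. Combining the three inequalities yields exactly $\norm{\tilde{\bw}_i}_2\le\frac{\sigma_{\max}\p{C^\top}}{\sigma_{\min}^2\p{C^\top}}\norm{\hat{\by}}_2$, as claimed.

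There is no genuine obstacle here; the only points requiring care are invoking $\rank(C)=k$ to guarantee $CC^\top$ is invertible and to identify its smallest eigenvalue as $\sigma_{\min}^2\p{C^\top}>0$, and recalling the explicit $\set{0,\abs{\hat{y}_j}}$-valued form of $\by_i'$ so that $\norm{\by_i'}_2\le\norm{\hat{\by}}_2$. (The estimate is in fact not tight — using the right pseudoinverse $C^\top\p{CC^\top}^{-1}$ directly, whose operator norm is $\sigma_{\min}^{-1}\p{C^\top}$, would give the sharper bound $\sigma_{\min}^{-1}\p{C^\top}\norm{\hat{\by}}_2$ — but the stated form is all that \lemref{lem:cluster_centers_surrogate} requires.)
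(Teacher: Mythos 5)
Your argument is correct and is essentially identical to the paper's proof: both solve $\ba_i=\p{CC^{\top}}^{-1}\by_i^{\prime}$, bound $\norm{C^{\top}\p{CC^{\top}}^{-1}\by_i^{\prime}}_2$ by the product of operator norms $\sigma_{\max}\p{C^\top}\cdot\sigma_{\min}^{-2}\p{C^\top}$, and finish with $\norm{\by_i^{\prime}}_2\le\norm{\hat{\by}}_2$. Your parenthetical observation that the pseudoinverse gives the sharper constant $\sigma_{\min}^{-1}\p{C^\top}$ is also correct, though not needed.
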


\begin{proof}
	We derive a bound on $\norm{\tilde{\bw}_i}_2$ as follows:
	\begin{align*}
		CC^{\top}\ba_i &= \by_i^{\prime},\\
		\implies\ba_i &= \p{CC^{\top}}^{-1}\by_i^{\prime},\\
		\implies C^{\top}\ba_i &= C^{\top}\p{CC^{\top}}^{-1}\by_i^{\prime},\\
		\implies\norm{\tilde{\bw}_i}_2 &= \norm{C^{\top}\p{CC^{\top}}^{-1}\by_i^{\prime}}_2\\
		&\le \norm{C^{\top}}_{\text{op}}\norm{\p{CC^{\top}}^{-1}}_{\text{op}}\norm{\by_i^{\prime}}_2\\
		&= \sigma_{\max}\p{C^\top}\cdot\frac{1}{\sigma_{\min}^2\p{C^\top} }\norm{\by_i^{\prime}}_2\\
		&\le \frac{\sigma_{\max}\p{C^\top}}{\sigma_{\min}^2\p{C^\top}}\norm{\hat{\by}}_2.
	\end{align*}
\end{proof}

\begin{claim}
	\label{clm:Lipschiz_in_x} $N_n\p{\bw_1,\dots,\bw_n,\bv}\p{\bx}$ is $\p{\sum_{i=1}^n\abs{v_i}\cdot\norm{\bw_i}}$-Lipschitz in $\bx$.
\end{claim}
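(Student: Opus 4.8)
The plan is to reduce the statement to two elementary facts: the ReLU map $z\mapsto\relu{z}$ is $1$-Lipschitz on $\bbr$, and the Cauchy--Schwarz inequality. First I would fix arbitrary inputs $\bx,\bx^\prime\in\bbr^d$, write the network in its explicit form $N_n\p{\bw_1,\dots,\bw_n,\bv}\p{\bx}=\sum_{i=1}^n v_i\relu{\inner{\bw_i,\bx}}$, subtract the value at $\bx^\prime$, and apply the triangle inequality to obtain
\[
	\abs{N_n\p{W,\bv}\p{\bx}-N_n\p{W,\bv}\p{\bx^\prime}}~\le~\sum_{i=1}^n\abs{v_i}\cdot\abs{\relu{\inner{\bw_i,\bx}}-\relu{\inner{\bw_i,\bx^\prime}}}.
\]

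The next step is to bound each summand. Using that $\relu{\cdot}$ is $1$-Lipschitz (which follows from $\abs{\relu{a}-\relu{b}}\le\abs{a-b}$ for all $a,b\in\bbr$, for example by checking the sign cases of $a$ and $b$), each factor $\abs{\relu{\inner{\bw_i,\bx}}-\relu{\inner{\bw_i,\bx^\prime}}}$ is at most $\abs{\inner{\bw_i,\bx-\bx^\prime}}$, which by Cauchy--Schwarz is at most $\norm{\bw_i}\cdot\norm{\bx-\bx^\prime}$. Substituting this back into the displayed bound and factoring out $\norm{\bx-\bx^\prime}$ gives exactly
\[
	\abs{N_n\p{W,\bv}\p{\bx}-N_n\p{W,\bv}\p{\bx^\prime}}~\le~\p{\sum_{i=1}^n\abs{v_i}\cdot\norm{\bw_i}}\norm{\bx-\bx^\prime},
\]
which is the claimed Lipschitz property.

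There is essentially no obstacle here; this is a one-line computation, and the only point worth spelling out explicitly is the $1$-Lipschitzness of the ReLU. If desired, one can remark that the identical argument—isolating a single coordinate block $\bw_i$, and using $1$-Lipschitzness of the ReLU together with the Cauchy--Schwarz bound $\abs{v_i}\cdot\norm{\bx}$ on the resulting slope—also recovers \lemref{lem:Lipschitz_loss}.
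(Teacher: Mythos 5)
Your proof is correct, and it is exactly the argument the paper has in mind: the paper omits the proof of this claim as following the same idea as \lemref{lem:Lipschitz_loss} (itself deemed immediate from the definition of $N_n$), namely the triangle inequality, the $1$-Lipschitzness of the ReLU, and Cauchy--Schwarz. Nothing further is needed.
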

The proof of this claim follows the same idea behind \lemref{lem:Lipschitz_loss}, and is therefore omitted.

We are now ready to prove \lemref{lem:cluster_centers_surrogate}.
\begin{proof}[Proof (of \lemref{lem:cluster_centers_surrogate})]
	We first define $ \p{\tilde{W},\tilde{\bv}} $ as the point satisfying $ E_{S^\prime}\p{\tilde{W},\tilde{\bv}}=0 $, as demonstrated in \appref{app:Rank_m_data}. Defining $ \p{W^\prime,\bv^\prime} $, we let $ \bv^\prime=\tilde{\bv} \in \set{-1,+1}^n $. If $ \p{\tilde{\bw}_i,\bw_i^\prime} $ both belong to the same region with respect to $ S $, then take $ \bw_i^\prime = \tilde{\bw}_i $. Otherwise, we approximate $ \tilde{\bw}_i $ in the $ \norm{\cdot}_2 $ sense, by taking $ \bw_i^\prime $ in the region we initialized from which is closest to $ \tilde{\bw}_i $.
	
	\begin{enumerate}
		\item
		We compute using Claims \ref{clm:w_tilde_norm_bound} and \ref{clm:Lipschiz_in_x},
		\begin{align*}
		& \abs{N_n\p{\tilde{W},\tilde{\bv}}\p{\bx_t}-y_t} \\
		=&\abs{N_n\p{\tilde{W},\tilde{\bv}}\p{\bx_t}-N_n\p{\tilde{W},\tilde{\bv}}\p{\mathsf{c}\p{\bx_t}}+N_n\p{\tilde{W},\tilde{\bv}}\p{\mathsf{c}\p{\bx_t}}-y_t}\\
		\le& \abs{N_n\p{\tilde{W},\tilde{\bv}}\p{\bx_t}-N_n\p{\tilde{W},\tilde{\bv}}\p{\mathsf{c}\p{\bx_t}}}+\abs{N_n\p{\tilde{W},\tilde{\bv}}\p{\mathsf{c}\p{\bx_t}}-y_t}\\
		\le& \sum_{i=1}^n \norm{\tilde{\bw}_i}\cdot\norm{\bx_t-\mathsf{c}\p{\bx_t}}+\abs{\hat{y}_t-y_t}\\
		\le & \delta \p{n\frac{\sigma_{\max}\p{C^\top}}{\sigma_{\min}^2\p{C^\top}}\norm{\hat{\mathbf{y}}}_{2}+2\gamma}.
		\end{align*}
		where the last inequality comes from $y_t,\hat{y}_t$ being the target values of points belonging to a ball of diameter at most $2\delta$ and the target values being $\gamma$-Lipschitz.
		\item
		Note that by definition we have $ \p{W^\prime,\bv^\prime}\in B_S^{A,\bb} $. Denote the origin as $O$. In the worst case, $ \tilde{\bw} $ is on the line connecting $ O $ and $ \bc_i $, so assume this is the case. Denote the point at which the line connecting $O$ and $\tilde{\bw}_i$ is tangent to the $ i^\text{th} $ cluster by $H_i$, then the vertices $ O,\bw_i^\prime,\tilde{\bw}_i $ and $O,H_i,\bc_i$ form similar triangles, and we have
		\[
			\norm{\bw_i^{\prime}-\tilde{\bw}_i}_2 = \frac{\delta_i}{\norm{\bc_i}_2}\norm{\tilde{\bw}_i}_2
			\le \frac{\delta}{c}\norm{\tilde{\bw}_i}_2.
		\]
		Now, using Claim \ref{clm:w_tilde_norm_bound} and \lemref{lem:Lipschitz_loss},
		\begin{align*}
			&\abs{N_n\p{W^\prime,\bv^\prime}\p{\bx_t}-N_n\p{\tilde{W},\tilde{\bv}}\p{\bx_t}} \\
			=& \abs{N_n\p{W^\prime,\bv^\prime}\p{\bx_t}-N_n\p{\tilde{W},\bv^\prime}\p{\bx_t}} \\
			\le& \sum_{i=1}^n\abs{v_i}\cdot\norm{\bx_t}_2\cdot\norm{\bw_i^{\prime}-\tilde{\bw}_i}_2\\
			\le& \sum_{i=1}^n B\cdot\frac{\delta}{c}\norm{\tilde{\bw}_i}_2\\
			\le& \sum_{i=1}^{n}B\cdot\frac{\delta}{c}\frac{\sigma_{\max}\p{C^\top}}{\sigma_{\min}^2\p{C^\top}}\norm{\hat{\by}}_2\\
			=& nB\cdot\frac{\delta}{c}\frac{\sigma_{\max}\p{C^\top}}{\sigma_{\min}^2\p{C^\top}}\norm{\hat{\by}}_2.
		\end{align*}
	\end{enumerate}
\end{proof}

Equipped with the above lemmas, we are now ready to prove \thmref{thm:Clustered_data}.
\begin{proof}[Proof (of \thmref{thm:Clustered_data})]
	Using \lemref{lem:non-noisy_init_lower_bound}, we have for all $ j\in\pcc k $
	\[
		\frac{\sigma_{d-1}\p{A_j^c}}{\omega_{d-1}}\ge1-\frac{1}{4d}.
	\]
	Applying the union bound to the $k\le d$ events where we initialize from $A_j$, we have that we don't initialize a single neuron from a noisy region w.p. at least $\frac{3}{4}$. For a given $j\in\pcc k$, using the union bound again, the probability of initializing from a non-noisy region in which any internal point $\bw\in\bbr^d$	satisfies $\inner{\bw,\bc_j}>0$ is at least $\frac{1}{4}$, and finally, since $v_i$ has the correct sign w.p. $\frac{1}{2}$ and is independent of where we initialize $\bw_i$ from, we are unable to predict $\bc_j$ w.p. at most $\frac{7}{8}$. Using the union bound once more in the same manner as we did in \appref{app:Rank_m_data} gives that we initialize ``properly'' w.p. at least
	\[
		1-k\p{\frac{7}{8}}^{n}\ge1-d\p{\frac{7}{8}}^{n}.
	\]
	We stress that by using \lemref{lem:key_lemma}, for the purpose of analyzing the objective value, we can ignore initializations made from noisy regions, as we may just consider the neurons that were properly initialized. By our assumption that the clusters' centers are in general position, namely that the matrix $ C $ with rows $ \bc_1,\dots,\bc_k $ satisfies $ \sigma_{\min}\p{C^\top} > 0$, we have that it is in particular of rank $ k $, and the conditions in \lemref{lem:cluster_centers_surrogate} are met, so we compute
	\begin{align*}
		L_S\p{W^\prime,\bv^\prime} &= \frac{1}{m}\sum_{t=1}^m\p{N_n\p{W^\prime,\bv^\prime}\p{\bx_t}-\hat{y}_t}^2\\
		&= \frac{1}{m}\sum_{t=1}^{m}\abs{N_n\p{W^\prime,\bv^\prime}\p{\bx_t}-N_n\p{\tilde{W},\tilde{\bv}}\p{\bx_t}+N_n\p{\tilde{W},\tilde{\bv}}\p{\bx_t}-\hat{y}_t}^2\\
		&\le \frac{1}{m}\sum_{t=1}^m\p{\abs{N_n\p{W^\prime,\bv^\prime}\p{\bx_t}-N_n\p{\tilde{W},\tilde{\bv}}\p{\bx_t}}+\abs{N_n\p{\tilde{W},\tilde{\bv}}\p{\bx_t}-\hat{y}_t}}^2\\
		&\le \frac{1}{m}\sum_{t=1}^m\p{nB\cdot\frac{\delta}{c}\frac{\sigma_{\max}\p{C^\top}}{\sigma_{\min}^2\p{C^\top}}\norm{\hat{\by}}_2+\delta n\frac{\sigma_{\max}\p{C^\top}}{\sigma_{\min}^2\p{C^\top}}\norm{\hat{\by}}_2+2\gamma\delta}^2\\
		&= \delta^2\p{\p{1+\frac{B}{c}}n\frac{\sigma_{\max}\p{C^\top}}{\sigma_{\min}^2\p{C^\top}}\norm{\hat{\by}}_2+2\gamma}^2.
	\end{align*}
	Thus we conclude that when $ \p{W,\bv} $ is initialized using a  distribution satisfying Assumption \ref{assum:init_dist}, we have
	\[
		\pr{\bas\p{W,\bv}\le \delta^2\p{\p{1+\frac{B}{c}}n\frac{\sigma_{\max}\p{C^\top}}{\sigma_{\min}^2\p{C^\top}}\norm{\hat{\by}}_2+2\gamma}^2}\ge 1-d\p{\frac{7}{8}}^n.
	\]
\end{proof}

\section{Poor Basin Structure for Single Neurons}
\label{app:eps_realizability_single_neurons}
In this appendix, we prove a hardness result for initializing ReLU single neuron nets with convex losses from a basin (as will shortly be defined for the single neuron architecture context) with a good basin value, and then provide an explicit construction for the squared loss.

For a single neuron, our objective function with respect to a ReLU activation and convex loss function $ \ell $ is
\[
	L_S\p{\bw}=\frac{1}{m}\sum_{t=1}^m\ell\p{\relu{\inner{\bw,\bx_t}},y_t},
\]
corresponding to the parameter space $ \bbr^d $.
As done in \secref{sec:Neural_nets_depth_two} for two-layer networks, we can partition the parameter space according to the signs of $\inner{\bw,\bx_t}$ on each training instance $\bx_t$. Each region in this partition corresponds to an intersection of halfspaces, in which our objective $L_S(\bw)$ can easily be shown to be convex. Thus, each such region corresponds to basin (as defined in Definition \ref{def:Basin}), and we can consider the probability of initializing in a basin with low minimal value. 

\subsection{Exponentially Many Poor Local Minima}
	\label{app:exp_many_local_minima}
	Building on the work of \citep{auer1996exponentially}, we provide a construction of a dataset which results in exponentially many poor local minima in the dimension. Moreover, we provide in subsection \appref{app:hardness_example} an explicit construction for the squared loss. 
	The results extend those of \citep{auer1996exponentially} by showing that they hold for a single neuron with the ReLU activation function (for which the technical conditions assumed in \citep{auer1996exponentially} do not apply).
	
	From an optimization point of view, having exponentially many local minima is not necessarily problematic as many of which may obtain good objective values. However, following our initialization scheme throughout this work, we modify the result obtained in \citep{auer1996exponentially} to satisfy that when the weight vector of the neuron is initialized from a distribution satisfying Assumption \ref{assum:init_dist}, then the distribution of the minimal value in the basin we initialize from is strongly concentrated around a sub-optimal value as the dimension increases. More formally, we have the following Theorem.
	
	\begin{theorem}
		\label{thm:Hardness_result_single_neuron}
		Consider a ReLU single neuron neural net, with a convex and symmetric loss function $ \ell $ satisfying $\ell\p{a,b}=0 $ if and only if $ a=b $. Then for all $ \epsilon>0 $ there exists a sample $ S $ such that $ L_S\p{\bw^*}=\epsilon $ for some $ \bw^*\in\bbr^d $, and a constant $c\in\bbr$ which depends only on $\ell$, such that the objective value over the sample $L_S$ contains $2^d$ strict local minima, and
		\[
			\pr{\bas\p{\bw}\le\frac{c}{4}}\le e^{-\frac{d}{16}}.
		\]
		Where $ \bw $ is initialized according to Assumption \ref{assum:init_dist}.
	\end{theorem}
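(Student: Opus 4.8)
The plan is to build the hard instance $S$ out of singleton examples, extending the construction of \citep{auer1996exponentially} to the ReLU so that $L_S$ splits as a sum over the $d$ coordinates, each carrying exactly two basins whose values differ by a constant depending only on $\ell$; the claim then follows from a Chernoff bound. Concretely, for each $j\in\pcc d$ put two examples in $S$, namely $(\be_j,1)$ and $(-\be_j,\eta)$ for a small $\eta\in(0,1)$ to be fixed, so $m=2d$. Since $\inner{\bw,\be_j}=w_j$ and $\inner{\bw,-\be_j}=-w_j$, we have $L_S(\bw)=\frac{1}{2d}\sum_{j=1}^{d}f_j(w_j)$ with $f_j(w)=\ell(\relu{w},1)+\ell(\relu{-w},\eta)$, i.e.\ $f_j(w)=\ell(w,1)+\ell(0,\eta)$ for $w\ge 0$ and $f_j(w)=\ell(0,1)+\ell(-w,\eta)$ for $w\le 0$. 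From the hypotheses on $\ell$ (convex, with $\ell(a,b)=0$ iff $a=b$, which forces $a\mapsto\ell(a,b)$ to be strictly decreasing on $(-\infty,b]$ and strictly increasing on $[b,\infty)$), $f_j$ has exactly two strict local minima --- at $w=1$ with value $\ell(0,\eta)$, and at $w=-\eta$ with value $\ell(0,1)$ --- while $w=0$ is a local maximum of $f_j$.

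Consequently $\min_{\bw}L_S(\bw)=\frac{1}{2d}\sum_j\min\{\ell(0,\eta),\ell(0,1)\}=\ell(0,\eta)/2$ (for $\eta<1$), attained at $\bw^*=(1,\dots,1)$; since $\eta\mapsto\ell(0,\eta)$ is continuous, strictly increasing and vanishes at $0$, for any $\epsilon\in(0,\ell(0,1)/2)$ we may choose $\eta$ with $\ell(0,\eta)=2\epsilon$, so $L_S(\bw^*)=\epsilon$. Moreover, for each sign pattern $\bs\in\set{-1,+1}^d$ the point whose $j$-th coordinate is $1$ when $s_j=+1$ and $-\eta$ when $s_j=-1$ lies in the interior of a region on which all signs $\sign(\inner{\bw,\bx_t})$ are fixed; on that region $L_S$ is separable and each $f_j$ attains a strict local minimum at the corresponding coordinate, so this point is a strict local minimum of $L_S$. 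These $2^d$ points are distinct, giving the claimed $2^d$ strict local minima.

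For the probabilistic step, recall (from the single-neuron discussion at the start of this appendix) that each such sign region is a basin, and on it $L_S$ is convex and separable, so $\bas(\bw)=\frac{1}{2d}\sum_{j=1}^{d}\bigl(\ell(0,\eta)\one_{\{w_j>0\}}+\ell(0,1)\one_{\{w_j<0\}}\bigr)$. Under Assumption~\ref{assum:init_dist} the signs $\sign(w_1),\dots,\sign(w_d)$ are i.i.d.\ uniform on $\set{-1,+1}$, so $B:=\#\{j:w_j<0\}$ is $\mathrm{Bin}(d,\tfrac12)$ and $\bas(\bw)=\tfrac12\ell(0,\eta)+\tfrac{1}{2d}B\,(\ell(0,1)-\ell(0,\eta))$. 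Set $c:=\ell(0,1)/2$, which depends only on $\ell$. Using $0<\ell(0,\eta)=2\epsilon<\ell(0,1)$, on the event $B\ge d/4$ one gets $\bas(\bw)\ge\epsilon+\tfrac{\ell(0,1)-2\epsilon}{8}>\tfrac{\ell(0,1)}{8}=\tfrac{c}{4}$, hence $\set{\bas(\bw)\le c/4}\subseteq\set{B<d/4}$ (and for $\epsilon\ge c$ the inequality $\bas(\bw)\ge\epsilon>c/4$ holds trivially). A Chernoff bound for $\mathrm{Bin}(d,\tfrac12)$ with deviation parameter $\tfrac12$ yields $\pr{B<d/4}\le e^{-d/16}$, which is exactly the asserted bound.

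The main obstacle is the design of the per-coordinate gadget: the value of the good basin must be an arbitrarily small $O(\epsilon)$ while the value of the bad basin stays a fixed positive constant, and simultaneously both minimizers must be strictly interior --- otherwise the $2^d$ sign patterns would sit degenerately at $w_j=0$ rather than being genuine strict local minima. The asymmetry of the two labels resolves this tension: the unit-label example on the good side contributes the constant $\ell(0,1)$ to the bad basin through clamping, whereas the $\eta$-label example on the bad side contributes only $\ell(0,\eta)=2\epsilon$ to the good basin, yet each label is positive and so each still pins its own minimizer strictly away from $0$. The remaining work --- deriving strict monotonicity of $a\mapsto\ell(a,b)$ from the bare hypotheses, matching constants so the exponent is exactly $d/16$, and treating $\epsilon\ge\ell(0,1)/2$ separately (e.g.\ by enlarging the good-side label), which is not the interesting regime --- is routine.
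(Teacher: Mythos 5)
Your proof is correct and follows essentially the same route as the paper's: a per-coordinate two-example gadget (yours is a mirrored variant of the paper's $\set{(\delta,\delta),(-1,1)}$ construction, putting the small quantity in the label rather than the instance) making $L_S$ separable with one cheap and one constant-cost basin per coordinate, $2^d$ strict local minima from Cartesian products, and the same Binomial/Chernoff argument over the uniformly random coordinate signs with the same constant $c=\ell(0,1)/2$. The extra details you supply (strict monotonicity of $a\mapsto\ell(a,b)$ from convexity plus $\ell(a,b)=0\iff a=b$, and the $\epsilon\ge\ell(0,1)/2$ edge case, which the paper simply assumes away) are correct and do not change the argument.
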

	
	In other words, we have exponentially many local minima, where the probability of initializing from a sub-optimal basin converges exponentially fast (in the dimension) to $1$, yet there exists a solution which obtains a value of $\epsilon$. 
	
	\begin{proof}
		Let $ L_0=\ell\p{0,1} $. Since $ \relu{0}=0\neq1=\relu{1} $ we have $ L_0>0 $. We are interested in a construction where $ \epsilon $ is small enough, therefore assume $ \epsilon<\frac{L_0}{2} $. By the continuity of $ \ell $ as a convex function, we can find $ \delta\in\p{0,1} $ small enough such that $ \ell\p{0,\delta}=2\epsilon $.
		
		Consider the sample
		\[
			S=\set{\p{x_1=\delta,y_1=\delta},\p{x_2=-1,y_2=1}}.
		\]
		We compute
		\[
			\ell\p{\relu{wx_1},y_1}=
			\begin{cases}
				2\epsilon & w\in\poc{-\infty,0}\\
				0 & w=1
			\end{cases},
		\]
		\[
			\ell\p{\relu{wx_2},y_2}=
			\begin{cases}
				0 & w=-1\\
				L_0 & w\in\pco{0,\infty}
			\end{cases}.
		\]
		Therefore the objective value over $ S $, $ L_S\p{w}=\frac{1}{2}\p{\ell\p{\relu{wx_1},y_1}+\ell\p{\relu{wx_2},y_2}} $ satisfies
		\[
			L_S\p{w}=
			\begin{cases}
				\epsilon & w=-1 \\
				\frac{L_0}{2} & w=1 \\
				> \frac{L_0}{2} & w\in\pco{0,1}\cup\p{1,\infty}
			\end{cases}.
		\]
		But since $ \ell $ is convex, we have that $ L_S $ is convex in $ \poc{-\infty,0} $ and in $ \pco{0,\infty} $, so $ L_S $ has exactly two local minima, one is $ L_S\p{-1}=\epsilon $ and the other is $ L_S\p{1}=\frac{L_0}{2} $.
		
		We now extend our sample to be $d$-dimensional in a similar manner as did the authors in \citep{auer1996exponentially} as follows: For $t=1,2$ and $j\in\pcc d$, we use the mapping $x_t\p{j}\mapsto\p{0,\dots,0,x_t,0,\dots,0}$	where the non-zero coordinate is the $j^{\text{th}}$ coordinate. It is straightforward to show that the partial derivative $\frac{\partial}{\partial w_j}L_S$ is $0$ for $x_t\p{k}$ with $j\neq k$, so the geometry of the surface of the objective function $L_S$ is independent for each coordinate. Now, every Cartesian product of local minima in the one-dimensional setting form a $d$-dimensional local minimum. Since we have exactly two local minima, a good and another bad one in each coordinate, this combines into $2^d$ local minima, where each minimum's value would be the average of the one-dimensional minima forming it. Note that the combination of all good minima forms the global minimum with value $\epsilon$. Following standard convention, we say that the data in this case is \emph{$\epsilon$-realizable} using a single neuron architecture. We stress that an important property of this initialization scheme is that the signs of the coordinates of the initialization point is uniformly distributed on the Boolean cube, as it implies that on each coordinate, independently, we have a probability $0.5$ of reaching a bad basin, hence the number of bad basins we initialize from is distributed according to a Binomial distribution $B\p{d,0.5}$. Letting $c=\frac{L_0}{2}$, we have from Chernoff`s bound that
		\[
			\pr{\bas\p{\bw}\le\frac{c}{4}}\le e^{-\frac{d}{16}},
		\]
		which concludes the proof of the theorem.
	\end{proof}
	
\subsection{An Explicit Construction With the Squared Loss}

		\begin{figure}[t]
			\centering
			\includegraphics[scale=0.2]{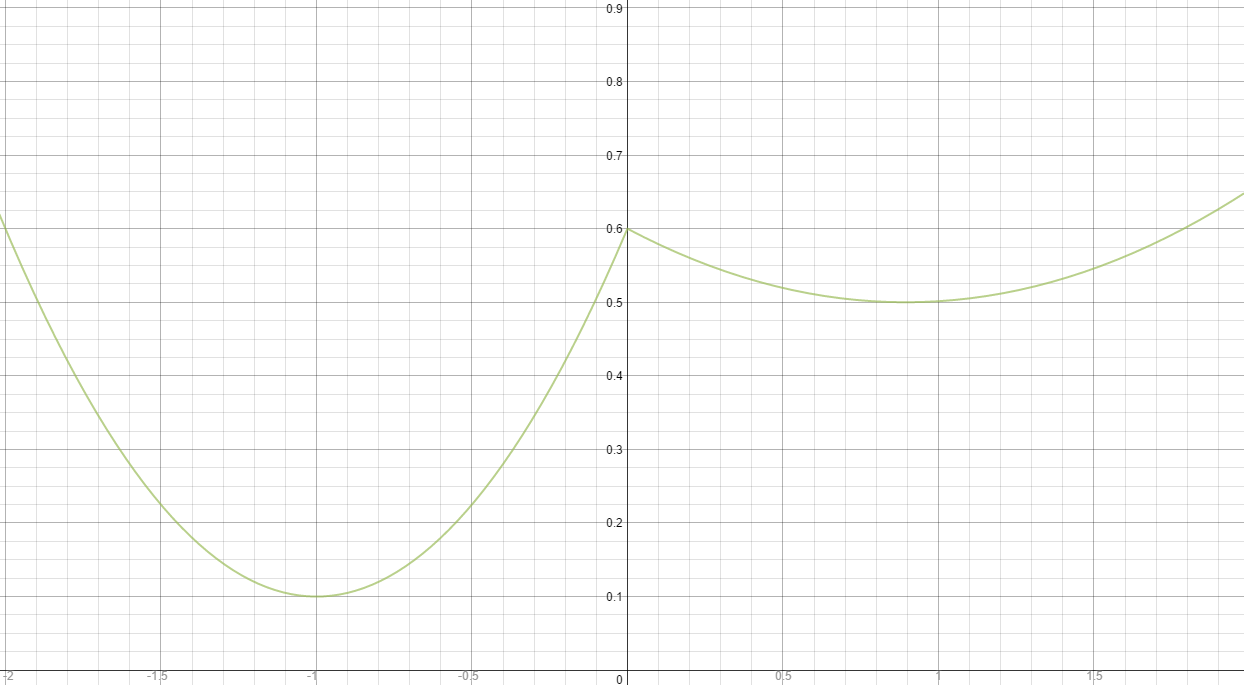}
			\caption{\footnotesize Plot of $L_S\p w$ for $\epsilon=0.1$.}
			\label{fig:ReLU_error}
			\par\end{figure}

		\begin{figure}[t]
			\centering
			\includegraphics[scale=0.5]{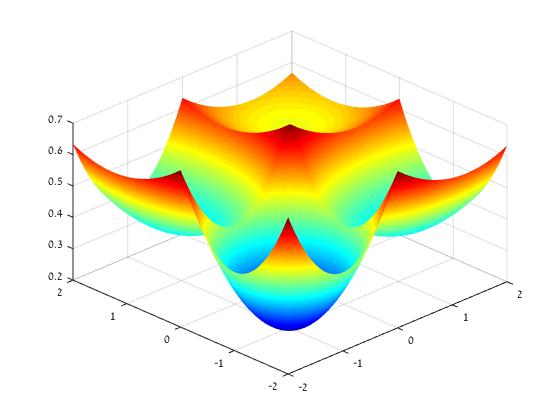}
			\caption{\footnotesize Plot of $L_S\p w$ after extending the sample to	2 dimensions. The surface contains one optimal minimum, another bad minimum and 2 average valued minima.}
			\label{fig:2D_ReLU_hardness_example}
			\par\end{figure}

	\label{app:hardness_example}
	We illustrate a specific construction of \thmref{thm:Hardness_result_single_neuron}, for ReLU paired with the squared loss.
	
	Define
	\[
		\ell\p{y,y^{\prime}}=\p{y-y^{\prime}}^2.
	\]
	Given $\epsilon>0$, consider the following sample:
	\[
		S=\set{\p{x_1=\frac{1}{2},y_1=\sqrt{2\epsilon}},\p{x_2=-1,y_2=1}}.
	\]
	Define for $i=1,2$
	\[
		\ell_i\p w=\p{\relu{wx_i}-y_i}^2,
	\]
	and denote
	\[
		L_S\p w=\frac{1}{2}\p{\ell_1\p w+\ell_2\p w}.
	\]
	Note that
	\[
		L_S\p{-1}=\epsilon,
	\]
	\[
		L_S\p{2\sqrt{2\epsilon}}=\frac{1}{2}
	\]
	are both local minima, and thus $S$ is $\epsilon$-realizable. As evident in \figref{fig:ReLU_error} and \figref{fig:2D_ReLU_hardness_example}, if we are using a distribution corresponding to Assumption \ref{assum:init_dist}, then we have a $50\%$ chance to initialize from the bad basin. \\
	Extending the sample into a $d$-dimensional one as we did in \thmref{thm:Hardness_result_single_neuron}, we have an $\epsilon$-realizable dataset $S$ with $2^d$ local minima. Furthermore, we have that 
	\[
		\pr{\bas\p{\bw}\le\frac{1}{8}}\le e^{-\frac{d}{16}}.
	\]
		
\end{document}